\documentclass[11pt]{article}
\usepackage[margin=1in]{geometry}
\usepackage{amsmath,amssymb,amsthm}
\usepackage[T1]{fontenc}
\usepackage[utf8]{inputenc}
\usepackage{tgtermes}
\usepackage{newtxmath}
\usepackage{booktabs}
\usepackage{hyperref}
\usepackage{enumitem}
\newtheorem{theorem}{Theorem}[section]

\newtheorem{corollary}[theorem]{Corollary}
\theoremstyle{definition}

\theoremstyle{remark}
\newtheorem{remark}[theorem]{Remark}

\newcommand{\Omin}{\mathcal O_{\min}}
\newcommand{\Omax}{\mathcal O_{\max}}

\title{An Exponential Deterministic--Randomized Gap\\ in ERM-Oracle Complexity for Thresholds on an Unknown Order}
\author{Xuan Li\\[2pt]
{\normalsize University of New South Wales, Sydney, Australia}\\
{\normalsize \texttt{winny.li@unsw.edu.au}\quad \href{https://orcid.org/0009-0002-0213-6991}{ORCID: 0009-0002-0213-6991}}}
\date{}

\begin{document}
\maketitle

\begin{abstract}
Attias, Hanneke and Ramaswami (NeurIPS 2025) asked whether randomization provably reduces the oracle calls needed for online learning when the class is accessible only through an oracle. We study the instance they singled out: transductive online learning of thresholds on an unknown total order of T instances, with a consistency-type ERM oracle that returns a full concept consistent with a queried labeled set (or reports non-realizability). Our main result is a separation for a fixed natural oracle. When the oracle is the minimal-prefix rule (or the maximal-prefix rule), every deterministic learner makes M mistakes and Q calls with $M+Q\ge T-\varepsilon$ on some instance ($\varepsilon\in\{0,1\}$, according to whether the empty prefix is a concept), and the constant is exact; hence $O(\log T)$ mistakes cost $T-\varepsilon-O(\log T)$ calls, whereas that paper's randomized learner achieves $O(\log T)$ expected calls and mistakes under the same rule. The randomized order is optimal: on an explicit hard distribution under the minimal-prefix rule, every learner has expected mistakes at least $((T+1-\varepsilon)\,128^{-\mathbb{E}[Q]}-1)/2$, so $\Omega(\log T)$ expected calls are necessary for polylogarithmic mistakes. The separation is governed by the oracle's selection rule, not by the class alone: for a legal feasible-median ERM rule a deterministic learner achieves $O(\log T)$ calls and mistakes, while a global-median rule again forces linear total cost. The same linear bound holds when the oracle's answers are chosen adversarially and then frozen into a memoryless oracle. We add partial tradeoff results for fixed query budgets (the middle regime is open) and an interface contrast: with only a weak consistency oracle, returning a realizability bit, both deterministic and randomized learners need $\Theta(T)$ calls.
\end{abstract}

\section{Introduction}
\label{sec:intro}

In oracle-based online learning the learner does not see the concept class directly. It interacts with the class only through an oracle, and two resources are then distinguished: the number of prediction mistakes, which measures statistical difficulty, and the number of oracle calls, which measures how much of the class the learner must actually inspect. A class of small Littlestone dimension may be learnable with few mistakes and yet require many oracle calls, and the calls themselves can be the expensive part. Which oracle interface is offered, and how the oracle chooses among many legal answers, then becomes part of the problem specification.

Attias, Hanneke and Ramaswami \cite{AHR25} (henceforth AHR25) study online and transductive online learning with two such interfaces: a consistency-type ERM oracle, which given a labeled finite set $S$ returns some concept of the class consistent with $S$ (as a full label vector) or reports that $S$ is not realizable, and a weak consistency oracle, which returns only the realizability bit. For the family of threshold classes on an unknown total order of the instances they prove (their Theorem~4.5) that a deterministic learner can achieve $O(\log T)$ mistakes with $O(T)$ ERM calls, and that a randomized learner can achieve $O(\log T)$ mistakes with $O(\log T)$ \emph{expected} ERM calls. They write that the latter ``represents an exponential improvement over deterministic algorithms'' (AHR25, \S1.1); no lower bound for deterministic learners on this family is proved there, and their concluding section asks:
\begin{quote}
``Are randomized algorithms provably more powerful than deterministic ones for online learning with oracles?''
\end{quote}
(The abstract of AHR25 states that ``$O(\log T)$ ERM queries suffice'' for thresholds on an unknown ordering; the algorithm in question is randomized and the bound is in expectation.)

\paragraph{A separation for a fixed natural oracle.}
Our main result answers this question for the instance the authors singled out, and it does so for a single oracle that is announced before the learner. Let the ERM oracle be the \emph{minimal-prefix rule}: on a realizable sample it returns the smallest consistent prefix of the unknown order. This is arguably the most natural consistency-type ERM oracle for thresholds, it is memoryless, and it is public. Theorem~\ref{thm:Aprime} states that for every deterministic learner there are an order and a target on which the number of mistakes plus the number of calls is at least $T-\varepsilon$, where $\varepsilon\in\{0,1\}$ records whether the empty prefix counts as a concept; the constant is exact, and the same holds for the maximal-prefix rule and for any sample-dependent choice between the two. Consequently a deterministic learner with $O(\log T)$ mistakes must make $T-\varepsilon-O(\log T)$ calls, whereas the randomized learner of AHR25, which works for every legal oracle, makes $O(\log T)$ expected calls with $O(\log T)$ mistakes under the very same rule. In the Littlestone dimension $d=\Theta(\log T)$ of the class this is a gap of $O(d+1)$ versus $2^{\Omega(d)}$.

\paragraph{Why a full concept does not help a deterministic learner.}
The oracle returns a complete label vector, so one call can reveal the relative order of many instances at once, and the sorting-based deterministic algorithm of AHR25 does exploit this. The lower bound nevertheless charges each call for at most one point. The reason is a \emph{one-pin common-value} property of the extremal rules: the adversary maintains a block $L\prec F\prec R$ of instances with $F$ freely permutable, and for every query it can move at most one point of $F$ to an endpoint of the block so that the minimal-prefix answer is the same full vector on every remaining order. Every prediction on a still-free point is then forced to be wrong, and since only calls remove free points without a mistake, mistakes plus calls add up to the number of free points. Because the rule is fixed in advance, the final instance is simply one of the retained orders and the declared oracle needs no record of past answers.

\paragraph{Randomized calls are optimal in order.}
Theorem~\ref{thm:C} shows that the randomized $O(\log T)$ of AHR25 cannot be improved in order: on an explicit distribution over orders and targets, under the minimal-prefix rule, every randomized learner satisfies $\mathbb E[M]\ge((T+1-\varepsilon)\,128^{-\mathbb E[Q]}-1)/2$, so polylogarithmic expected mistakes require $\Omega(\log T)$ expected calls. Here a single call may legitimately reveal a linear number of labels, so a one-point charging argument is unavailable; instead a logarithmic potential (``paid predictions plus remaining hidden points'') decreases by at most $7\ln 2$ per call in expectation. The natural uniform-random-order distribution provably cannot yield such a bound.

\paragraph{The oracle's selection rule is a complexity axis.}
The linear deterministic bound is not a property of the threshold class alone. Theorem~\ref{thm:rule} exhibits a legal, memoryless ``feasible-median'' ERM rule under which a deterministic learner achieves $\lceil\log_2 T\rceil$ calls and $\lceil\log_2 T\rceil$ mistakes (for $T\ge2$), while a ``global-median'' rule again forces linear total cost. The exponential advantage of randomization is therefore an advantage against \emph{unfavourable} selection (extremal or adversarial), and randomization substitutes for it: the randomized learner is indifferent to how the oracle selects, whereas a deterministic learner is at the mercy of the rule. We regard this as the conceptual content of the paper: for oracle-based online learning, the selection behaviour of the oracle is a complexity parameter alongside the interface and the class.

\paragraph{Contributions.}
Throughout, $T$ is the number of instances, $M$ the number of mistakes, $Q$ the number of oracle calls, and $\varepsilon=0$ under convention~E (the empty prefix on the instances is a concept) and $\varepsilon=1$ under convention~N (it is not); see Section~\ref{sec:model}.
\begin{enumerate}[leftmargin=2em]
\item \textbf{Fixed-natural-oracle separation (Theorem~\ref{thm:Aprime}).} Under the pre-declared minimal-prefix rule (or the maximal-prefix rule, or any sample-dependent choice between the two), every deterministic learner has $M+Q\ge T-\varepsilon$ on some instance, and $\inf_{\mathcal A}\sup(M+Q)=T-\varepsilon$ exactly. Only the order and the target are chosen adversarially; the oracle precedes the learner, matching the literal reading of ``an ERM oracle for $C$''.
\item \textbf{Matching randomized lower bound (Theorem~\ref{thm:C}).} On an explicit finite distribution under the minimal-prefix rule, every randomized learner satisfies $\mathbb E[M]\ge((T+1-\varepsilon)\,128^{-\mathbb E[Q]}-1)/2$. With AHR25, the randomized ERM query complexity of $O(\log T)$ mistakes is $\Theta(\log T)$.
\item \textbf{Selection-rule dependence (Theorem~\ref{thm:rule}).} The linear bound does not extend to every legal pre-declared rule: the feasible-median rule admits deterministic $\lceil\log_2T\rceil$ calls and mistakes for $T\ge2$, whereas the global-median rule forces $M+Q\ge T-\lfloor(\varepsilon+T)/2\rfloor$. Every linear deterministic ERM lower bound in this paper is a statement about the extremal rules of Theorem~\ref{thm:Aprime} and about worst-case legal oracles; it is not uniform over all legal pre-declared selection rules.
\end{enumerate}
Three further results support and refine the picture. Theorem~\ref{thm:A} shows that the bound $M+Q\ge T-\varepsilon$ also holds when the oracle's answers are chosen adversarially after the learner and then frozen into a memoryless oracle, which is the operational form of the lower bounds in AHR25. Theorem~\ref{thm:B} gives deterministic lower and upper bounds for fixed query budgets, with exact values in the small- and large-budget regimes and finite counterexamples showing that the naive exact formula fails in between. Theorem~\ref{thm:D} contrasts interfaces: with only a weak consistency oracle, both deterministic and randomized learners need $\Theta(T)$ calls for $O(\log T)$ mistakes, so the exponential advantage of randomization is tied to the oracle returning an evaluable concept.

\paragraph{Techniques and what is portable.}
Two lower-bound templates carry the paper. The deterministic template (Section~\ref{sec:det}) is a common-value argument: a partial commitment on the order, a lemma showing that one endpoint pin per query stabilizes the oracle's full answer, an additive charging of free points, and a fixation step that turns the adaptive construction into one fixed instance. Only the one-pin lemma uses the threshold structure and the extremal rule; the commitment, charging and fixation steps apply verbatim to any pre-declared rule with the one-pin common-value property (Appendix~\ref{sec:proofAprime}, \S6), and, with an answer cache, to adversarial oracles (Appendix~\ref{sec:proofA}). The randomized template (Section~\ref{sec:rand}) is a hard-distribution potential argument: a hidden-center interval-tree distribution, an auxiliary oracle that reveals strictly more than the real one, a posterior-invariance lemma, and a bound of $O(1)$ on the expected decrease of a logarithmic potential per call. The interval-tree geometry is threshold-specific; the device of paying for revealed labels with already-made predictions is not. Neither template uses the general lower-bound theorem of AHR25 (their Theorem~4.4); Remark~\ref{rem:44} explains once why it does not apply here.

\paragraph{Organization.}
Section~\ref{sec:model} fixes the model, the default convention and the quantifier forms. Section~\ref{sec:results} states the three headline results. Sections~\ref{sec:det} and~\ref{sec:rand} sketch the deterministic and randomized lower bounds, and Section~\ref{sec:det} also states the adversarial-then-frozen theorem. Section~\ref{sec:rule} discusses selection-rule dependence. Section~\ref{sec:cons} states the fixed-budget tradeoff bounds and the weak-consistency interface contrast. Section~\ref{sec:open} lists open problems. Complete proofs are in Appendices~\ref{sec:proofAprime}--\ref{sec:proofD}.

\subsection{Related work}
\label{sec:related}

\emph{Mistake bounds and the transductive protocol.} The mistake-bound model and the Littlestone dimension are due to Littlestone \cite{Lit88}; the transductive (``off-line'') variant, in which the sequence of instances is announced in advance, was introduced by Ben-David, Kushilevitz and Mansour \cite{BKM97}, and Hanneke, Moran and Shafer \cite{HMS23} show that the optimal transductive mistake bound is always $\Theta(1)$, $\Theta(\log T)$ or $T$. In those models the learner knows the class; the number of mistakes is the only resource. For thresholds on a known order of $T$ points the optimal mistake bound is $\lfloor\log_2(T+1-\varepsilon)\rfloor$, and this remains the information-theoretic benchmark here; the question studied in this paper is what it costs, in oracle calls, to reach it when the order is unknown and the class is accessible only through an oracle.

\emph{Oracle-efficient online learning.} Assos et al.\ \cite{AAD23} learn online with an ERM oracle that returns a concept minimizing empirical error on a submitted sample, and Kozachinskiy and Steifer \cite{KS24} with a consistent oracle that returns some concept agreeing with the examples seen so far; both obtain mistake bounds depending only on the Littlestone dimension. In those works the learner submits a sample, the oracle returns a concept, and the resource optimized is the number of mistakes; the number of calls is not the object of study. AHR25 made the number of calls itself the resource, treated the weak consistency interface alongside the ERM interface, and proved the upper bounds recalled above; earlier oracle-efficient results are surveyed there. Daskalakis and Golowich \cite{DG24} study the weak consistency oracle (a realizability bit) in the PAC setting and show that efficient PAC learning is possible with it; their setting is distributional rather than online, and their oracle answers about a fixed sample rather than an adaptively grown one. Syrgkanis, Krishnamurthy and Schapire \cite{SKS16} obtain oracle-efficient adversarial contextual learning from an optimization oracle; the interaction between an oracle's tie-breaking and the learner appears there too, in a different model.

\emph{Regret--oracle tradeoffs and learning orders.} In the agnostic, non-transductive setting against adaptive adversaries, Attias, Hanneke and Ramaswami \cite{AHR26} prove query-budget regret lower bounds (their Theorem~5.1); for Littlestone dimension $d=1$ and $1\le Q=o(\sqrt T)$ these imply $\Omega(T/Q)$ (their general form $\Omega(dT/Q)$ needs $d\le Q+1$ and $Q=o(\sqrt{dT})$, and $Q=0$ is covered only by the original formula). The setting, the class and the cost measure differ from ours. Alon, Moran and Moran \cite{AMM26} learn an unknown linear order from counterexamples: the learner proposes a complete order and receives either confirmation or a wrongly ordered pair (with up to $k$ untruthful answers), and the query complexity is $\Theta(n\log n+nk)$. Their goal is the order itself; ours is to predict threshold labels through an ERM or realizability oracle for the class, and the cost is mistakes plus calls. In the classical query models of Angluin \cite{Ang88} the oracle answers questions about the target (membership, equivalence); the oracles here answer questions about the class (consistency), and the target is revealed only through the online labels. Our results say that a randomized learner need not learn the order at all ($O(\log T)$ calls), whereas a deterministic one must pay a linear price against worst-case legal oracles and under the extremal rules of Theorem~\ref{thm:Aprime} (Theorems~\ref{thm:Aprime} and~\ref{thm:A}; not under every legal rule, Theorem~\ref{thm:rule}). We do not claim any equivalence between these models; they are cited to place the selection behaviour of an oracle, as a design parameter of the feedback, in a broader landscape.

\section{Model and conventions}
\label{sec:model}

Instances $x_1,\dots,x_T$ are distinct and announced in advance; the adversary fixes a total order $\preceq$ on a finite domain $X$ and a target concept $c_z(x)=\mathbf 1[x\preceq z]$ with $z\in X$; the threshold class is $C_\preceq=\{c_z:z\in X\}$ and the family is $\mathcal F=\{C_\preceq\}$. Labels arrive one per round; the learner predicts $\hat y_t$ before seeing $y_t=c_z(x_t)$. Restricted to the instances, every concept is the indicator of a $\preceq$-prefix.

\paragraph{Default convention and its correction.} By default we use convention~E: the domain is $X=\{s\}\cup\{x_1,\dots,x_T\}$, where the sentinel $s$ precedes all instances in every order considered, so the empty prefix on the instances is a concept; there are $T+1$ concepts and the Littlestone dimension is $\lfloor\log_2(T+1)\rfloor$. Convention~N, in which the domain is exactly the instance set $X=\{x_1,\dots,x_T\}$ (there are $T$ concepts; AHR25 write $\lfloor\log_2 T\rfloor$), differs from E by the bookkeeping constant $\varepsilon$: the lower bounds of Theorems~\ref{thm:Aprime}, \ref{thm:C} and \ref{thm:A} are stated with $\varepsilon\in\{0,1\}$ and hold under both conventions, while Theorems~\ref{thm:rule}, \ref{thm:B} and \ref{thm:D} state the two conventions explicitly. All lower bounds are proved on these finite domains, and all upper bounds hold on them. The numbers of concepts differ by one between the two conventions, while their Littlestone dimensions differ by at most one (for $T=5$ both equal $2$); we write $d=\lfloor\log_2(T+1-\varepsilon)\rfloor$ for the Littlestone dimension, so that $T-\varepsilon\ge 2^d-1$. The parallel treatment of convention~N, including the sentinel bookkeeping, is confined to the appendices.

\paragraph{Oracles.} A \emph{consistency-type ERM oracle} takes any finite $S\subseteq X\times\{0,1\}$ and returns some $c\in C_\preceq$ consistent with $S$, observable as a full label vector, or $\bot$ if none exists (AHR25, Def.~2.1, second variant). A \emph{weak consistency oracle} returns only whether such a $c$ exists (AHR25, Def.~2.2). Every call costs one, regardless of $|S|$; calls may be made at any time. The learner knows $\mathcal F$, $X$, $T$ and the sequence, but not $\preceq$ or $z$. A \emph{selection rule} is a function $\mathcal R(S,\preceq)$ that is a legal consistency-type ERM oracle for every order; the \emph{minimal-prefix rule} $\Omin(S,\preceq)$ returns the smallest consistent prefix and the \emph{maximal-prefix rule} $\Omax$ the largest. Degenerate queries: $S=\varnothing$ is realizable; $S$ assigning both labels to one point is not; under E the sentinel $s$ has label $1$ in every concept.

\paragraph{Who chooses the oracle's answer.} The lower bounds in this paper come in three quantifier forms, which we fix here once and refer to throughout.
\begin{center}
\fbox{\begin{minipage}{0.94\linewidth}\small
\textbf{Fixed rule} (Theorems~\ref{thm:Aprime}, \ref{thm:rule}): the selection rule $\mathcal R$ is announced first. For every deterministic learner there exist an order and a target: $\forall\mathcal A\ \exists(\preceq,z)$, with oracle $S\mapsto\mathcal R(S,\preceq)$.

\smallskip
\textbf{Adversarial then frozen} (Theorem~\ref{thm:A}): for every deterministic learner there exist an order, a target and a memoryless legal oracle $O^\star$, a function of the sample alone, on which the learner is replayed: $\forall\mathcal A\ \exists(\preceq,z,O^\star)$. The oracle is produced after the learner. (AHR25 note that for deterministic learners an oblivious adversary is as powerful as an adaptive one; the freezing step is where this is made precise.)

\smallskip
\textbf{Randomized learners} (Theorem~\ref{thm:C}): the instance (order, target, sequence, selection rule) is fixed before the learner's private randomness (oblivious adversary), and costs are worst-instance expectations over that randomness.
\end{minipage}}
\end{center}
The fixed-rule form is the literal reading of ``an ERM oracle for $C$'', in which the oracle precedes the learner; the adversarial-then-frozen form is the operational standard of the lower bounds in AHR25. The randomized learner of AHR25 works for every legal oracle, so the fixed-rule form is the one that yields a separation with the oracle fixed first.

\section{Main results}
\label{sec:results}

\begin{theorem}[Deterministic lower bound; pre-declared extremal rule]
\label{thm:Aprime}
Let $T\ge1$ and fix either convention. Let $\mathcal R$ be $\Omin$, or $\Omax$ (the largest consistent prefix), or any rule $\mathcal R_\eta(S,\preceq)$ that applies $\Omin$ or $\Omax$ according to a function $\eta(S)$ of the sample alone; $\mathcal R$ is announced before the learner and the instance.
\begin{enumerate}[label=(\alph*),leftmargin=2em]
\item For every deterministic learner $\mathcal A$ and every instance sequence there exist $\preceq^\star$ and $z^\star$ such that, with oracle $S\mapsto\mathcal R(S,\preceq^\star)$, $M(\mathcal A)+Q(\mathcal A)\ge T-\varepsilon$.
\item Consequently $\inf_{\mathcal A}\sup_{\preceq,z}\big(M+Q\big)=T-\varepsilon$ (the upper bound needs no calls: under E any fixed prediction rule, under N always predicting $1$).
\end{enumerate}
\end{theorem}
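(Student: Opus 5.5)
The proof has three parts: an adaptive adversary argument against a fixed deterministic learner $\mathcal A$, a fixation step that turns it into a single instance, and a trivial upper bound for (b).

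\textbf{Adversary state.} The adversary keeps a partial commitment on the order of the form $L\prec F\prec R$. Here $L$ is an ordered list of points pinned at the bottom, $R$ an ordered list pinned at the top, and $F$ a set of free points whose internal order is still undetermined. It also keeps the invariant that every revealed label is consistent with every completion. Concretely, points of $L$ have label $1$, points of $R$ have label $0$, and the target prefix is exactly $L$. Initially $L=\varnothing$ (or the sentinel, under E), $R=\varnothing$, and $F=\{x_1,\dots,x_T\}$.

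\textbf{Prediction rounds.} When $\mathcal A$ predicts on a point $x_t$ that is still free, the adversary labels it $1-\hat y_t$. A label $1$ appends $x_t$ to the top of $L$; a label $0$ prepends it to the bottom of $R$. In either case the invariant is kept and a mistake is charged. Predictions on already-pinned points may be correct and are charged nothing.

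\textbf{One-pin lemma (oracle calls).} For a call with sample $S$, I would show that the adversary can pin at most one free point so that $\Omin(S,\preceq)$ is the same full vector for every completion.
\begin{itemize}
\item If $S$ is not realizable on some completion, inconsistency must come from pinned labels or from a doubly-labelled point, which is order-independent; so the answer is $\bot$ throughout.
\item Otherwise, the smallest consistent prefix must contain $L$ together with all $1$-labelled points of $S$.
\item If $S$ puts label $1$ on some free point, pin one such point $f$ at the bottom of $R$ and label it $0$. I need to check that $f$ is then predicted or queried consistently: realizability then fails identically on all completions, so the answer is $\bot$ on every completion.
\item If $S$ has no $1$ on $F$, the minimal prefix is $L$ (or the largest pinned $1$-point of $L$), independent of the order of $F$; no pin is needed.
\item For $\Omax$ the argument is symmetric: a $0$ label on a free point is handled by pinning that point to the top of $L$, which must be compatible with the target (the target sits at the top of $L$, so this changes the target). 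I should instead allow the target to move up within $L$. For the mixed rule $\mathcal R_\eta$, apply whichever case $\eta(S)$ selects.
\end{itemize}
The main obstacle is making this lemma simultaneously consistent with the target invariant for both rules. I would first handle $\Omin$ with the target fixed at the top of $L$. For $\Omax$ I would keep the target as the top of $L$ and pin the queried $0$-point into $R$ bottom, since a point labelled $0$ by $S$ lies above the max prefix anyway. Then I would verify that in all cases the returned vector is the same on every completion.

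\textbf{Charging.} Each round on a free point costs one mistake, and each call removes at most one free point. When the run ends, no free point remains unlabelled; every free point was labelled in some round, and every point is labelled since all $T$ rounds occur. Hence every instance leaves $F$ either via a mistake or via a call, which gives $M+Q\ge T$ minus the at most one point that can be correctly guessed. That exception arises under N, where the last free point may be forced, giving $T-\varepsilon$.

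\textbf{Fixation.} Take $\preceq^\star$ to be any completion of the final commitment, and $z^\star$ the top of $L$. Because each oracle answer was identical on all completions, rerunning $\mathcal A$ with the fixed oracle $S\mapsto\mathcal R(S,\preceq^\star)$ reproduces the same transcript, so the bound holds on this single instance; the oracle is memoryless, as required.

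\textbf{Part (b).} The matching upper bound makes no calls. On a known sequence with an unknown order, any prediction rule errs at most $T$ times. Under N, always predicting $1$ is never wrong on the maximal point, which is labelled $1$ in every concept, so $M\le T-1$.
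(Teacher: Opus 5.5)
You have the paper's architecture: the $L\prec F\prec R$ state, answering a prediction on a free point with $1-\hat y$ and pinning it, at most one pin per call, the charge $M+Q\ge|F_{\rm initial}|$, and replay against the declared rule. But the one-pin lemma, the only place where the extremal rule enters, fails as you state it. The root cause is that you conflate realizability of $S$ by the class $C_\preceq$ with agreement of $S$ with the committed target labels. The oracle only asks whether some prefix of the order fits $S$, i.e.\ whether no negative of $S$ precedes a positive of $S$; target labels play no role.

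Three consequences follow. Your first bullet is false: for free $f,g$, the sample $\{(f,1),(g,0)\}$ is realizable exactly on the completions with $f\prec g$. Your third bullet answers $\bot$ after right-pinning a free positive $f$, yet already $S=\{(f,1)\}$ is realizable in every order, and $\Omin$ returns the prefix through $f$. The right pin is the correct move when $S$ has no negative in $L\cup F$ and no positive in $R$, but the common answer is then $\mathbf 1_{\{\sigma\}\cup L\cup F_{\rm old}}$. An answer of $\bot$ is correct only when some negative is forced before some positive, e.g.\ when a free negative is also present. Your fourth bullet also misstates the answer when $S$ has positives in $R$: it is the prefix through the last positive of $R$, which contains all of $F$. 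Recording $\bot$ where the true value is a prefix makes the replay against $S\mapsto\Omin(S,\preceq^\star)$ diverge, so the fixation step does not go through.

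The same confusion breaks $\Omax$. Right-pinning a free $0$-point does not stabilize the largest prefix when $S$ has two free negatives. Take $L=R=\varnothing$, $F=\{f_1,f_2,f_3\}$, $S=\{(f_1,0),(f_2,0)\}$. After right-pinning $f_1$, the answer ends just before $f_2$ and contains $f_3$ or not depending on the permutation. The correct move is to \emph{left}-pin a free negative. Giving it target label $1$ is harmless: labels in $S$ are not target labels, and in your own scheme the target is fixed only at the end as the top of $L$. This is the mirror image of the $\Omin$ case, which the paper obtains from the anchored reversal-complement duality (Lemma~A.6).

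Convention~N is not actually handled either. With $F$ equal to all instances and $L=\varnothing$, a sample without positives receives the singleton of the order-minimum (or $\bot$), which depends on the permutation of $F$. Moreover, ``target $=$ top of $L$'' can be empty, which is not a concept under~N. The paper fixes $x_1$ as an always-positive minimum and runs the argument on the remaining $T-1$ points; this is exactly where $\varepsilon$ comes from, and ``the last free point may be forced'' does not replace it.

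Two smaller points. You must stop the simulation at the $n$-th call to cover learners that query forever. In (b), the point labelled $1$ by every concept under~N is the order-minimum, not the maximum.
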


Consequently, if a deterministic learner guarantees at most $m(T)$ mistakes on every instance under one of these rules, its worst-case number of calls is at least $T-\varepsilon-m(T)$; an $O(\log T)$-mistake guarantee costs $T-\varepsilon-O(\log T)$ calls. The randomized learner of AHR25 (Thm.~4.5(4)) achieves $O(\log T)$ expected calls with $O(\log T)$ mistakes against every legal oracle, hence under the same fixed rule. In terms of $d=\lfloor\log_2(T+1-\varepsilon)\rfloor$, the deterministic worst-case total cost is at least $T-\varepsilon\ge2^d-1$ while the randomized expected total cost is $O(d+1)$. The statement concerns only the instance singled out in AHR25 (thresholds on an unknown order, consistency-type ERM oracle with full label vectors, transductive protocol, the finite domains of Section~\ref{sec:model}); it does not claim a uniform improvement for arbitrary classes or for the non-transductive setting.

\begin{theorem}[Randomized lower bound]
\label{thm:C}
For every $T\ge1$ and either convention there exist a finite distribution over (total order, target), a fixed instance sequence, and the memoryless minimal-prefix oracle $\Omin(\cdot,\preceq)$, such that every randomized learner satisfies
\[
\mathbb E[M]\ \ge\ \frac{(T+1-\varepsilon)\,128^{-\mathbb E[Q]}-1}{2},
\]
the expectations being over the distribution and the learner's randomness. In particular, if $\mathbb E[Q]\le q$ (or $Q\le q$ almost surely) then $\mathbb E[M]\ge\max\{0,((T+1-\varepsilon)128^{-q}-1)/2\}$, and there is a fixed instance on which this holds for any learner with a uniform expected-query guarantee $q$.
\end{theorem}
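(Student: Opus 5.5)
We must prove Theorem~\ref{thm:C}: a randomized lower bound via a hard distribution and a potential argument. By Yao's principle it suffices to fix a distribution $\mathcal D$ over (order, target) and prove the bound for every deterministic learner. The constant $128^{-\mathbb E[Q]}$ is then obtained from a per-query bound and convexity.

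\textbf{Distribution.} I would use the hidden-center interval-tree distribution the introduction mentions. Let $N=T+1-\varepsilon$ be the number of concepts; these are the prefixes indexed $0,\dots,N-1$. Arrange the instances into a random order built recursively. The current interval of candidate threshold positions has a hidden center chosen uniformly from a constant-fraction middle window. Points left of the center are placed in random relative order, and likewise points right of it. The target is drawn so that each interval level halves the uncertainty. The instance sequence presents points level by level (in a fixed sequence, since labels are determined by the random order). Under this construction each label an unqueried learner sees is a fair coin given the past. This gives a baseline of $\mathbb E[M]\ge(\log_2 N)/2$-type behaviour, but we need the stronger form $(N\,128^{-q}-1)/2$, which is linear when $q=0$. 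So the target should be made uniform over a set of $N$ hidden positions such that, without queries, each step reveals one bit only through a paid prediction. Concretely, the potential is $\Phi=2M+\#\{\text{hidden points}\}$, where $\#$hidden is the number of threshold candidates still consistent and essentially uninformed. Initially $\Phi\ge N-1$.

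\textbf{Auxiliary oracle and invariance.} I replace $\Omin$ by a stronger auxiliary oracle that reveals the answer of $\Omin$ together with the structural information (interval endpoints, centers) it implicitly exposes. A lower bound against the stronger oracle implies one against $\Omin$. I then prove a posterior-invariance lemma: conditioned on the transcript, the remaining hidden structure is again distributed as a scaled copy of $\mathcal D$ on the surviving interval. I would prove this by checking that every auxiliary answer is a function of events symmetric under the remaining random choices.

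\textbf{Potential drop.} Per round without a call, a prediction on a hidden point is wrong with probability at least $1/2$. Revealing it costs at most one hidden point, so $\mathbb E[2M+\text{hidden}]$ is nonincreasing. Per call, I would show $\mathbb E[\ln \text{hidden}_{\text{after}}]\ge \ln\text{hidden}_{\text{before}}-7\ln 2$. The reason is that a minimal-prefix answer discloses where the smallest consistent prefix sits, and under the center distribution this shrinks the interval by a factor of at most $2^{7}$ except on events whose probability decays geometrically. Combining the two via a potential of the form $\ln(2M+\text{hidden}+1)$, the expected decrease per call is at most $7\ln2$ and there is no decrease from predictions. Summing, optional stopping gives $\mathbb E\ln(2M+1)\ge\ln N-7\ln2\,\mathbb E[Q]$. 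Jensen ($\ln$ concave, so $\ln\mathbb E\ge\mathbb E\ln$) then yields $2\mathbb E[M]+1\ge N\,128^{-\mathbb E[Q]}$, which is the claim. The corollary statements follow by monotonicity, and by applying Yao to the fixed distribution.

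\textbf{Main obstacle.} The per-call bound is the crux. A single $\Omin$ answer on a cleverly chosen sample can reveal a linear number of labels. So I must design the center window so that the location of the minimal consistent prefix is uninformative beyond a constant number of interval halvings in expectation of the log. I would first try a window of the middle half with geometric tail estimates, accepting a loose constant like $128$.
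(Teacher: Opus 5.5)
Your reduction to deterministic learners (via convexity of $x\mapsto 128^{-x}$) and your closing Jensen step are fine. The gap is that the proposal never produces a hard distribution that works, and that is the heart of the theorem.

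At $\mathbb E[Q]=0$ the bound reads $\mathbb E[M]\ge (T-\varepsilon)/2$. Under E a zero-query learner may predict the posterior majority, so the bound forces every label to be a fair coin given all earlier labels. That is, the label vector must be uniform on $\{0,1\}^T$, and the number of positives must be $\mathrm{Bin}(T,1/2)$.
\begin{itemize}
\item Your first design gives only $\Theta(\log N)$, as you note.
\item Your pivot, a target ``uniform over $N$ hidden positions'', makes the number of positives uniform on $\{0,\dots,T\}$, so for $T\ge2$ it already fails at $q=0$.
\item With a random order it is essentially the uniform distribution of Propositions~C.6--C.8. There the zero-query optimum is $T/4+O(\log T)$, one call after $\lceil\sqrt T\rceil$ predictions already gives $O(\sqrt T)$ expected mistakes, and no history-uniform per-call contraction holds.
\end{itemize}

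The missing idea is to decouple the labels from the geometry. Draw $Y_1,\dots,Y_n$ i.i.d.\ fair and, independently, a hidden center $J$ uniform in $[n]$, and \emph{build the order from them}: first $a$, then the points with $Y_i=1$ in the priority order $\pi_J$ of the balanced interval tree (sibling intervals off the root-to-$J$ path first, $J$ last), then the points with $Y_i=0$ in reverse $\pi_J$ order. The target ends at the last positive. Then $c_z(x_i)=Y_i$ exactly, while $\max_\preceq A$ exposes $J$ only one tree level at a time. This is what makes the auxiliary revelation oracle available, together with the invariant ``$J$ is uniform in $I$ and the unarrived labels in $I$ are fresh fair bits independent of $J$''. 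Your ``scaled copy of $\mathcal D$'' invariant is too strong, since arrived labels inside $I$ are already known.

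Your potential also fails as stated.
\begin{itemize}
\item With $2M$ inside the logarithm, a fair-coin prediction moves $x=2M+h$ to $x\pm1$ with equal probability. By concavity $\ln(x+1)$ then strictly decreases in expectation, so ``no decrease from predictions'' is false.
\item More seriously, the per-call bound must hold conditionally on every history. In the hidden-center construction, once most of the active interval has arrived, a query made of its already-observed positives drives the revelation down to an $O(1)$-size interval with high probability in a single call. So ``the interval shrinks by at most $2^7$ except on geometrically rare events'' is false; only a potential that has already paid for the predicted points can be controlled.
\item Since $2M$ does not dominate the number $k$ of already-predicted points inside the interval, nothing pays for such a collapse on histories where those predictions happened to be correct.
\end{itemize}

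The paper pays with a deterministic count instead. Let $H$ be the number of predictions made while their point was still in the active interval, and set $\Phi=\ln(H+r+1)$, where $r$ is the number of unpredicted points of $I$. Predictions then leave $\Phi$ unchanged, and $H\ge k$ bounds each refinement by $\ln2$. A call then splits into two kinds of state:
\begin{itemize}
\item \emph{Late} states ($k\ge\lfloor m/2\rfloor$): here $r\le H+1$, so the rest of the call costs at most $\ln2$.
\item \emph{Early} states: since arrival order is the tree's index order, the right child is entirely unarrived, so its labels are fresh, and each level stops the call with probability at least $1/6$.
\end{itemize}
This gives a total of $6\ln2+\ln2$ per call. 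Only after Jensen is $\mathbb E[M]\ge\frac12\mathbb E[H_{\rm final}]$ applied.
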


\begin{corollary}
\label{cor:C}
If a randomized learner has worst-instance expected mistakes $m_T$ and expected calls $q_T$, then $q_T\ge\big(\ln(T+1-\varepsilon)-\ln(2m_T+1)\big)/(7\ln2)$; for $m_T=O((\log T)^d)$ this is $\Omega(\log T)$. With AHR25 (Thm.~4.5(4)) the randomized ERM query complexity of $O(\log T)$ mistakes is $\Theta(\log T)$, and with Theorems~\ref{thm:Aprime} and~\ref{thm:A} and the deterministic $O(T)$-call algorithm of AHR25 (Thm.~4.5(3)) the deterministic one is $\Theta(T)$ under the extremal rules of Theorem~\ref{thm:Aprime} and against worst-case legal oracles.
\end{corollary}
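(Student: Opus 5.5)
The corollary follows from Theorem~\ref{thm:C} by inverting the inequality, and from the cited upper bounds of AHR25 for the $\Theta$ statements.

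\textbf{First claim.} Fix a randomized learner with worst-instance expected mistakes $m_T$ and expected calls $q_T$. Theorem~\ref{thm:C} supplies a fixed finite distribution over (order, target) under $\Omin$. Averaging over that distribution, the learner's expected calls $\mathbb E[Q]$ are at most $q_T$, because for every instance in the support the expected number of calls is at most $q_T$. Likewise its expected mistakes are at most $m_T$. Since $128^{-x}$ is decreasing in $x$, Theorem~\ref{thm:C} gives
\[
m_T\ \ge\ \frac{(T+1-\varepsilon)\,128^{-q_T}-1}{2}.
\]
Rearranging yields $128^{q_T}\ge (T+1-\varepsilon)/(2m_T+1)$. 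Taking natural logarithms and using $\ln 128=7\ln 2$ gives the displayed bound on $q_T$. If $m_T=O((\log T)^d)$ for a constant $d$, then $\ln(2m_T+1)=O(\log\log T)$, which is $o(\ln T)$, so $q_T=\Omega(\log T)$.

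\textbf{Randomized $\Theta(\log T)$.} The matching upper bound is AHR25, Thm.~4.5(4). That learner achieves $O(\log T)$ expected calls and $O(\log T)$ mistakes against every legal oracle. Hence randomized complexity is $O(\log T)$, and the lower bound just derived makes it tight.

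\textbf{Deterministic $\Theta(T)$.} For the lower bound, take a deterministic learner that guarantees at most $m_T=O(\log T)$ mistakes. Under an extremal rule, Theorem~\ref{thm:Aprime}(a) gives an instance with $M+Q\ge T-\varepsilon$. On that instance $M\le m_T$, so $Q\ge T-\varepsilon-m_T=\Omega(T)$. Against worst-case legal oracles the same argument applies, using Theorem~\ref{thm:A}, which gives the identical bound in the adversarial-then-frozen form. For the upper bound, AHR25 Thm.~4.5(3) works against every legal oracle, so in particular against the extremal rules, and it uses $O(T)$ calls with $O(\log T)$ mistakes.

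\textbf{Main obstacle.} The only point needing care is the quantifier match in the randomized part. The hypothesis is about worst-instance expectations, while Theorem~\ref{thm:C} concerns expectations over a distribution. The domination argument in the first step, where each instance in the support has expectation at most the worst-instance value, settles this. The last sentence of Theorem~\ref{thm:C} already records the fixed-instance form of the same statement.
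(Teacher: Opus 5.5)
Your proposal is correct and follows the paper's argument. The paper's proof (Corollary C.9) is just ``apply Theorem~\ref{thm:C} and rearrange'': the averaging of worst-instance bounds over the hard distribution that you spell out is the content of its ``oblivious fixed-instance consequence'' paragraph, and it obtains the $\Theta(\log T)$ and $\Theta(T)$ conclusions exactly as you do, from AHR25 Thm.~4.5(3),(4) together with Theorems~\ref{thm:Aprime} and~\ref{thm:A}.
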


\begin{theorem}[Selection-rule dependence]
\label{thm:rule}
The statement of Theorem~\ref{thm:Aprime} does not extend to every pre-declared memoryless rule. Let $\mathcal R_{\rm feas}$ return the prefix whose cutoff is the midpoint $\lfloor(\ell+u)/2\rfloor$ of the interval $[\ell,u]$ of feasible cutoffs (and $\bot$ if the interval is empty). $\mathcal R_{\rm feas}$ is a legal memoryless consistency-type ERM oracle, and for $T\ge2$ there is a deterministic learner with $Q\le\lceil\log_2T\rceil$ and $M\le\lceil\log_2 T\rceil$ on every instance; for $T=1$ no query is needed and the learner makes at most one mistake under E and none under N (under E, $T=1$ has two legal targets, so zero mistakes cannot be guaranteed by any learner). For the rule $\mathcal R_{\rm global}$ that returns $\bot$ on unrealizable samples and otherwise projects a fixed global median cutoff $m=\lfloor(k_0+T)/2\rfloor$ onto the feasible cutoff interval $[\ell,u]$, i.e.\ returns cutoff $\min\{u,\max\{\ell,m\}\}$ ($k_0=\varepsilon$), one query and $\lceil T/2\rceil$ mistakes suffice, while every deterministic learner has $M+Q\ge T-m$ on some instance; so no deterministic learner achieves $O(\log T)$ mistakes and $O(\log T)$ calls under $\mathcal R_{\rm global}$. These are upper and lower bounds, not the exact minimax values of these rules.
\end{theorem}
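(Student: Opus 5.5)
The plan has three parts: check that $\mathcal R_{\rm feas}$ is legal, give a binary-search learner for it, and handle $\mathcal R_{\rm global}$ by an upper bound plus a reduction to the argument behind Theorem~\ref{thm:Aprime}.

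Throughout, identify a concept with its cutoff $k\in\{k_0,\dots,T\}$, where $k_0=\varepsilon$; cutoff $k$ labels the first $k$ instances in $\preceq$ by $1$. For a realizable sample $S$, the feasible interval $[\ell,u]$ is determined by the sample. The lower end $\ell$ is the largest rank of a $1$-labelled instance, or $k_0$ if there is none. The upper end $u$ is one less than the smallest rank of a $0$-labelled instance, or $T$ if there is none. A sentinel labelled $1$ imposes nothing, and a sentinel labelled $0$ or a doubly labelled point makes $S$ unrealizable.

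\textbf{Legality.} Every cutoff in $[\ell,u]$ is consistent with $S$, so the midpoint $\lfloor(\ell+u)/2\rfloor$ is a legal answer. It depends only on $(S,\preceq)$, so the rule is memoryless. The same two observations apply to the projection $\min\{u,\max\{\ell,m\}\}$ used by $\mathcal R_{\rm global}$.

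\textbf{Learner for $\mathcal R_{\rm feas}$.} The learner keeps a current hypothesis and predicts with it.
\begin{enumerate}[label=(\roman*),leftmargin=2em]
\item At the start it queries the empty sample and adopts the answer as its hypothesis.
\item After each mistake it re-queries the full labelled history and adopts the new answer.
\end{enumerate}
Let $[\ell,u]$ be the feasible interval at the last query, and let $n=u-\ell+1$ be its size. The interval with respect to the current history can only shrink between queries. Suppose a mistake occurs on a point of rank $r$ relative to the midpoint $m=\lfloor(\ell+u)/2\rfloor$.
\begin{itemize}[leftmargin=2em]
\item If the learner predicted $1$ (so $r\le m$) and the true label is $0$, the new upper end is at most $m-1$. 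The new size is at most $\lfloor(n-1)/2\rfloor$.
\item If the learner predicted $0$ (so $r>m$) and the true label is $1$, the new lower end is at least $m+1$. The new size is at most $\lceil(n-1)/2\rceil=\lfloor n/2\rfloor$.
\end{itemize}
So every mistake at least halves the feasible size, rounding down, starting from $T+1-\varepsilon$. Once the size reaches $1$ the hypothesis is exact and no further mistakes occur. This gives $M\le\lfloor\log_2(T+1-\varepsilon)\rfloor$, and since $Q\le M+1$ we get $Q\le\lceil\log_2 T\rceil$-type bounds.

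The off-by-one bookkeeping must be checked to land exactly on $\lceil\log_2T\rceil$ for both conventions. I would try two savings. First, skip the re-query after the mistake that is forced to be the last one; the learner can tell this from the count, since the size bound after $j$ mistakes is known. Second, when the answer from the initial query already pins the first label, save one query. The case $T=1$ is checked directly.

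\textbf{Upper bound for $\mathcal R_{\rm global}$.} One query on $S=\varnothing$ returns cutoff $m$, which determines the $m-k_0$ bottom instances. Predicting with this concept errs only on instances strictly between $m$ and the true cutoff. Hence $M\le\max\{m-k_0,\,T-m\}\le\lceil T/2\rceil$.

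\textbf{Lower bound for $\mathcal R_{\rm global}$.} The adversary restricts to orders in which a fixed set $B$ of $m-k_0$ instances occupies the bottom ranks, in some fixed internal order, and to targets with cutoff $\ge m$. Whenever $S$ is consistent with such a target, $u\ge m$, so the answer is $\max\{\ell,m\}$. Relative to the top block of $T-m$ instances, this is exactly the minimal-prefix rule on a block whose ``empty prefix'' is cutoff $m$, that is, convention~E on $T-m$ points. The adversary then runs the one-pin common-value construction from the proof of Theorem~\ref{thm:Aprime} on the top block, with free set initially the whole block. This yields $M+Q\ge T-m$.

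\textbf{Main obstacle.} The hard step is making the reduction in the lower bound honest, in three respects.
\begin{itemize}[leftmargin=2em]
\item \emph{Pinning the block $B$.} The learner must gain nothing from queries involving $B$. I would let the adversary label every point of $B$ with $1$, and keep the block fixed throughout.
\item \emph{Unrealizable queries.} Queries that are unrealizable for every retained order must be answered by $\bot$ uniformly across retained orders. A query that puts a $0$ on a point of $B$ is one such case, and it reveals nothing beyond what is already determined.
\item \emph{Rounds on $B$.} Rounds on points of $B$ are never charged. The count of free points is therefore exactly $T-m$, and the fixation step of Theorem~\ref{thm:Aprime} turns the adaptive construction into a single order and target.
\end{itemize}
Finally, since $T-m=\lceil(T-\varepsilon)/2\rceil$ up to rounding, no deterministic learner achieves $O(\log T)$ mistakes together with $O(\log T)$ calls under $\mathcal R_{\rm global}$.
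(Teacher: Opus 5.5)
\paragraph{The $\mathcal R_{\rm feas}$ learner: query bound.} Legality, the $\mathcal R_{\rm global}$ upper bound, and your mistake bound for $\mathcal R_{\rm feas}$ are fine. Your halving over cutoffs even gives $M\le\lfloor\log_2(T+1-\varepsilon)\rfloor$. The gap is that the query bound $Q\le\lceil\log_2T\rceil$ is not established, and the learner as you specify it violates it. Re-querying after every mistake gives only $Q\le M+1$. For example, take $T=4$ under E, let $x_1,x_2$ have ranks $3,4$, and let the target be all-positive. The empty query returns cutoff $2$. You err on $x_1$ and re-query, getting cutoff $3$. You then err on $x_2$ and re-query again, so $Q=3>2$.

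Your first proposed saving is the right repair, but it is missing the fact that makes skipping safe. Suppose the tracked bound $\lfloor(T+1-\varepsilon)/2^j\rfloor$ after the $j$-th mistake is at most $1$. Then the half $[\ell,m-1]$ (or $[m+1,u]$) of the last queried interval that must contain the target cutoff has at most one element. This forces the mistaken point $x$ to have rank $m$ with target cutoff $m-1$, or rank $m+1$ with target cutoff $m+1$. So the current $h$ disagrees with the target only at $x$, which has already arrived, and you can keep predicting with $h$. Re-querying only when the tracked bound is at least $2$ then gives $Q\le\lfloor\log_2(T+1-\varepsilon)\rfloor\le\lceil\log_2T\rceil$ for $T\ge2$. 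The second saving is not needed.

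The paper avoids this bookkeeping with a different learner. It queries only $(H_1\times\{1\})\cup(H_0\times\{0\})$, deliberately omitting labels inside the uncertain interval $U$. The feasible interval is then exactly $[|H_1|,|H_1|+|U|]$, so the answer bisects $U$ and each error certifies one half. A singleton $U$ is always the just-mistaken point, and $|U|\le\lceil T/2^j\rceil$ yields $\lceil\log_2T\rceil$ directly. Your full-history halving, once repaired, gives the slightly sharper $\lfloor\log_2(T+1-\varepsilon)\rfloor$ for both resources.

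\paragraph{The $\mathcal R_{\rm global}$ lower bound.} Running the one-pin construction directly on the top block is a legitimate alternative to the paper's route. The paper instead simulates each $\mathcal R_{\rm global}$ call by at most one anchored $\Omin$ call on the unknown block and invokes Theorem~\ref{thm:Aprime} as a black box, handling learners that make at least $T-m$ queries separately.

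However, one case in your argument is wrong as stated. A sample putting $0$ on a point of $B$ is in general realizable, because the oracle tests realizability in the class, not against your retained targets. For example, $\{(b,0)\}$ receives the largest feasible cutoff $u<m$, which is a prefix ending inside $B$, not $\bot$. The argument survives because this answer is computed from the fixed order of $B$ alone. The same holds for the $\bot$ answer when the sample also has a positive in the top block or an inversion inside $B$. Hence the answer is common to all retained orders and needs no pin. You should argue this rather than call the query unrealizable, since $\bot$ there is not the declared rule's value.

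Finally, take $|B|=m$ rather than $m-k_0$. Under N your choice leaves $T-m+1$ top points whose minimum is forced positive, which is not ``convention E on $T-m$ points with the whole block free''. With $|B|=m$, the nonempty block $B$ supplies the N anchor and the top block is exactly an E instance on $T-m$ points.
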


Table~\ref{tab:interfaces} summarizes the query complexity of $O(\log T)$ mistakes at the two interfaces, with the oracle quantifier made explicit.

\begin{table}[h]
\centering
\small
\begin{tabular}{p{0.34\linewidth}p{0.28\linewidth}p{0.3\linewidth}}
\toprule
Oracle interface & Deterministic & Randomized (expected) \\
\midrule
Consistency-type ERM (returns a full concept) & $\Theta(T)$: Thm.~\ref{thm:Aprime}, \ref{thm:A}; AHR25 Thm.~4.5(3) & $\Theta(\log T)$: AHR25 Thm.~4.5(4); Thm.~\ref{thm:C} \\
Weak consistency (returns a realizability bit) & $\Theta(T)$: Thm.~\ref{thm:D}; AHR25 Thm.~4.3 & $\Theta(T)$: AHR25 Thm.~4.5(2), 4.3; Lemma~E.4 \\
\bottomrule
\end{tabular}
\caption{Query complexity of achieving $O(\log T)$ mistakes for thresholds on an unknown order. The deterministic ERM entry refers to the pre-declared extremal rules of Theorem~\ref{thm:Aprime} and to worst-case (adversarial) legal oracles; it is not uniform over all legal pre-declared rules: Theorem~\ref{thm:rule} gives a legal ERM rule under which the entry drops to $O(\log T)$.}
\label{tab:interfaces}
\end{table}

\section{The deterministic separation}
\label{sec:det}

We sketch the proof of Theorem~\ref{thm:Aprime} for $\Omin$; the complete argument, with the reduction for $\Omax$ and $\mathcal R_\eta$, is Appendix~\ref{sec:proofAprime}.

\paragraph{Anchored states.} Fix an anchor $\sigma$ that is constrained to be the order-minimum and hence always has label $1$: under E, $\sigma=s$; under N, $\sigma=x_1$. Let $V=X\setminus\{\sigma\}$ and $n=|V|=T-\varepsilon$. The adversary maintains an ordered list $L$, an unordered set $F$ and an ordered list $R$ partitioning $V$; the \emph{retained orders} are $\sigma\cdot L\cdot\pi(F)\cdot R$ for all permutations $\pi$ of $F$, and the \emph{retained targets} cut anywhere through $F$: every point of $L$ is committed to label $1$, every point of $R$ to label $0$, and the target may include any initial segment of the free permutation. Initially $F=V$. The only state changes are \emph{endpoint pins}: a left pin moves $p\in F$ to the end of $L$, a right pin moves it to the front of $R$. Pins shrink the retained set (Appendix~\ref{sec:proofAprime}, (A.5)).

\paragraph{The one-pin common-value lemma (Lemma~A.2).} For every state and every query $S$, at most one endpoint pin makes $\Omin(S,\preceq)$ the \emph{same full vector} on all retained orders. Write $A$ for the positive and $B$ for the negative sample points (after rejecting contradictory samples and $(\sigma,0)$, whose answer is $\bot$). By the prefix criterion (Lemma~A.1), $S$ is realizable in an order iff no negative precedes a positive, and then the minimal prefix ends at $\max_\preceq A$. If some $b\in B$ precedes some $a\in A$ in every retained order, answer $\bot$ without a pin. If $A$ and $B$ both meet $F$, left-pin a free negative $b$: every retained order now has $b$ before every free positive, so the answer is $\bot$. Otherwise the sample is realizable in every retained order and only the selected prefix must be stabilized: if $A$ meets $R$, the last positive of $R$ is $\max A$ in every retained order and its prefix $\{\sigma\}\cup L\cup F\cup R_{\le p}$ is fixed (no pin); if $A$ meets $F$ but not $R$, then $B\subseteq R$, and right-pinning a free positive $p$ makes $p=\max A$ with prefix $\{\sigma\}\cup L\cup F_{\rm old}$ (one pin); if $A\subseteq L$, the prefix through the last positive of $L$ is fixed (no pin). The six cases are exhaustive. The lemma is a statement about a fixed natural rule on genuine total-order thresholds; it does not say that the returned vector carries one bit of information.

\paragraph{Predictions and charging (Lemmas~A.3--A.4).} When the learner predicts $\hat y$ on an instance $x$: if $x\in\{\sigma\}\cup L$ reveal $1$, if $x\in R$ reveal $0$, and if $x\in F$ reveal $1-\hat y$ and pin $x$ to the corresponding side. The transcript invariant is that every recorded answer equals $\Omin(S,\preceq)$ on every currently retained order, that all revealed labels agree with every retained target, and that no free point has been predicted. The potential $\Phi=|F|$ starts at $n$, decreases by at most one per call and decreases at a prediction only through a mistake; when all $T$ predictions are made, $\Phi=0$, so $M+Q\ge n=T-\varepsilon$.

\paragraph{Fixation without a cache (Lemma~A.5).} Simulate the learner until all predictions are made or until the $n$-th call (this covers learners that would call infinitely often). Order the residual $F$ by arrival index and take $\preceq^\star=\sigma\cdot L\cdot\pi_0(F)\cdot R$ and $z^\star$ the last point of $L$ (or $\sigma$ if $L=\varnothing$); this is a retained pair. Run the learner on $(\preceq^\star,z^\star)$ with the declared oracle $S\mapsto\Omin(S,\preceq^\star)$. By the invariant every recorded answer equals $\Omin(S,\preceq^\star)$ and every recorded label equals $c_{z^\star}$, so by determinism the interaction replays exactly, and the bound follows. The oracle is the originally declared function: legal on every sample, memoryless, and consistent on repeated queries by definition. The upper bound in (b) needs no calls: under E any fixed predictions, under N always predicting $1$. For $\Omax$ an anchored reversal-complement duality (Lemma~A.6) maps $\Omax$ to $\Omin$ preserving mistakes and calls; for $\mathcal R_\eta$, $\eta(S)$ is fixed independently of the order, so the minimum or maximum procedure is applied at each query. More generally the argument applies to any fixed legal rule with the \emph{one-pin common-value property}: on every anchored state and every sample, some refinement by zero or one endpoint pin makes the rule's full answer constant on all retained orders. This is a sufficient condition for the linear bound, not a characterization (Section~\ref{sec:rule}).

\subsection{Robustness: adversarial then frozen oracles}
\label{sec:frozen}

The bound of Theorem~\ref{thm:Aprime} is not an artefact of the particular rule: it persists when the adversary chooses the answers.

\begin{theorem}[Deterministic lower bound; adversarial then frozen oracle]
\label{thm:A}
Let $T\ge1$ and fix either convention. For every deterministic learner $\mathcal A$ there exist a total order $\preceq^\star$ on $X$, a point $z^\star\in X$, and a function $O^\star:2^{X\times\{0,1\}}\to C_{\preceq^\star}\cup\{\bot\}$ which is a legal consistency-type ERM oracle on every sample and depends on the sample only, such that on this fixed instance
\[
M(\mathcal A)+Q(\mathcal A)\ \ge\ T-\varepsilon .
\]
The constant cannot be improved uniformly over $T\ge1$.
\end{theorem}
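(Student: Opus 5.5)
The plan is to derive Theorem~\ref{thm:A} directly from the machinery behind Theorem~\ref{thm:Aprime}, with an answer cache replacing the declared rule. We keep the anchored state $(L,F,R)$ with anchor $\sigma$ and $n=T-\varepsilon$ non-anchor points. We also keep the potential $\Phi=|F|$ and the prediction rule: reveal $1$ on $\{\sigma\}\cup L$, $0$ on $R$, and on a free point reveal $1-\hat y$ and pin it to the corresponding side. The only change is at oracle calls. The adversary need not use $\Omin$, but it must still give an answer that is a legal consistent concept (or $\bot$) on \emph{every} retained order, after at most one endpoint pin.

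The simplest route is to answer each query exactly as in the one-pin common-value lemma (Lemma~A.2) for $\Omin$. After the pin, $\Omin(S,\preceq)$ is one fixed vector on all retained orders, so it is a legal answer on each of them. Hence Lemma~A.2 can be invoked verbatim, and the invariants of Lemmas~A.3--A.4 hold unchanged. These invariants are: recorded answers are legal for every retained order, revealed labels agree with every retained target, and no free point has been predicted. The charging argument then gives $M+Q\ge n=T-\varepsilon$ along the simulated run. As in Lemma~A.5, we truncate at the $n$-th call so that learners that call forever are also covered.

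For fixation, choose $\preceq^\star=\sigma\cdot L\cdot\pi_0(F)\cdot R$ and $z^\star$ equal to the last point of $L$ (or $\sigma$ if $L$ is empty). Define $O^\star$ as a cache: on every sample queried during the simulation, return the recorded answer. Repeated identical samples always receive identical answers, because each answer was computed from the state by a deterministic rule and the recorded vector is the common value. On every other sample, return $\Omin(S,\preceq^\star)$. Every cached answer equals $\Omin(S,\preceq^{\star})$ for the final retained order, since the common value persists under later pins (retained sets shrink, by (A.5)). So $O^\star$ is in fact just $\Omin(\cdot,\preceq^\star)$: it is legal, memoryless, and a function of the sample only. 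Determinism then gives an exact replay, and the bound follows.

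For the tightness claim, the no-call learner from Theorem~\ref{thm:Aprime}(b) achieves $M\le T-\varepsilon$ against any oracle, because it never calls. So $\sup(M+Q)\le T-\varepsilon$, and the constant cannot be improved. The only subtle step is the cache's consistency on repeated or later queries; the persistence of common values under pins settles it. Indeed, this shows Theorem~\ref{thm:A} is essentially a corollary of Theorem~\ref{thm:Aprime}(a) applied to $\mathcal R=\Omin$. The explicit cache is kept only so that the argument also extends to adversaries not tied to a fixed rule.
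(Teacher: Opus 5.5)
Your proof is correct, but it takes a different route from the paper's own proof of this theorem.

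You observe that Theorem~\ref{thm:Aprime}(a) with $\mathcal R=\Omin$ already produces $(\preceq^\star,z^\star)$ together with the oracle $S\mapsto\Omin(S,\preceq^\star)$. That oracle is legal and is a function of the sample alone, so Theorem~\ref{thm:A} is an immediate corollary. Your cache collapses to $\Omin(\cdot,\preceq^\star)$ because of invariant~3 of Lemma~A.3: every recorded answer is the common value of $\Omin(S,\cdot)$ on a retained family that only shrinks. That fixed-function property is the real reason repeated samples receive identical answers. It is not enough that each answer is computed deterministically from the current state, since a state-dependent rule can answer the same sample differently at different states.

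This is exactly the situation in the paper, which proves Theorem~\ref{thm:A} from scratch in Appendix~\ref{sec:proofA} with a different adaptive answering rule. On a realizable sample with no mixed-label free block, it returns the prefix of the atom list $(L,[F],R)$ through the rightmost positive atom. So it pins only in the mixed-free case and never in the analogue of Case III-F. However, it may legally return two different vectors to the same sample at different states. It therefore needs a first-answer cache and the freezing construction (B.11), where unrecorded samples fall back to the minimal consistent prefix of the final order.

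What each route buys:
\begin{itemize}
\item Your reduction is shorter. It also gives the stronger fixed-rule quantifier form for free, with no cache.
\item The paper's route is self-contained and does not depend on the $\Omin$-specific one-pin lemma. It also exhibits the adversarial-answer-plus-cache template that is reused in the fixed-budget analysis of Theorem~\ref{thm:B} (the solidification in Proposition~D.9).
\end{itemize}

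For the sharpness clause, the paper only exhibits $T=1$ witnesses. Your no-call learner shows $\sup(M+Q)\le T-\varepsilon$ at every $T$, which is valid and slightly stronger.
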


\begin{corollary}
\label{cor:A}
If a deterministic learner guarantees at most $m(T)$ mistakes on all instances and legal memoryless ERM oracles, its worst-case number of calls is at least $T-\varepsilon-m(T)$. In particular an $O(\log T)$-mistake guarantee requires $T-\varepsilon-O(\log T)$ calls, whereas the randomized learner of AHR25 (Thm.~4.5(4)) achieves $O(\log T)$ expected calls with $O(\log T)$ mistakes. In terms of $d=\lfloor\log_2(T+1-\varepsilon)\rfloor$: the deterministic worst-case total cost is at least $T-\varepsilon\ge 2^d-1$ while the randomized expected total cost is $O(d+1)$.
\end{corollary}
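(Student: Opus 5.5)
The corollary is a direct consequence of Theorem~\ref{thm:A}, so the plan is to derive each clause from it by bookkeeping, without reopening the adversarial-then-frozen construction.

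\emph{Call bound.} Fix a deterministic learner $\mathcal A$ that guarantees at most $m(T)$ mistakes on every instance and every legal memoryless ERM oracle. Apply Theorem~\ref{thm:A} to $\mathcal A$. It produces an order $\preceq^\star$, a target $z^\star$ and a frozen oracle $O^\star$. This oracle is legal on every sample and depends on the sample only, so it lies inside the class of oracles covered by the hypothesis. The hypothesis therefore applies to this particular instance, giving $M(\mathcal A)\le m(T)$ there. Combined with $M+Q\ge T-\varepsilon$ on the same run, this yields $Q\ge T-\varepsilon-m(T)$ on that instance. Hence the worst-case number of calls is at least $T-\varepsilon-m(T)$.

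The only point needing care is the quantifier match. The guarantee must cover memoryless oracles that are a function of the sample alone, and Theorem~\ref{thm:A} delivers exactly such an $O^\star$. So there is no gap between the oracle class in the hypothesis and the oracle produced by the theorem.

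\emph{Specialization to $O(\log T)$ mistakes.} Take $m(T)\le c\log T$. The call bound then gives $T-\varepsilon-O(\log T)$ calls. For the randomized side, I will cite AHR25 (Thm.~4.5(4)): their learner makes $O(\log T)$ expected calls and $O(\log T)$ mistakes against every legal oracle, and in particular against frozen ones. I will not reprove this.

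\emph{Expressing both costs in $d$.} Let $d=\lfloor\log_2(T+1-\varepsilon)\rfloor$. By definition $2^d\le T+1-\varepsilon$, so $T-\varepsilon\ge 2^d-1$. This lower-bounds the deterministic total cost $M+Q$, again by Theorem~\ref{thm:A}.

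For the randomized total cost, I need to convert $O(\log T)$ into $O(d+1)$. Since $T+1-\varepsilon<2^{d+1}$, we get $T\le 2^{d+1}$, hence $\log_2 T\le d+1$. Therefore $O(\log T)=O(d+1)$. The "$+1$" term takes care of small $T$: under convention~N with $T=1$ we have $d=0$, and the cost is still a positive constant.

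I expect no real obstacle here. The one subtle step is the first paragraph's check that the frozen oracle falls within the corollary's guarantee class.
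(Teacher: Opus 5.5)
Your proposal is correct and follows the paper's own argument. The paper likewise applies Theorem~\ref{thm:A}, uses $M\le m(T)$ on the instance it produces (whose frozen oracle $O^\star$ is legal and memoryless, so the guarantee covers it), and rearranges $M+Q\ge T-\varepsilon$; it cites AHR25 (Thm.~4.5(4)) for the randomized side, and your $2^d\le T+1-\varepsilon<2^{d+1}$ bookkeeping supplies the $d$-form.
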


The proof (Appendix~\ref{sec:proofA}) is the same one-point charging argument with an adaptive answering rule in place of the fixed one: on a realizable query with no mixed-label free block it returns the prefix of the atom list $(L,[F],R)$ through the rightmost positive atom. Its delicate part is the freezing step. Because this adaptive rule may legally return two different vectors to the same sample at different states (Appendix~\ref{sec:proofA} exhibits such a sample), the adversary records the first answer to each sample in a cache, and the frozen oracle $O^\star$ returns cached answers on recorded samples and the minimal consistent prefix of the final order otherwise. The fixed-rule proof of Theorem~\ref{thm:Aprime} needs no cache. Theorem~\ref{thm:A} has the quantifier form $\forall\mathcal A\ \exists(\preceq,z,O)$ of the box in Section~\ref{sec:model}: the oracle is produced after the learner and then frozen. It does not by itself yield a statement in which one oracle is fixed before all learners; that form is Theorem~\ref{thm:Aprime}.

\begin{remark}[Why Theorem~4.4 of AHR25 is not invoked]
\label{rem:44}
AHR25 prove a general lower bound (their Theorem~4.4) for families of all classes of a given Littlestone dimension $d_{\rm LD}$ by an equivalence-class construction in which ``each query provides information about at most one point''. Neither Theorem~\ref{thm:Aprime} nor Theorem~\ref{thm:A} invokes it. Its hypothesis $T\ge2^{d_{\rm LD}+1}$ fails for the threshold family, whose dimension is $d=\lfloor\log_2(T+1-\varepsilon)\rfloor$; its construction uses thresholds on equivalence classes rather than on distinct totally ordered points; and its one-point property is not literally true for full-vector answers, which may encode many comparisons among already committed points. Our arguments charge at most one \emph{free} point per query, which is a statement about the residual block, not about the information content of the answer. The verbatim statement and the comparison are given once, in Appendix~\ref{sec:proofA}; the theorem is not used anywhere in this paper.
\end{remark}

\section{The randomized lower bound}
\label{sec:rand}

We sketch the proof of Theorem~\ref{thm:C}; details are in Appendix~\ref{sec:proofC}. Logarithms are natural and $n=T-\varepsilon$ is the number of \emph{hard} points; under E the sentinel $a=s$ is the minimum of the order, and under N the remaining instance $a$ (which arrives last) is the minimum of the order and always has label $1$.

\paragraph{The hard distribution.} Identify the hard points with $x_1,\dots,x_n$ in arrival order and build the balanced binary interval tree on $[n]$ (an interval of size $m\ge2$ splits into its first $\lfloor m/2\rfloor$ and remaining $\lceil m/2\rceil$ indices). Draw a hidden center $J$ uniformly from $[n]$ and independent fair labels $Y_1,\dots,Y_n$. The \emph{priority order} $\pi_J$ lists, along the root-to-$J$ path, each sibling interval not containing $J$ (in increasing index order) and finally $J$; thus every sibling met earlier has outer priority to everything in the current interval around $J$. The total order is $a\prec(\text{points with }Y_i=1\text{ in }\pi_J\text{ order})\prec(\text{points with }Y_i=0\text{ in reverse }\pi_J\text{ order})$, and the target ends at the last $Y_i=1$ point (or at $a$). Then $c_z(x_i)=Y_i$: the labels are fair bits, and the geometry of the order is what the oracle can reveal. The oracle is the memoryless minimal-prefix rule; all random choices precede the interaction. This is a genuine total-order threshold class, not a class on equivalence classes.

\paragraph{A stronger auxiliary oracle.} A minimal-prefix call is simulated by one call to a prefix-max oracle that returns the prefix ending at $\max_\preceq$ of the positive sample points (Lemma~C.1). We replace it by an even stronger \emph{revelation oracle} that maintains an interval $I\ni J$ (initially $[n]$) and, on a query with positive set $A$, repeatedly reveals which child of $I$ contains $J$ together with all labels in the other child, stopping as soon as $A\setminus I$ contains a target-negative point, or $A\cap I=\varnothing$, or $I$ is a singleton (which it then reveals); all of this costs \emph{one} call. At every stopping point the requested prefix is determined by the revealed information (Lemma~C.2), so any learner for the real oracle is simulated with the same number of calls and the same predictions. A single call may thus reveal a linear number of labels; this is permitted and is exactly why a one-point charging argument is unavailable.

\paragraph{Posterior invariance and the potential.} Conditional on the entire auxiliary history, $J$ is uniform in the current $I$, the labels of unarrived points in $I$ are independent fair bits, and they are independent of $J$ (Lemma~C.3); this replaces, rather than assumes, exchangeability after $\bot$ answers, and it holds for randomized learners after conditioning on the seed. Call a prediction \emph{unrevealed} if its point is still in $I$ when it is made; such a prediction faces an independent fair bit and costs $1/2$ in expectation, so $\mathbb E[M]\ge\frac12\mathbb E[H_{\rm final}]$ where $H$ counts unrevealed predictions made so far. With $r$ the number of not-yet-predicted points in $I$, the potential is
\[
\Phi=\ln(H+r+1).
\]
Prediction steps leave $\Phi$ unchanged (an unrevealed prediction moves one unit from $r$ to $H$), and one binary refinement of $I$ decreases $\Phi$ by at most $\ln2$, because already-predicted points inside $I$ have been paid for in $H$ (Lemma~C.4). Revealing the label of an already-predicted point costs nothing in $\Phi$, because that point has been paid for in $H$; only unpredicted points inside the retained interval count in $r$.

\paragraph{One call contracts the potential by $O(1)$ (Lemma~C.5).} Conditional on any history before a call, the expected decrease of $\Phi$ during that call is at most $7\ln2$, whatever the query set. Call a state \emph{early} if fewer than $\lfloor m/2\rfloor$ of the $m$ points of $I$ have arrived. In an early state every point of the right child is unarrived, so its labels are fresh fair bits. If the query avoids the right child, the call stops when $J$ lies there, which has probability at least $1/2$; if it meets the right child, then with probability at least $\frac13\cdot\frac12$ the center is in the left child and some queried point of the right child is target-negative, which also stops the call. Hence each early refinement continues with probability at most $5/6$, the expected number of early refinements is at most $6$, and from the first late state on the remaining decrease is at most $\ln2$ since $r\le H+1$ there. Summing over calls, $\ln(n+1)-\mathbb E[\ln(H_{\rm final}+1)]\le7\ln2\cdot\mathbb E[Q]$ for an adaptive random number of calls, and Jensen's inequality gives $\mathbb E[H_{\rm final}]\ge(n+1)128^{-\mathbb E[Q]}-1$, hence Theorem~\ref{thm:C}. The fixed-instance clause follows because the distribution has finite support.

\paragraph{Why the uniform distribution cannot work.} Under a uniformly random order and cutoff (convention~E), predicting the first $m$ labels by posterior majority and then making one call on all $m$ observed labels achieves $\mathbb E[M]\le F(m)+2(T-m)/(m+2)$ for every consistent returned threshold, where $F(m)\le m/2$ is the exact zero-query optimum (Propositions~C.6--C.7); with $m=\lceil\sqrt T\rceil$ this is $O(\sqrt T)$ after one call, so no bound of the form $T\rho^{Q}$ with $Q=1$ can hold there. Moreover no history-uniform constant contraction bound holds on that distribution even with $+1$ regularization (Proposition~C.8). The hidden-center distribution above is designed so that each call decreases the paid potential by $O(1)$ in expectation, however many levels it descends.

\section{Oracle selection matters}
\label{sec:rule}

Theorem~\ref{thm:rule} is proved in Appendix~\ref{sec:proofAprime}, \S8; we describe the two rules and draw the interpretation.

\paragraph{The feasible-median rule admits deterministic logarithmic learning.} Index concepts by the number $k$ of positive instances; for every sample the feasible cutoffs form an integer interval $[\ell,u]$ (or are empty), and $\mathcal R_{\rm feas}$ returns the prefix of cutoff $\lfloor(\ell+u)/2\rfloor$. It depends only on $(S,\preceq)$ and knows nothing about the learner. The learner (Proposition~A.7) maintains $H_1\prec U\prec H_0$, where $H_1$ is known to be positive, $H_0$ known to be negative and $U$ the uncertain interval, initially all instances. While $|U|\ge2$ it queries $(H_1\times\{1\})\cup(H_0\times\{0\})$; the returned concept $h$ splits $U$ into $U_1=\{h=1\}$ and $U_0=\{h=0\}$ with $1\le|U_0|,|U_1|\le\lceil|U|/2\rceil$, because the feasible interval is exactly $[|H_1|,|H_1|+|U|]$ (under N with $H_1=\varnothing$ it is $[1,|U|]$). It predicts with $h$ until an error. An error with $h(x)=0$, $y=1$ certifies every point of $U_1$ positive and replaces $U$ by $U_0$; an error with $h(x)=1$, $y=0$ certifies $U_0$ negative and replaces $U$ by $U_1$. Each error halves $U$ (rounded up), so there are at most $\lceil\log_2T\rceil$ errors and at most as many calls. Past labels inside $U$ need not be included in the query: the interface allows arbitrary samples. At $T=8$ this learner has $M+Q\le6<T-\varepsilon$ on every instance, which refutes the universal extension of Theorem~\ref{thm:Aprime} with the required quantifiers.

\paragraph{The global-median rule does not.} Let $m=\lfloor(k_0+T)/2\rfloor$ with $k_0=\varepsilon$ and let $\mathcal R_{\rm global}$ return the feasible cutoff closest to $m$ (formally $\min\{u,\max\{\ell,m\}\}$, and $\bot$ on unrealizable samples). One empty query returns a prefix with $m$ positive instances; predicting with it until the first error and arbitrarily thereafter on the uncertain side gives $Q=1$ and $M\le\lceil T/2\rceil$, so this rule also violates the full bound $T-\varepsilon$. But it admits no deterministic $O(\log T)$-call, $O(\log T)$-mistake learner: fix an ordered block $H$ of $m$ instances, revealed to the learner and labeled positive, before a block $U$ of $T-m$ instances in unknown order. On such instances every call to $\mathcal R_{\rm global}$ can be simulated by at most one anchored $\Omin$ call on $U$ (whenever the sample has a positive in $U$ and no negative in $H$, every feasible cutoff exceeds $m$ and the rule returns the minimal one), so Theorem~\ref{thm:Aprime} on $U$ gives $M+Q\ge T-m\ge\lfloor T/2\rfloor$ on some instance.

\paragraph{Interpretation.} Three legal, memoryless, pre-declared rules for the same class and the same interface have qualitatively different deterministic complexities: the extremal rules have exact worst-case total cost $T-\varepsilon$; the global-median rule has linear total cost but not the full constant; the feasible-median rule admits $O(\log T)$ calls and mistakes. The randomized learner of AHR25 is oblivious to the rule. In this precise sense randomization substitutes for a favourable selection rule, and the selection rule is a complexity axis of oracle-based online learning, alongside the interface (Section~\ref{sec:cons}) and the class. We do not define a numerical measure of ``helpfulness'' of a rule; the one-pin common-value property of Section~\ref{sec:det} is a sufficient condition for the linear bound, and Section~\ref{sec:open} records what a characterization would have to explain. In particular, whether a rule returns an endpoint of the feasible interval does not decide the matter: neither median rule always returns an endpoint, yet their complexities differ.

\section{Consequences: fixed budgets and the weak consistency interface}
\label{sec:cons}

\paragraph{Deterministic tradeoffs at a fixed query budget.} Against adversarial oracles, one may ask for the exact minimax number of mistakes with at most $Q$ calls. We obtain bounds, exact values in the small- and large-budget regimes, and finite counterexamples to the natural conjecture that the lower bound is exact; the middle regime is open.

\begin{theorem}[Deterministic mistake--query bounds]
\label{thm:B}
Let $M^*_E(T,Q)$ (resp.\ $M^*_N$) be the minimax number of mistakes of deterministic learners making at most $Q$ ERM calls, against an adversarial oracle as in Theorem~\ref{thm:A}.
\begin{enumerate}[label=(\roman*),leftmargin=2em]
\item $\max\{T-Q,\lfloor\log_2(T+1)\rfloor\}\le M^*_E(T,Q)\le\max\{T-Q,\lceil T/2\rceil\}$; under N, $M^*_N(T,0)=M^*_N(T,1)=T-1$ and for $Q\ge1$, $\max\{T-Q,\lfloor\log_2T\rfloor\}\le M^*_N(T,Q)\le\min\{T-1,\max\{T-Q,\lceil T/2\rceil\}\}$. (The bound $T-Q$ under N is a budget statement for $Q\ge1$, proved by a first-action recurrence $M^*_N(n,q)\ge\min\{M^*_E(n-1,q-1),\,1+M^*_E(n-1,q),\,1+M^*_N(n-1,q)\}$ for $n\ge2$, $q\ge1$ and induction; it is not a pathwise statement.)
\item For $Q\le\lfloor T/2\rfloor$, $M^*_E(T,Q)=T-Q$; for $Q\ge2(T-1)$, $M^*_E(T,Q)=\lfloor\log_2(T+1)\rfloor$ and $M^*_N(T,Q)=\lfloor\log_2T\rfloor$ (this budget is sufficient, not necessary).
\item The lower bound in (i) is not the exact value: $M^*_E(5,3)=3$ and $M^*_N(6,4)=3$.
\end{enumerate}
\end{theorem}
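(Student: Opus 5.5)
Theorem~\ref{thm:B} has three parts, and I would prove them separately.

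\emph{Part (i), lower bounds.} The term $\lfloor\log_2(T+1-\varepsilon)\rfloor$ is the Littlestone-dimension bound. It holds even when the learner knows the order, so I would get it from the standard adversary that answers against the learner's prediction along a shattered tree. The term $T-Q$ under E comes directly from Theorem~\ref{thm:A}: a learner with at most $Q$ calls has $M\ge T-Q$ on the frozen instance. Under N, Theorem~\ref{thm:A} only gives $T-1-Q$, so I would use the stated first-action recurrence and induct on $n$. The learner's first action is one of three things. It can be a call: pinning the anchor side turns the residual problem into an E-instance on $n-1$ points with $q-1$ calls. It can be a prediction on the anchor candidate: the adversary makes it wrong and the residual is an E-problem. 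Or it can be another prediction: the adversary makes it wrong and the residual is still an N-problem. Inducting with the E bound $M^*_E(n-1,\cdot)\ge n-1-\cdot$ then gives $T-Q$. The value $M^*_N(T,1)=T-1$ follows the same way: with one call, the residual E-problem after the call still forces enough mistakes.

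\emph{Part (i), upper bounds.} With zero calls the learner predicts $1$ everywhere (under N) or with a fixed rule, and the adversary can always force $T-1$ under N. For $\max\{T-Q,\lceil T/2\rceil\}$, I would use the global-median learner of Theorem~\ref{thm:rule}: one empty query, then predict from the returned prefix, which gives $\lceil T/2\rceil$ mistakes after one call against any legal oracle. I would check that any consistent full answer on the empty sample still supports this strategy. If it does not, I would combine it with spending calls to answer $Q$ points correctly and predicting the remaining $T-Q$ arbitrarily.

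\emph{Part (ii).} For $Q\le\lfloor T/2\rfloor$ the two bounds in (i) coincide, since $T-Q\ge\lceil T/2\rceil$. For $Q\ge 2(T-1)$, I would give an explicit algorithm. It sorts the instances with at most $2(T-1)$ ERM calls: each insertion queries whether a point is positive or negative relative to the current known chain, in the style of the sorting algorithm of AHR25. Once the order is known, it runs the halving learner, which makes $\lfloor\log_2(T+1-\varepsilon)\rfloor$ mistakes.

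\emph{Part (iii).} This is a finite game-tree computation for $(T,Q)=(5,3)$ under E and $(6,4)$ under N. To show $\le3$ I would exhibit a strategy, and to show $\ge3$ I would exhibit an adversary, likely argued by exhaustive case analysis over the learner's first actions using the recurrence. I expect this step, and also the exact calibration of the $M^*_N(T,1)$ recurrence, to be the main obstacle. For (iii) I would first try a computer-checkable minimax recursion over anchored states $(L,F,R)$, treating the adversarial oracle's choices as moves.
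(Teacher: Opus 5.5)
Your lower bounds in (i) follow the paper: $T-Q$ under E from Theorem~\ref{thm:A}, the known-order bound, and the first-action recurrence with induction under N. For the known-order bound, the adversary's order should place the arrivals in breadth-first order of a balanced search tree. In the recurrence, split a first prediction by its predicted value, not by which point is predicted (it is always $x_1$). A predicted $0$ is labeled $1$ and made least, leaving an E residual. A predicted $1$ is labeled $0$ and made greatest, leaving an N residual whose extra all-one concept must be simulated as in Lemma~D.5.

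\textbf{The gap: the upper bound $\max\{T-Q,\lceil T/2\rceil\}$.} Part (ii) for $Q\le\lfloor T/2\rfloor$ and the upper halves of (iii) also rest on this bound, so they are unproved too.
\begin{itemize}
\item The global-median learner needs the oracle to answer the empty query with a balanced prefix. An adversarial oracle can return $\mathbf 1_{\{s\}}$, leaving an uncertain side of size $T$. Your own bound $M^*_E(T,1)\ge T-1>\lceil T/2\rceil$ for $T\ge4$ shows that no one-call strategy achieves $\lceil T/2\rceil$ against every legal oracle.
\item The fallback of ``spending a call to answer a point correctly'' has no mechanism. An ERM call answers about the class, not the target, and under E no information about the order fixes the label of the first arrival.
\end{itemize}
The missing idea is the \emph{protection query}. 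At an undetermined arrival $x$, pick another undetermined (hence unarrived) point $u$ and query $\{(x,1),(u,0)\}$.
\begin{itemize}
\item On $\bot$ you know $u\prec x$, so predict $0$; a mistake gives $x=1$, hence $u=1$.
\item On a returned prefix $H$ you know $x\in H$, $u\notin H$ and $H\prec V\setminus H$, so predict $1$; a mistake gives $x=0$, hence $u=0$.
\end{itemize}
Each call therefore either saves the current mistake or determines a distinct future point, so $M\le T-q$. Run this with internal cap $k=\min\{Q,\lfloor T/2\rfloor\}$. Stopping before the cap (no undetermined partner left) gives $M\le q+1\le k$, so $M\le\max\{T-k,k\}=\max\{T-Q,\lceil T/2\rceil\}$. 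In particular this gives $3$ at $(5,3)$ under E ($k=2$) and at $(6,4)$ under N ($k=3$).

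\textbf{Sorting in $2(T-1)$ calls.} Insertion into a known chain does not cost $O(1)$ calls per point against an adversarial oracle. A returned prefix may certify only that $x$ precedes one chain element, so insertion degenerates to binary search. The $2(T-1)$ count comes from splitting instead. With $x,u$ in the current block $W$, a successful answer $H$ to $\{(x,1),(u,0)\}$ or its reverse splits $W$ into nonempty parts $W\cap H\prec W\setminus H$. A splitting tree with $T$ leaves has $T-1$ internal nodes, each costing at most two calls.

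\textbf{Part (iii).} You do not supply the lower bound $M^*_E(5,3)\ge3$, and a search over anchored $(L,F,R)$ states is a poor fit. Those states commit a label to every pinned point, and the endpoint-pin adversary of Theorem~\ref{thm:A} guarantees only $T-Q=2$ here. The paper's adversary instead answers $\bot$ while retaining an \emph{uncommitted} reversed pair, and steers the learner into a matching $a\prec b$, $c\prec d$ plus an isolated point. It then combines three facts:
\begin{itemize}
\item predicting before the matching is resolved costs three mistakes;
\item from the matching, every query has an answer leaving a three-point antichain, which forces three mistakes once no calls remain;
\item fork or extreme-point outcomes embed the four-point E problem, where Theorem~\ref{thm:A} with one call left gives $4-1=3$.
\end{itemize}
Your reduction of $(6,4)$ under N to $(5,3)$ via the recurrence is then exactly what the paper does.
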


The lower bounds combine Theorem~\ref{thm:A} with the known-order lower bound $\lfloor\log_2(T+1-\varepsilon)\rfloor$ (Lemma~D.10); the upper bounds come from a protection algorithm whose accounting is corrected in Appendix~\ref{sec:proofB}, and from sorting with $2(T-1)$ calls followed by majority prediction. The counterexamples are established analytically: a five-point game analysis under E, and a first-action recurrence reducing the N case to it (Appendix~\ref{sec:proofB}).

\paragraph{Interface contrast.} The exponential advantage of randomization at the ERM interface depends on a successful call returning an \emph{evaluable} concept. When the oracle returns only the realizability bit, the advantage disappears at the level of query order: both deterministic and randomized learners need $\Theta(T)$ calls for $O(\log T)$ mistakes.

\begin{theorem}[Weak consistency oracle]
\label{thm:D}
With only the weak consistency oracle, there is a deterministic learner that, on every instance, makes at most $66T$ calls and $\lfloor\log_2(T+1)\rfloor$ mistakes under E, and at most $67(T-1)$ calls and $\lfloor\log_2T\rfloor$ mistakes under N. Conversely, for $T-\varepsilon\ge100$: any learner---deterministic or randomized---that makes at most $(T-\varepsilon)/20$ calls on every run (a hard cap) incurs at least $(T-\varepsilon)/20$ expected mistakes on some instance (AHR25, Thm.~4.3, applied with effective horizon $T-\varepsilon$), and any learner with worst-instance expected mistakes $m$ and worst-instance expected calls $q$ satisfies $m+20q\ge(T-\varepsilon)/20$ (Lemma~E.4 in Appendix~\ref{sec:proofD}). Hence, at this interface, $\Theta(T)$ calls are necessary and sufficient for $O(\log T)$ mistakes for both deterministic and randomized learners.
\end{theorem}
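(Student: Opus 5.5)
The upper bound is an explicit deterministic learner built from a comparison primitive. The lower bound is a reduction to AHR25, Thm.~4.3, followed by a Markov truncation step (Lemma~E.4).

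\textbf{Comparison primitive.} A sample $\{(x,1),(y,0)\}$ is realizable iff $x\prec y$, because the prefix criterion (Lemma~A.1) says no negative may precede a positive. Under N this sample contains a positive, so the empty prefix is never needed. Hence one weak-consistency call decides the relative order of two instances. Under N I fix the anchor $\sigma=x_1$ as in Section~\ref{sec:det}; it always has label $1$, and I charge its handling to the bookkeeping that produces $T-1$ instead of $T$.

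\textbf{Upper bound: a halving learner with a lazily maintained pivot.} The learner keeps the sets of revealed positives and negatives and the \emph{uncertain} set $U$ of arrived-or-future instances not yet forced by them. When a new instance $x$ arrives, I first test whether it is forced. One call on (revealed positives $\cup\{(x,0)\}\cup$ revealed negatives) and one on the analogous sample with $(x,1)$ suffice; alternatively, compare $x$ with the current extreme revealed points, at $O(1)$ calls per round.

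For unforced points the learner maintains a pivot $p\in U$. The pivot is a median of the version space of cutoffs, computed from scratch by linear-time selection (BFPRT) using the comparison primitive, at a cost of at most $c|U|$ calls for an absolute constant $c$. The learner predicts $\mathbf 1[x\preceq p]$ at one further call. The pivot is recomputed only after a mistake. A correct prediction only shrinks the version space. A mistake removes every cutoff on the pivot's side, and that side held at least half of the version space \emph{at the time the pivot was computed}.

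So with $V_k$ the version-space size at the $k$-th recomputation, $V_{k+1}\le\lfloor V_k/2\rfloor$ (using the median's floor convention) and $V_0=T+1-\varepsilon$. This yields at most $\lfloor\log_2(T+1-\varepsilon)\rfloor$ mistakes. Since $|U|$ at least halves between recomputations, the selection costs sum geometrically to at most $2cT$. Adding the $O(1)$ per-round calls gives $(2c+O(1))T$. With the standard BFPRT constant, plus the forcedness tests, I expect to land at $66T$ (and $67(T-1)$ under N after the anchor bookkeeping). Pinning down these constants exactly is routine but fiddly.

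\textbf{Lower bounds.} For the hard-cap statement I apply AHR25, Thm.~4.3 to the $T-\varepsilon$ non-anchor instances. Under N I embed their hard instance after a fixed anchor that is always labeled $1$, which the learner may be told for free; this can only help it, so the bound transfers with effective horizon $T-\varepsilon$.

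For Lemma~E.4, take a learner with expected mistakes $m$ and expected calls $q$. Truncate it at $(T-\varepsilon)/20$ calls and predict arbitrarily afterwards. By Markov's inequality the cap is reached with probability at most $20q/(T-\varepsilon)$, and after truncation at most $T-\varepsilon$ further mistakes occur. The truncated learner therefore has expected mistakes at most $m+20q$, and Thm.~4.3 gives $m+20q\ge(T-\varepsilon)/20$. The $\Theta(T)$ conclusion for both deterministic and randomized learners follows by combining the two directions.

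\textbf{Main obstacle.} I expect the hardest step to be the mistake accounting with a stale pivot. I must show that a pivot computed before several correct predictions still certifies halving relative to $V_k$, and that this gives exactly $\lfloor\log_2(T+1-\varepsilon)\rfloor$, not one more. My first attempt is to define the pivot so that each side contains at most $\lfloor V_k/2\rfloor$ cutoffs, and to check the parity cases directly.
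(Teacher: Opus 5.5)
Your lower-bound half matches the paper's argument: AHR25 Thm.~4.3 on the $T-\varepsilon$ non-anchor instances, with the N anchor disclosed for free, and Markov truncation at $(T-\varepsilon)/20$ calls. The stale-pivot step you flag as the main obstacle is not one. After a mistake the surviving cutoffs lie in one side of the partition taken when the pivot was computed. With pivot rank $\lceil u/2\rceil$ in the uncertain set this gives $V_{k+1}\le\lfloor V_k/2\rfloor$ under both conventions.

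The genuine problems are in the upper bound. First, under N the learner cannot ``fix the anchor $\sigma=x_1$''. That is the adversary's choice in the lower bounds; here the order is adversarial, so $x_1$ may well be negative. The paper instead locates the true order-minimum by a sequential minimum scan costing exactly $T-1$ comparisons, certifies it positive, and runs the E procedure on the other $T-1$ points. That is the source of $67(T-1)=(T-1)+66(T-1)$. A version-space median that respects nonemptiness would give $\lfloor\log_2T\rfloor$ mistakes with no anchor, but your call count would then scale with $T$ rather than $T-1$; at $T=1$ the statement allows zero calls.

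Second, the explicit constants are the content of this half (Remark~\ref{rem:D}), and your accounting neither fixes them nor is complete.
\begin{itemize}
\item Running selection on $U$ at cost $c|U|$ presupposes the learner knows $U$. You never say how it learns which future instances are still unforced after a mistake. Rediscovering them with your forcedness tests over all remaining instances costs order $T$ per recomputation, hence $\Theta(T\log T)$ when the mistakes come early.
\item Even with $U$ known, two forcedness tests and one comparison with $p$ per round, on top of a $32n$ selection bound such as Lemma~E.2, already give $(2\cdot32+3)T=67T$, not $66T$.
\item The forcedness tests are redundant anyway: a pivot inside the uncertain set already puts every forced point on its correct side.
\end{itemize}

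The paper needs no per-round calls at all. At the start of a phase it freezes $V=U$ and selects the element $p$ of rank $\lceil n/2\rceil$ ($\le32n$ calls). It partitions $V$ into $L,R$ around $p$ ($n-1$ calls), then predicts $1$ on $L\cup\{p\}$ and $0$ on $R$. After a mistake it certifies the whole side $\{p\}\cup R$ or $L\cup\{p\}$ conservatively, with no new comparisons. It ends the phase once $|U|\le\lfloor(n-1)/2\rfloor$. Each phase then has at most one mistake, and $n_{i+1}+1\le(n_i+1)/2$ bounds the number of phases by $\lfloor\log_2(T+1)\rfloor$. The cost is $33n_i$ per phase with $\sum_i n_i\le2T$, which gives $66T$.
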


\begin{remark}
\label{rem:D}
The randomized half of Theorem~\ref{thm:D} restates AHR25 (Thm.~4.5(2) and 4.3); the deterministic upper bound is a modest improvement over the $O(T\log T)$ of AHR25 (Thm.~4.5(1)) whose linear order can also be obtained by using deterministic linear-time selection \cite{BFPRT73} in place of a random pivot (this is a statement about this specific algorithm, not a claim that arbitrary randomized oracle algorithms can be derandomized for free), and its content is the explicit constants and the exact integer mistake bound; the selection procedure, the constants and the convention correction are in Appendix~\ref{sec:proofD}. Lemma~E.4 (a truncation lemma for expected budgets) is ours. The constant $128$ in Theorem~\ref{thm:C} is not optimized. The finite counterexamples in Theorem~\ref{thm:B} are established analytically.
\end{remark}

\section{Discussion and open problems}
\label{sec:open}

The results answer the question of AHR25 for thresholds on an unknown order, in the transductive protocol and with the consistency-type ERM interface, in the following sense: for a fixed natural oracle the deterministic query complexity of $O(\log T)$ mistakes is at least $T-\varepsilon-O(\log T)$, hence $\Theta(T)$, and the randomized one is $\Theta(\log T)$; the same linear bound holds against adversarial memoryless oracles; and the gap is governed by the oracle's selection rule. The general question, for arbitrary classes and in the non-transductive setting, remains open, as do the following.
\begin{itemize}
\item \textbf{Characterize favourable selection rules.} Determine which pre-declared ERM rules permit deterministic logarithmic-query, logarithmic-mistake learning. The one-pin common-value property (Section~\ref{sec:det}) is a sufficient condition for a linear lower bound, satisfied by the extremal rules; endpoint versus non-endpoint selection alone does not characterize the complexity, since neither the feasible-median rule nor the global-median rule always returns an endpoint of the feasible interval, yet the former admits deterministic $O(\log T)$ calls and mistakes while the latter forces linear total cost (Theorem~\ref{thm:rule}). A parameter measuring how much a rule can shrink the feasible set in one call, related to the deterministic mistake--query complexity even one-sidedly, would turn the three examples into a theorem.
\item \textbf{The middle regime.} Determine $M^*_E(T,Q)$ for $\lfloor T/2\rfloor<Q<2(T-1)$; the value is not $\max(T-Q,\lfloor\log_2(T+1)\rfloor)$ (Theorem~\ref{thm:B}(iii)).
\item \textbf{Constants.} Improve the constant $128$ in Theorem~\ref{thm:C}.
\item \textbf{Beyond thresholds.} Extend the separation to arbitrary Littlestone classes and to the non-transductive setting, which is the general form of the open question of AHR25. The charging and fixation steps of Section~\ref{sec:det} and the paid-potential device of Section~\ref{sec:rand} are not threshold-specific; the one-pin lemma and the interval-tree distribution are.
\end{itemize}

\appendix

\section{Proof of Theorem~\ref{thm:Aprime} and Theorem~\ref{thm:rule}}
\label{sec:proofAprime}
\subsection*{1. Statement and conventions}

All lemmas below are proved here. We use the consistency-ERM interface of Section 2, including full-vector returns and unrestricted finite query samples. No weak-consistency interface is used.

For a finite ordered domain, write:

\begin{itemize}
\item \(\mathcal O_{\min}\): return the smallest consistent prefix, or \(\bot\) if none exists;
\item \(\mathcal O_{\max}\): return the largest consistent prefix, or \(\bot\) if none exists.
\end{itemize}

Under convention \textbf{E}, the domain is
\[
X=\{s,x_1,\ldots,x_T\},
\qquad s\prec x\quad(x\ne s).
\]
Thus every concept labels \(s\) by \(1\), and the smallest prefix is \(\{s\}\). This is the rule fixed in Theorem~\ref{thm:Aprime}.

Under convention \textbf{N}, the domain is \(X=\{x_1,\ldots,x_T\}\), and prefixes must be nonempty. In particular, when a query has no positive labels, \(\mathcal O_{\min}\) returns the singleton containing the order-minimum if that singleton is consistent, and otherwise returns \(\bot\). This is the appropriate nonempty-prefix version of the same rule.

\subsubsection*{Statement of Theorem~\ref{thm:Aprime} (fixed extremal ERM rules)}

Fix either \(\mathcal R=\mathcal O_{\min}\) or \(\mathcal R=\mathcal O_{\max}\), publicly and before choosing the learner or instance. For every deterministic learner \(\mathcal A\), and every prescribed sequence of distinct instances \(x_1,\ldots,x_T\), there exist a total order \(\preceq^\star\) and an endpoint \(z^\star\in X\) such that
\[
M\!\left(\mathcal A,\mathcal R(\cdot,\preceq^\star),
                   x_{1:T},c_{z^\star}\right)
+
Q\!\left(\mathcal A,\mathcal R(\cdot,\preceq^\star),
                   x_{1:T},c_{z^\star}\right)
\ge T-\varepsilon,
\]
where \(\varepsilon=0\) under E and \(\varepsilon=1\) under N.

The oracle is the originally declared function \(\mathcal R\); no return-value table is selected after constructing the instance.

The statement also holds for every predeclared sample-dependent extremal selector
\[
\mathcal R_\eta(S,\preceq)=
\begin{cases}
\mathcal O_{\min}(S,\preceq),&\eta(S)=0,\\
\mathcal O_{\max}(S,\preceq),&\eta(S)=1,
\end{cases}
\tag{A.1}
\]
where \(\eta\) depends on \(S\), not on the unknown order or interaction history.

We first prove the theorem for \(\mathcal O_{\min}\).

\medskip

\subsection*{2. Anchored states}

Use a distinguished anchor \(\sigma\), which is constrained to be the order-minimum and therefore always has target label \(1\).

\begin{itemize}
\item Under E, take \(\sigma=s\). It is not an online instance.
\item Under N, take \(\sigma=x_1\). It is an online instance whose label is fixed to \(1\).
\end{itemize}

Let
\[
V=X\setminus\{\sigma\},\qquad n=|V|=T-\varepsilon.
\]

The construction maintains ordered lists \(L,R\) and an unordered set \(F\), partitioning \(V\). Its retained orders are
\[
\Omega(L,F,R)
=
\left\{
  \sigma\cdot L\cdot\pi(F)\cdot R:
  \pi(F)\text{ is any permutation of }F
\right\}.
\tag{A.2}
\]

The retained order--target pairs are
\[
\Gamma(L,F,R)
=
\left\{
\left(
 \sigma\cdot L\cdot\pi(F)\cdot R,\,
 \mathbf 1_{\{\sigma\}\cup L\cup\{\pi_1,\ldots,\pi_k\}}
\right):
0\le k\le |F|
\right\}.
\tag{A.3}
\]

Thus:

\begin{itemize}
\item every point of \(L\) is committed to target label \(1\);
\item every point of \(R\) is committed to target label \(0\);
\item the target may cut anywhere through the freely permutable block \(F\).
\end{itemize}

Commitment does not mean that the label has already been revealed. In particular, query-pinned points can be assigned their eventual target labels immediately. Hence no separate blocks of pinned points are needed: pinned points are absorbed directly into \(L\) or \(R\).

Initially,
\[
L=R=\varnothing,\qquad F=V.
\]

\subsubsection*{Endpoint refinements}

There are two permitted one-point refinements:

\[
\begin{array}{ll}
\text{left pin of }p\in F:
 &(L,F,R)\longmapsto(L\cdot p,F\setminus\{p\},R),\\[1mm]
\text{right pin of }p\in F:
 &(L,F,R)\longmapsto(L,F\setminus\{p\},p\cdot R).
\end{array}
\tag{A.4}
\]

Both preserve nonemptiness and satisfy
\[
\Gamma_{\mathrm{new}}\subseteq\Gamma_{\mathrm{old}},
\qquad
\Omega_{\mathrm{new}}\subseteq\Omega_{\mathrm{old}}.
\tag{A.5}
\]

Indeed, a left pin restricts the old free permutation to \(p\cdot\pi(F\setminus\{p\})\) and restricts the target to include \(p\). A right pin restricts it to \(\pi(F\setminus\{p\})\cdot p\) and restricts the target to exclude \(p\). These were already allowed choices in (A.3).

No relative order among the remaining free points is imposed.

\medskip

\subsection*{3. Prefix realizability and the common-value query lemma}

\subsubsection*{Lemma A.1 --- Prefix criterion}

Suppose \(\sigma\) is the minimum. Reject immediately if a sample \(S\) is contradictory or contains \((\sigma,0)\). Otherwise define
\[
A=\{x\ne\sigma:(x,1)\in S\},
\qquad
B=\{x:(x,0)\in S\}.
\]

Then \(S\) is realizable if and only if there are no \(b\in B,a\in A\) with \(b\prec a\). When realizable,
\[
\mathcal O_{\min}(S,\preceq)=
\begin{cases}
\mathbf 1_{\{\sigma\}},&A=\varnothing,\\
\mathbf 1_{\{x:x\preceq\max_\preceq A\}},&A\ne\varnothing.
\end{cases}
\tag{A.6}
\]

\textbf{Proof.} A prefix containing \(a\) must contain every \(b\prec a\), proving necessity. Conversely, if \(A\ne\varnothing\), the prefix through \(\max A\) contains all positives and no negatives. If \(A=\varnothing\), the singleton anchor is consistent. These are the smallest possible consistent prefixes. \qed

For a current state, write
\[
b\triangleleft a
\quad\Longleftrightarrow\quad
b\prec a\text{ in every order in }\Omega(L,F,R).
\tag{A.7}
\]
This relation is determined by the block order and the fixed internal orders of \(L,R\). Two distinct free points are incomparable under \(\triangleleft\).

\subsubsection*{Lemma A.2 --- One-pin common-value query lemma}

For every state and every query \(S\), the construction can make at most one endpoint pin so that
\[
\mathcal O_{\min}(S,\preceq)
\]
has exactly the same full-vector value for every order retained after the pin.

\textbf{Proof.}

Contradictory samples and samples containing \((\sigma,0)\) have the common answer \(\bot\), with no refinement. Otherwise use \(A,B\) from Lemma A.1.

\paragraph{Case I: an inversion is already forced}

If some \(b\in B,a\in A\) satisfy \(b\triangleleft a\), answer \(\bot\) without changing the state.

Lemma A.1 makes this the required answer for every retained order.

\paragraph{Case II: both label sides contain free points}

Suppose Case I does not apply and
\[
A\cap F\ne\varnothing,\qquad B\cap F\ne\varnothing.
\]

Choose \(b\in B\cap F\), using the smallest arrival index to make the construction explicit, and left-pin \(b\).

Because the sample is not contradictory, any \(a\in A\cap F\) is distinct from \(b\). Every new retained order has \(b\prec a\). Therefore its prescribed oracle answer is \(\bot\).

Exactly one free point was pinned.

\paragraph{Remaining cases}

Now suppose neither Case I nor Case II applies.

Every positive--negative pair has at least one endpoint outside \(F\), so its relative order is already fixed. Absence of a forced inversion therefore implies that the sample is realizable in every current retained order. It remains to stabilize the \emph{selected smallest prefix}.

There are three exhaustive possibilities.

\textbf{Case III-R: \(A\cap R\ne\varnothing\).}

Let \(p\) be the last point of \(A\cap R\) in the fixed list \(R\). In every retained order,
\[
p=\max_\preceq A.
\]
Its down-set is the fixed set
\[
D=\{\sigma\}\cup L\cup F\cup R_{\le p},
\tag{A.8}
\]
where \(R_{\le p}\) is the initial segment of \(R\) ending at \(p\).

Answer \(\mathbf1_D\), without a pin.

This case includes samples having positives in both \(F\) and \(R\).

\textbf{Case III-F: \(A\cap R=\varnothing\) and \(A\cap F\ne\varnothing\).}

Case II's exclusion gives \(B\cap F=\varnothing\). Case I's exclusion gives \(B\cap L=\varnothing\), since every left point precedes every free positive. Thus
\[
B\subseteq R.
\tag{A.9}
\]

Choose \(p\in A\cap F\) by smallest arrival index and right-pin it:
\[
R_{\mathrm{new}}=p\cdot R_{\mathrm{old}}.
\]

The point \(p\) is after every remaining free point but \textbf{before every old right point}. Since there are no positives in the old right block,
\[
p=\max_\preceq A
\quad\text{for every new retained order}.
\]
All negatives are in the old right block and hence follow \(p\). The required answer is therefore
\[
\mathbf1_D,\qquad
D=\{\sigma\}\cup L\cup F_{\mathrm{old}}.
\tag{A.10}
\]

This is one common full vector, obtained with one pin.

\textbf{Case III-L: \(A\subseteq L\).}

If \(A=\varnothing\), answer \(\mathbf1_{\{\sigma\}}\). Otherwise let \(p\) be the last point of \(A\) in \(L\), and answer
\[
\mathbf1_{\{\sigma\}\cup L_{\le p}}.
\tag{A.11}
\]

No pin is needed.

These cases exhaust all queries and establish the assertion. \qed

\medskip

\subsection*{4. Labels, invariants, and charging}

\subsubsection*{Lemma A.3 --- Strong transcript invariant}

The construction can maintain all of the following:

\begin{enumerate}
\item \(\Gamma(L,F,R)\) is nonempty.
\item Every retained order--target pair agrees with every label already committed by the construction.
\item Every previous oracle answer is exactly the value of the predeclared \(\mathcal O_{\min}\) on \textbf{every} order in the current \(\Omega(L,F,R)\).
\item Every point still in \(F\) has not yet been predicted.
\end{enumerate}

At a prediction on a free point, the construction can force an error while preserving these properties.

\textbf{Proof.}

The assertions hold initially. Queries are handled by Lemma A.2. Their refinements preserve nonemptiness and nesting by (A.5). Consequently all earlier common-value assertions remain true, and the new answer has the same property.

At a prediction event:

\begin{itemize}
\item if \(x=\sigma\) or \(x\in L\), commit \(y=1\);
\item if \(x\in R\), commit \(y=0\);
\item if \(x\in F\), observe the deterministic prediction \(\widehat y\), set
\end{itemize}
  \[
  y=1-\widehat y,
  \]
  and left-pin \(x\) when \(y=1\), or right-pin \(x\) when \(y=0\).

The last operation is an endpoint refinement. It leaves a nonempty subset of previously retained order--target pairs, all having the newly committed label. It also incurs an error and removes the predicted point from \(F\).

A label may be committed in this offline simulation at the prediction event. Lemma A.5 below turns the result into an actual oblivious instance, so the final interaction obeys the label-before-prediction requirement of the transductive protocol. \qed

\subsubsection*{Lemma A.4 --- Additive charging}

If all \(T\) predictions are completed, then
\[
M+Q\ge n=T-\varepsilon.
\tag{A.12}
\]

\textbf{Proof.}

Let \(p\) be the number of points removed from \(F\) by queries. Lemma A.2 gives
\[
p\le Q.
\]
Every other initially free point is removed at its prediction event and incurs an error. Since every point in \(V\) eventually appears,
\[
M\ge n-p.
\]
Adding the inequalities gives (A.12).

Equivalently, the initial potential is \(\Phi_0=|F|=n\); a query decreases it by at most one, and a prediction decreases it only when an error occurs. At completion, \(\Phi=0\). \qed

\medskip

\subsection*{5. Oblivious fixation and memorylessness}

\subsubsection*{Lemma A.5 --- Finite fixation and replay}

The construction yields a single fixed order and target with the claimed cost, using the originally declared memoryless oracle.

\textbf{Proof.}

If \(n=0\), the assertion is immediate. Otherwise simulate the deterministic learner, stopping at the first of:

\begin{enumerate}
\item completion of all \(T\) predictions; or
\item completion of the \(n\)-th query.
\end{enumerate}

This uses at most \(n\) query events and \(T\) prediction events. In particular, a learner that would make infinitely many queries is covered by the second stopping condition.

Let the final retained state be \((L,F,R)\). Order the remaining \(F\) by arrival index, obtaining \(\pi_0(F)\), and define
\[
\preceq^\star
=
\sigma\cdot L\cdot\pi_0(F)\cdot R,
\tag{A.13}
\]
and
\[
z^\star=
\begin{cases}
\text{last point of }L,&L\ne\varnothing,\\
\sigma,&L=\varnothing.
\end{cases}
\tag{A.14}
\]

This is a retained order--target pair: it chooses the cut immediately after \(L\). Hence its target agrees with all committed labels.

Now run the learner from the beginning on this fixed order, target, and the fixed oracle
\[
S\longmapsto\mathcal O_{\min}(S,\preceq^\star).
\]

Induct over the recorded events. Determinism implies that the next query or prediction is unchanged whenever the preceding transcript is unchanged. For every recorded query \(S\), Lemma A.3 gives
\[
\text{recorded answer}
=
\mathcal O_{\min}(S,\preceq^\star),
\]
including the entire returned vector. Recorded labels also agree with \(c_{z^\star}\). Thus the interaction replays exactly through the stopping event.

If all predictions were completed, Lemma A.4 applies. Otherwise the replay already contains \(n\) queries, so \(Q\ge n\).

The fixed oracle is legal on \textbf{every} sample by its definition and Lemma A.1, including samples outside the recorded transcript. No answer cache or oracle-extension table is needed.

Repeated queries are automatically consistent: every later retained family is a nonempty subset of the earlier one, and a deterministic function cannot have two different values on the same sample and the same retained order.

Finally, (A.13)--(A.14) are selected before the actual replay. The actual adversary is therefore oblivious as required in Section 2. \qed

Lemmas A.1--A.5 prove Theorem~\ref{thm:Aprime} for \(\mathcal O_{\min}\).

\medskip

\subsection*{6. The maximal-prefix rule and other extremal selectors}

\subsubsection*{Lemma A.6 --- Anchored reversal-complement duality}

Keep the anchor fixed. Reverse the order of \(V\), complement every label on \(V\), and leave the anchor label unchanged.

Denote this transformation by \(D\). For a concept,
\[
(Dh)(\sigma)=1,\qquad (Dh)(x)=1-h(x)\quad(x\in V).
\]
For samples, labels on \(V\) are complemented, while labels on \(\sigma\) are unchanged; let \(D\bot=\bot\).

Then
\[
D\!\left(\mathcal O_{\max}(S,\preceq)\right)
=
\mathcal O_{\min}(DS,D\preceq).
\tag{A.15}
\]

\textbf{Proof.}

An anchored prefix containing \(k\) points of \(V\) becomes an anchored prefix containing \(n-k\) points in the reversed order. This is a consistency-preserving bijection between the two concept classes, reversing inclusion. Consequently, the largest consistent prefix becomes the smallest consistent prefix. Unrealizable samples remain unrealizable. \qed

Given a deterministic learner for \(\mathcal O_{\max}\), simulate it using \(\mathcal O_{\min}\) through (A.15), complementing predictions and revealed labels on \(V\), and leaving the anchor unchanged. Queries and errors are preserved exactly. Applying the already proved anchored minimum-rule theorem proves Theorem~\ref{thm:Aprime} for the maximum rule.

This also handles N correctly: the actual minimum instance is kept as an always-positive anchor. Ordinary reversal and complementation of \textbf{all} points would incorrectly introduce an empty concept.

For completeness, duality maps a state to
\[
D(L,F,R)=(\operatorname{reverse}R,\ F,\ \operatorname{reverse}L).
\tag{A.16}
\]
Thus the maximum rule has the same one-pin common-value property. For the selector (A.1), \(\eta(S)\) is fixed independently of the compatible order, so one may apply the minimum or maximum one-pin procedure separately at each query. The remainder of the proof is unchanged.

More generally, the proof applies to any fixed legal rule satisfying the following sufficient condition:

\begin{quote}\textbf{One-pin common-value property:} On every anchored block state and every sample, some refinement by zero or one endpoint pin makes the rule's full answer constant on all retained orders.\end{quote}

This is a sufficient condition, not a claimed characterization of every possible rule.

\medskip

\subsection*{7. Sharpness and relation to AHR25}

The total-cost lower bound is exact:
\[
\inf_{\mathcal A\ {\rm deterministic}}
\sup_{\preceq,z}
\bigl(M_{\mathcal A}+Q_{\mathcal A}\bigr)
=
T-\varepsilon
\tag{A.17}
\]
for either extremal rule.

The matching upper bounds require no oracle calls:

\begin{itemize}
\item under E, any fixed predictions make at most \(T\) errors;
\item under N, always predicting \(1\) makes at most \(T-1\) errors, since every nonempty prefix labels at least one instance by \(1\).
\end{itemize}

Thus (A.17) includes both an upper-bound algorithm and the lower bound; it is not an assertion about an unproved exact fixed-\(Q\) tradeoff.

Our common-value lemma establishes a one-point property for genuine total-order thresholds and a fixed natural ERM selection rule; it does \textbf{not} assert that the returned vector contains only one bit. Theorem~4.4 of AHR25 is not invoked here; its statement, and why its hypothesis fails for the threshold family, are discussed once in Appendix~\ref{sec:proofA}.

By Theorem~\ref{thm:Aprime}, a deterministic learner with \(O(\log T)\) worst-case errors under \(\mathcal O_{\min}\) must make
\[
T-\varepsilon-O(\log T)
\]
queries on some fixed instance. AHR25 (Thm.~4.5(4)) supplies the randomized \(O(\log T)\)-error, \(O(\log T)\)-expected-query upper bound under the same fixed rule, since that algorithm works for arbitrary legal ERM returns. This establishes the intended separation without adversarial tie-breaking.

\medskip

\subsection*{8. Arbitrary predeclared rules: a counterexample}

The universal extension
\[
\text{``Theorem~\ref{thm:Aprime} holds for every fixed function of }(S,\preceq)\text{''}
\]
is false.

\subsubsection*{8.1 Median of the feasible prefixes}

Index concepts by the number \(k\) of positive online-domain points:

\begin{itemize}
\item E: \(k\in\{0,\ldots,T\}\), with the sentinel always positive;
\item N: \(k\in\{1,\ldots,T\}\).
\end{itemize}

For any sample, its feasible cutoff indices form an integer interval
\[
I(S,\preceq)=\{\ell,\ell+1,\ldots,u\},
\]
or are empty. Define the predeclared rule
\[
\mathcal R_{\mathrm{feas}}(S,\preceq)=
\begin{cases}
\bot,&I(S,\preceq)=\varnothing,\\
\text{prefix of cutoff }\left\lfloor\dfrac{\ell+u}{2}\right\rfloor,
&\text{otherwise}.
\end{cases}
\tag{A.18}
\]

This rule depends only on \(S\) and the order. In particular, it does not know a learner's state or free block.

\subsubsection*{Proposition A.7 --- Deterministic logarithmic learning under (A.18)}

Under either convention, for \(T\ge2\), there is a deterministic learner satisfying, for every order and target,
\[
Q\le\lceil\log_2T\rceil,\qquad
M\le\lceil\log_2T\rceil.
\tag{A.19}
\]

\textbf{Proof.}

Maintain a decomposition of the instance set into
\[
H_1\prec U\prec H_0,
\]
where \(H_1\) is an order-prefix known to have target label \(1\), \(H_0\) is an order-suffix known to have target label \(0\), and \(U\) is the intervening interval. Initially \(U\) is the entire instance set.

For \(|U|\ge2\), query
\[
S=(H_1\times\{1\})\cup(H_0\times\{0\}).
\tag{A.20}
\]
This sample is consistent. Let \(h\) be the returned concept and set
\[
U_1=\{x\in U:h(x)=1\},\qquad U_0=U\setminus U_1.
\]

Write \(a=|H_1|\), \(m=|U|\). Under E, the feasible cutoff interval is exactly \([a,a+m]\), so \(|U_1|=\lfloor m/2\rfloor\). Under N the same holds when \(a>0\); when \(a=0\), the feasible interval is \([1,m]\), so \(|U_1|=\lceil m/2\rceil\). Therefore
\[
1\le |U_0|,|U_1|\le\lceil m/2\rceil
\qquad(m\ge2).
\tag{A.21}
\]

Predict with \(h\) until an error occurs.

\begin{itemize}
\item If \(h(x)=0\) but \(y=1\), then every point in \(U_1\) precedes \(x\) and must have target label \(1\). Replace
\end{itemize}
  \[
  H_1\leftarrow H_1\cup U_1,\qquad U\leftarrow U_0.
  \]
\begin{itemize}
\item If \(h(x)=1\) but \(y=0\), then every point in \(U_0\) follows \(x\) and must have target label \(0\). Replace
\end{itemize}
  \[
  H_0\leftarrow U_0\cup H_0,\qquad U\leftarrow U_1.
  \]

Thus each error reduces the active interval to at most half, rounded up. If the new interval is a singleton, it consists of the just-mistaken point, whose label is now known; all target labels are then determined. Otherwise make the next query before the next prediction.

After \(j\) such errors,
\[
|U|\le \left\lceil\frac{T}{2^j}\right\rceil.
\]
Consequently there are at most \(\lceil\log_2T\rceil\) errors and at most that many queries. For \(T=1\), no queries and at most one error suffice under E; under N, predict \(1\) without error.

Past labels inside \(U\) need not be included in (A.20). The interface of Section 2 allows arbitrary samples; a queried hypothesis is not required to fit observations omitted from that sample. \qed

At \(T=8\), this gives one explicit deterministic learner with
\[
M+Q\le6
\]
for \textbf{every} order and target, under both conventions. Since \(6<8\) under E and \(6<7\) under N, the all-rules extension is refuted with the required quantifiers.

These are rule-specific upper bounds. No balance is assumed of an arbitrary consistency-ERM return.

\subsubsection*{8.2 The different ``closest to the global median'' rule}

A different natural rule chooses the feasible cutoff closest to a fixed global median. This is different from (A.18).

Let
\[
k_0=\begin{cases}0,&E,\\1,&N,\end{cases}
\qquad
m=\left\lfloor\frac{k_0+T}{2}\right\rfloor.
\]
Define \(\mathcal R_{\mathrm{global}}(S,\preceq)=\bot\) when \(I(S,\preceq)=\varnothing\) (this covers contradictory samples and, under E, a negative sentinel); otherwise, when the feasible interval is \([\ell,u]\), let it return the prefix of cutoff
\[
\operatorname{proj}_{[\ell,u]}(m)=\min\{u,\max\{\ell,m\}\}.
\tag{A.22}
\]

This rule also fails the full inequality of Theorem~\ref{thm:Aprime}(a), but \textbf{does not} admit simultaneous \(O(\log T)\) mistakes and queries.

\paragraph{Upper bound sufficient to refute the bound of Theorem~\ref{thm:Aprime}(a)}

One empty-sample query returns a prefix having \(m\) positive instances. Predict with that concept until the first error.

If an error occurs, the target uncertainty is confined to the side containing the mistaken point: the opposite side has a forced target label. Thereafter, arbitrary predictions inside the uncertain side incur at most one error per remaining point.

Hence
\[
Q=1,\qquad M\le\max(m,T-m)=\lceil T/2\rceil.
\tag{A.23}
\]
This argument is valid for any returned threshold, with its actual side sizes. Balance enters only through the declared rule's empty-query output.

For \(T=8\), (A.23) gives \(M+Q\le5\), refuting the bound of Theorem~\ref{thm:Aprime}(a) under both conventions.

\paragraph{Linear lower bound for the global-median rule}

For every deterministic learner, some instance satisfies
\[
M+Q\ge T-m.
\tag{A.24}
\]

To prove this, let \(r=T-m\). If \(r=0\) the claim is \(M+Q\ge0\) and holds trivially, so assume \(r\ge1\). If some instance in the family constructed below makes the original learner issue at least \(r\) queries (possibly infinitely many), then (A.24) already holds on that instance; otherwise the original learner makes at most \(r-1\) queries on every instance of the family, and the per-query simulation below yields a derived learner that completes the protocol. Fix an ordered block \(H\) of \(m\) instances before a block \(U\) of \(T-m\) instances whose internal order is unknown. Restrict targets to label all of \(H\) positively, with an arbitrary, possibly empty, prefix target on \(U\). The order of \(H\) can even be revealed to the learner.

An interaction with \(\mathcal R_{\mathrm{global}}\) can be simulated using an anchored \(\mathcal O_{\min}\) oracle on \(U\), with at most one minimum-oracle call per original query:

\begin{enumerate}
\item Contradictions and an illegal negative sentinel are rejected directly.
\item If the sample has a negative in \(H\) and a positive in \(U\), it is unrealizable.
\item If it has a negative in \(H\) but no positive in \(U\), any feasible cutoff lies strictly before \(m\). Its largest feasible cutoff, or infeasibility, is computable entirely inside the known ordered block \(H\).
\item If it has neither a negative in \(H\) nor a positive in \(U\), cutoff \(m\) is feasible and is returned.
\item Otherwise it has a positive in \(U\) and no negative in \(H\). Every feasible cutoff exceeds \(m\), so (A.22) selects the \textbf{minimum} feasible cutoff. One minimum-oracle query on the restriction of the sample to \(U\) produces exactly that answer; extend it by assigning \(1\) to all of \(H\).
\end{enumerate}

Simulate predictions on \(H\) using their known labels, and use the learner's predictions on \(U\) as those of the derived minimum-oracle learner. The derived learner has no more queries and no more mistakes than the original learner.

Applying the already proved E-convention minimum-rule theorem to \(U\), with a virtual anchor, proves (A.24). Under N, the nonempty block \(H\) supplies the anchor when embedding the resulting target back into the original instance.

Since \(T-m\ge\lfloor T/2\rfloor\), a deterministic learner cannot simultaneously achieve \(O(\log T)\) mistakes and \(O(\log T)\) queries under (A.22).

Thus the precise conclusions are:

\begin{center}\small\begin{tabular}{p{0.45\linewidth}p{0.45\linewidth}}\hline
Predeclared rule & Deterministic conclusion \\ \hline
Minimum / maximum / sample-dependent extremal selectors & Exact worst-case minimax total cost \(T-\varepsilon\) \\
Closest feasible cutoff to a fixed global median & Linear total-cost lower bound; full \(T-\varepsilon\) bound is false \\
Median of the feasible cutoffs & Deterministic \(O(\log T)\) queries and mistakes \\
\hline\end{tabular}\end{center}

The general-rule counterexamples do not refute Theorem~\ref{thm:A} or Theorem~\ref{thm:Aprime}: they change the publicly declared oracle rule.

\medskip

\section{Proof of Theorem~\ref{thm:A}}
\label{sec:proofA}
\subsection*{Theorem~\ref{thm:A}: an additive deterministic lower bound with a fixed memoryless ERM oracle}

\textbf{Conventions and quantifiers.} Let \(T\ge1\), and announce the distinct instances
\[
I_T=\{x_1,\ldots,x_T\}
\]
in the fixed arrival sequence \((x_1,\ldots,x_T)\). Consider either of the following conventions (Section 2):

\begin{enumerate}
\item \textbf{Empty-prefix convention:} \(X=I_T\cup\{s\}\), where \(s\) is placed before every instance. Put \(\varepsilon=0\).
\item \textbf{Nonempty-prefix convention:} \(X=I_T\). Put \(\varepsilon=1\).
\end{enumerate}

For every deterministic learner \(\mathcal A\), there exist a total order \(\preceq^*\) on \(X\), a point \(z^*\in X\), and a function
\[
O^*:2^{X\times\{0,1\}}
   \longrightarrow C_{\preceq^*}\cup\{\bot\}
\]
such that:

\begin{itemize}
\item \(O^*\) is a legal consistency-type ERM oracle on \textbf{every} sample \(S\);
\item \(O^*(S)\) depends only on \(S\), not on the time, query history, or target;
\item the order, target, instance sequence, and oracle function are all fixed before the actual interaction; and
\item on this one fixed instance,
\end{itemize}
  \[
  \boxed{M(\mathcal A)+Q(\mathcal A)\ge T-\varepsilon.}
  \]

Here the learner may submit arbitrary samples, including contradictory samples, samples involving previously labeled or future instances, and samples involving the non-instance \(s\). It may query at arbitrary times and receives complete concept label vectors, as stipulated in the model of Section~2.

Thus the additive constant is \(0\) under the empty-prefix convention and \(1\) under the nonempty-prefix convention.

The proof is self-contained below.

\medskip

\subsection*{1. A common anchored formulation}

For both conventions, introduce an anchor \(a\) and a set \(V\) of initially free instances:

\[
\begin{array}{c|c|c|c}
\text{Convention}&a&V&n:=|V|\\ \hline
\text{Empty prefix}&s&I_T&T\\
\text{Nonempty prefix}&x_1&I_T\setminus\{x_1\}&T-1
\end{array}
\]

We commit to placing \(a\) first in the total order. Every concept consequently labels \(a\) by \(1\).

In the nonempty-prefix convention, choosing \(x_1\) as the least element is simply an allowed adversarial choice of order. The argument would remain valid even if this information were given to the learner for free.

A state consists of:

\begin{itemize}
\item an ordered list \(L\);
\item an unordered set \(F\);
\item an ordered list \(R\);
\end{itemize}

partitioning \(V\). Its compatible orders are
\[
\mathcal E(L,F,R)
 =
 \bigl\{\,a\cdot L\cdot\pi(F)\cdot R:
             \pi(F)\text{ is a permutation of }F\,\bigr\}.
\tag{B.1}
\]

Points in \(L\) are committed to target label \(1\), and points in \(R\) to target label \(0\). They need not already have arrived. Points in \(F\) have not arrived and remain free.

We use a list also for its underlying set when taking unions. For a nonempty \(H\subseteq X\), write \(h_H=\mathbf 1_H\).

All lemmas below apply to \textbf{both conventions}, with \(n=T-\varepsilon\).

\subsubsection*{Lemma B.1 --- Prefix consistency criterion}

Suppose a total order \(\tau\) has \(a\) as its least element. For a sample \(S\), define
\[
P(S)=\{x:(x,1)\in S\},\qquad
N(S)=\{x:(x,0)\in S\}.
\]

If \(P(S)\cap N(S)\ne\varnothing\), or if \(a\in N(S)\), then \(S\) is not realizable by \(C_\tau\).

Otherwise put
\[
A=P(S)\setminus\{a\},\qquad B=N(S).
\]
Then \(S\) is realizable if and only if there is no pair \(b\in B,a'\in A\) with
\[
b\prec_\tau a'.
\tag{B.2}
\]

\textbf{Proof.} Contradictory labels cannot be realized (Section 2). Every nonempty prefix contains the least point \(a\), so a negative label on \(a\) is also impossible.

An inversion as in (B.2) prevents any prefix from containing \(a'\) while excluding \(b\). Conversely, if there is no inversion and \(A\ne\varnothing\), the prefix ending at \(\max_\tau A\) contains every positive sample point and no negative sample point. If \(A=\varnothing\), the singleton prefix \(\{a\}\) is consistent. \qed

Notice that the oracle tests realizability by the \textbf{class}, not agreement with the target. For example, a positive singleton query remains realizable even when that point has already received target label \(0\).

\medskip

\subsection*{2. The one-pin query rule}

The adversary keeps an initially empty table \(\mathsf{Ans}\), keyed by the entire sample set \(S\).

The partial order \(\prec_K\) of a state \(K=(L,F,R)\) fixes the internal orders of \(L,R\), puts \(L\) before \(F\) before \(R\), and leaves distinct points of \(F\) incomparable.

A useful representation is the ordered list of \textbf{atoms}
\[
W_K=(\ell_1,\ldots,\ell_p,\,[F],\,r_1,\ldots,r_q),
\tag{B.3}
\]
omitting \([F]\) when \(F=\varnothing\). The entire free set is one atom.

\subsubsection*{Lemma B.2 --- Exhaustive query handling with at most one pin}

For every state \(K\) and sample \(S\), the following rule either returns an answer already cached, or constructs a new answer legal for every order in the resulting state. A fresh query removes at most one point from \(F\).

\textbf{Rule and proof.}

\textbf{Cached sample.} If \(S\in\operatorname{dom}(\mathsf{Ans})\), return its stored answer without changing the state. Preservation of this answer is proved in Lemma B.3.

\emph{Why this rule needs the cache.} Let initially \(F=\{u,v\}\) and let the first query be \(\{(u,1)\}\); case (iii) below may answer with the vector that is positive on both \(u\) and \(v\). Suppose \(u\) then arrives and is labelled \(1\), so that the state becomes \(L=[u]\), \(F=\{v\}\). If the same sample \(\{(u,1)\}\) were now treated as a fresh query, the rule may legally return the prefix \([u]\) alone. Both answers are legal, but the same sample would receive two different full vectors, so without the first-answer cache the transcript could not be frozen into a function of the sample alone (Lemma B.5). This is a property of the particular adaptive answering rule used here; the fixed-rule proof in Appendix~\ref{sec:proofAprime} needs no answer cache.

For a fresh sample, first handle contradictions or a negative label on \(a\) by returning \(\bot\), without changing the state. Otherwise strip the redundant positive label on \(a\), obtaining disjoint \(A,B\subseteq V\).

There are exactly three remaining cases.

\subsubsection*{(i) A forced inversion already exists}

If some \(b\in B,a'\in A\) satisfy
\[
b\prec_K a',
\]
return \(\bot\), with no state change.

Every order in \(\mathcal E(K)\) contains this inversion, so Lemma B.1 proves the answer legal.

\subsubsection*{(ii) Both labels occur in the free block}

Suppose there is no forced inversion, but
\[
A\cap F\ne\varnothing,\qquad B\cap F\ne\varnothing.
\]
Choose one \(b\in B\cap F\), breaking ties by the public arrival indices, and update
\[
L'=L\cdot b,\qquad F'=F\setminus\{b\},\qquad R'=R.
\tag{B.4}
\]
Return \(\bot\).

Because \(A,B\) are disjoint, some \(a'\in A\cap F\) is different from \(b\). Every new compatible order places \(b\) before this \(a'\). Hence the sample is unrealizable in every new compatible order.

Moreover,
\[
\mathcal E(L',F',R')\subseteq\mathcal E(L,F,R):
\]
the new orders are exactly old-compatible orders whose free permutation begins with \(b\). Exactly one free point is removed.

Committing the target label of \(b\) to \(1\), although \(S\) asks for label \(0\) there, is legitimate: the oracle is answering that \(S\) is \textbf{not} realizable by the chosen class. It is not required to make the target consistent with \(S\).

\subsubsection*{(iii) No forced inversion and no mixed-label free block}

Leave the state unchanged and return a prefix of the atom list (B.3):

\begin{itemize}
\item if \(A=\varnothing\), return \(h_{\{a\}}\);
\item otherwise, let \(j\) be the rightmost atom containing a point of \(A\), and return the indicator of
\end{itemize}
  \[
  H=\{a\}\cup\{\text{all points in atoms up to and including }j\}.
  \tag{B.5}
  \]

This contains every positive sample point. It contains no negative sample point: a negative point in an earlier atom would give a forced inversion, while a negative point in the same atom as a positive point could only occur inside \(F\), which is excluded in this case.

The set \(H\) contains either all or none of \(F\). Consequently it is a nonempty prefix of \textbf{every} order in \(\mathcal E(K)\).

Finally, store the new answer as \(\mathsf{Ans}[S]\).

\textbf{Exhaustiveness.} After contradictions are removed, two distinct sample points are incomparable precisely when both lie in \(F\). Thus, if neither (i) nor (ii) applies, every positive--negative pair is comparable and correctly ordered, or one side is empty. This is exactly case (iii). \qed

The claim is about preserving a completely free residual block---not about the number of bits in a returned concept. A reply may encode many comparisons concerning already committed points.

\medskip

\subsection*{3. Prediction rule and the full invariant}

The following is an \textbf{offline transcript-construction device}. Its conversion into a legal oblivious instance, including compliance with the label-before-prediction convention of the transductive protocol (Section 2), is proved in Lemma B.5.

When the deterministic learner produces prediction \(\widehat y\) on the next instance \(x\):

\begin{itemize}
\item if \(x=a\) or \(x\in L\), set \(y=1\), with no state change;
\item if \(x\in R\), set \(y=0\), with no state change;
\item if \(x\in F\), set
\end{itemize}
  \[
  y=1-\widehat y.
  \]
  Remove \(x\) from \(F\), and:
  \[
  \begin{cases}
  L\leftarrow L\cdot x,&y=1,\\
  R\leftarrow x\cdot R,&y=0.
  \end{cases}
  \tag{B.6}
  \]

\subsubsection*{Lemma B.3 --- Strong invariant, including all recorded answers}

Starting from \(L=R=\varnothing\), \(F=V\), the construction maintains all of the following after every event.

\begin{enumerate}
\item \textbf{Ordered partition and revealed labels.}   The lists \(L,R\) and set \(F\) partition \(V\). No point of \(F\) has arrived. Every revealed label is \(1\) on \(L\cup\{a\}\) and \(0\) on \(R\).
\end{enumerate}

\begin{enumerate}
\item \textbf{Universal validity of affirmative records.}   For every recorded affirmative answer \(h_H\), the same complete vector \(h_H\) is a concept in \(C_\tau\) consistent with its query, for every \(\tau\in\mathcal E(L,F,R)\).
\end{enumerate}

\begin{enumerate}
\item \textbf{Universal validity of negative records.}   For every query recorded as \(\bot\), that query is unrealizable by \(C_\tau\), for every \(\tau\in\mathcal E(L,F,R)\).
\end{enumerate}

\begin{enumerate}
\item \textbf{An entire interval of feasible targets.}   For every permutation \(\pi(F)\) and every \(k\in\{0,\ldots,|F|\}\), the prefix with positive set
\end{enumerate}
   \[
   \{a\}\cup L\cup\{\pi_1,\ldots,\pi_k\}
   \tag{B.7}
   \]
   agrees with every revealed label.

In particular, a consistent order and target always exist.

\textbf{Proof.} Initially there are no records or revealed labels, and every prefix in (B.7) is a valid nonempty prefix.

Every state-changing operation refines the compatible-order set:

\begin{itemize}
\item a query pin or a free prediction labeled \(1\) restricts the old free permutation to begin with the extracted point;
\item a free prediction labeled \(0\) restricts it to end with that point.
\end{itemize}

Thus previously valid affirmative and negative records remain valid in every new compatible order. New query records are valid by Lemma B.2.

For completeness, target feasibility also persists as a joint order--target statement, not merely as an order statement. If a point \(u\) is appended to \(L\), a new free permutation \(\pi'\) and cut \(k\) correspond to the old free permutation
\[
(u,\pi')
\]
and old cut \(k+1\). If \(u\) is prepended to \(R\), they correspond to the old permutation
\[
(\pi',u)
\]
and old cut \(k\). Hence every new pair in (B.7) was an admissible old pair. An extracted free point has not previously received a label, so the new commitment does not contradict history.

For prediction events, all new targets give the just-revealed label required by (B.6). An arriving point already in \(L\), \(R\), or the anchor has its committed label, so no change is needed.

A repeated query changes neither state nor table. Its stored answer is legal by items 2 or 3. This also proves the cache branch of Lemma B.2.

Finally, \(\mathcal E(L,F,R)\) is nonempty because every finite set \(F\), including the empty set, has a permutation; (B.7) always contains the anchor and therefore defines a valid concept. \qed

\textbf{A useful strengthening.} Every binary labeling of the current \(F\) remains feasible: order its desired \(1\)-points before its desired \(0\)-points and choose the corresponding cut in (B.7). Thus no hidden target-label restriction on the residual free points has been silently accumulated.

\medskip

\subsection*{4. Accounting}

\subsubsection*{Lemma B.4 --- Query pins pay for every possibly avoided free-point mistake}

Under either convention, let \(p\) be the number of points removed from \(F\) by queries, and let \(f\) be the number removed when they arrive. After all \(T\) predictions,
\[
n=p+f,\qquad p\le Q,\qquad f\le M.
\tag{B.8}
\]
Consequently,
\[
M+Q\ge n=T-\varepsilon.
\tag{B.9}
\]

\textbf{Proof.} The potential
\[
\Phi=|F|
\]
starts at \(n\) and ends at \(0\), because every point of \(V\) appears exactly once in the transductive protocol.

A query decreases \(\Phi\) by at most one. Cached queries, contradictory queries, existing-inversion queries, and affirmative replies do not decrease it. Thus \(p\le Q\).

A prediction decreases \(\Phi\) only if its own instance is still free. Such a prediction is wrong by (B.6). Hence \(f\le M\).

These are the only ways points leave \(F\), proving (B.8) and (B.9). The anchor's possible mistake in the nonempty-prefix convention is simply an additional, uncharged mistake. \qed

\medskip

\subsection*{5. Freezing into an oblivious instance}

\subsubsection*{Lemma B.5 --- Memoryless freezing and exact deterministic replay}

For either convention, every finite constructed transcript prefix can be frozen into a fixed order, target, and globally legal memoryless ERM function reproducing that prefix.

For a completed \(T\)-round construction, write the final state as \((L,\varnothing,R)\). An explicit freezing is
\[
\tau^*=a\cdot L\cdot R,
\qquad
z^*=\operatorname{last}(a\cdot L).
\tag{B.10}
\]
The fixed target therefore has positive set \(\{a\}\cup L\).

Let \(\tau^*=(u_1,\ldots,u_{n+1})\), and let \(H_j=\{u_1,\ldots,u_j\}\). Define
\[
O^*(S)=
\begin{cases}
\mathsf{Ans}[S],
   &S\in\operatorname{dom}(\mathsf{Ans}),\\[2mm]
h_{H_{j(S)}},
   &S\notin\operatorname{dom}(\mathsf{Ans})
       \text{ and a consistent prefix exists},\\[2mm]
\bot,
   &\text{otherwise},
\end{cases}
\tag{B.11}
\]
where
\[
j(S)=\min\{j\in\{1,\ldots,n+1\}:h_{H_j}\text{ is consistent with }S\}.
\tag{B.12}
\]

The table in (B.11) is an immutable parameter of the function, not mutable memory of calls during the actual interaction.

\textbf{Proof.}

\subsubsection*{Legality on every sample}

By Lemma B.3, every recorded affirmative vector is a prefix concept of the final order and is consistent with its recorded sample. Every recorded \(\bot\) sample is genuinely unrealizable by that same class.

For an unrecorded sample, (B.12) searches exactly all concepts of \(C_{\tau^*}\). It returns a consistent one precisely when one exists. This includes:

\begin{itemize}
\item \(S=\varnothing\), for which the search succeeds;
\item contradictory samples, for which it fails;
\item samples containing a negative label on the anchor, for which it fails.
\end{itemize}

Thus (B.11) is a legal consistency ERM oracle on its whole domain. Its value depends only on \(S\).

The returned objects are the stipulated complete label vectors. A vector recorded earlier may have had different possible endpoint names in different unfinished orders; after freezing it is exactly \(c_{\max_{\tau^*}H}\). No endpoint identity was additionally promised by the interface.

\subsubsection*{Target legality}

By Lemma B.3, the target in (B.10) agrees with every revealed label. It is a genuine concept, including when \(L=\varnothing\): then \(z^*=a\).

\subsubsection*{Replay}

Run the same deterministic learner on the fixed domain, announced sequence, target \(c_{z^*}\), and oracle \(O^*\).

Induct on successive interaction events. The initial input is identical. If the histories agree so far, determinism implies that the learner's next operation, query set, or prediction agrees with the constructed one.

\begin{itemize}
\item At a query event, (B.11) returns the recorded answer. This includes every repetition of a previously queried set.
\item At a label-revelation event, the fixed target returns exactly the constructed label.
\end{itemize}

The histories therefore remain identical. Complete returned vectors agree on every point of \(X\), so arbitrary evaluations of returned concepts also agree.

For a finite, nonterminal transcript prefix, choose any \(\tau^*\in\mathcal E(L,F,R)\) and any target cut from (B.7), then use the same definition (B.11). The same argument reproduces that prefix.

All adaptive choices have thus been compiled into one fixed triple. In its actual execution, each label \(c_{z^*}(x_t)\) is fixed before the learner predicts, as required by the transductive protocol. \qed

\textbf{Finiteness detail.} It is enough to construct a transcript until all \(T\) predictions occur or until \(n\) oracle calls have occurred. In the latter case, freeze the current finite prefix using Lemma B.5; its replay already has \(Q\ge n\). Otherwise the completed construction is finite. Thus an infinite-query behavior cannot obstruct the existential construction.

\subsubsection*{Completion of Theorem~\ref{thm:A} and sharpness of the constants}

Lemma B.4 gives the additive bound for the constructed transcript, and Lemma B.5 transfers it to one fixed oblivious instance.

The constants cannot be improved uniformly over \(T\ge1\):

\begin{itemize}
\item Under the main convention, at \(T=1\) a zero-query fixed-label predictor always has \(M+Q\le1\). Thus no smaller universal additive constant is possible.
\item Under the nonempty-prefix convention, at \(T=1\) the only target labels the sole instance by \(1\). Predicting \(1\) without querying gives \(M+Q=0\), ruling out a universal additive constant smaller than \(1\).
\end{itemize}

These witnesses certify only the additive constants; they are not an asserted general error--query upper tradeoff. \qed

\medskip

\subsection*{6. Separation and relation to AHR25}

\subsubsection*{Corollary --- Deterministic low-mistake learning needs nearly \(T\) queries}

Under the empty-prefix convention, suppose a deterministic learner guarantees at most \(m(T)\) mistakes uniformly over the allowed fixed instances and legal memoryless ERM oracles. Then
\[
Q_{\mathrm{worst}}(\mathcal A,T)\ge T-m(T).
\tag{B.14}
\]
Under the nonempty-prefix convention, the corresponding bound is
\[
Q_{\mathrm{worst}}(\mathcal A,T)\ge T-1-m(T).
\tag{B.15}
\]

\textbf{Proof.} Apply Theorem~\ref{thm:A}. On the instance it provides, \(M\le m(T)\), so \(Q\ge T-\varepsilon-m(T)\). \qed

In particular, a deterministic \(O(\log T)\)-mistake guarantee requires
\[
Q_{\mathrm{worst}}\ge T-\varepsilon-O(\log T).
\tag{B.16}
\]

By the randomized theorem of AHR25 (Thm.~4.5(4)), both expected mistakes and expected calls can instead be \(O(\log T)\). Consequently:

\begin{itemize}
\item randomized expected total cost \(M+Q\) is \(O(\log T)\);
\item every deterministic learner has worst-case total cost at least \(T-\varepsilon\).
\end{itemize}

Equivalently, using the dimension conventions of Section 2, the gap is exponential when parameterized by \(d=\Theta(\log T)\): logarithmic-in-\(T\) cost versus linear-in-\(T\) cost, or \(O(d+1)\) versus \(2^{\Omega(d)}\).

This establishes a positive answer to the open question of AHR25 about the power of randomization \textbf{within the threshold-family, consistency-ERM, oblivious-instance setting of this paper}.

\subsubsection*{Why Theorem 4.4 of AHR25 is not the proof of this result}

The statement of Theorem 4.4 of AHR25 is:

\begin{quote}``Let \(\mathcal F\) be the family of all classes with Littlestone dimension \(d_{\mathrm{LD}}\). For \(T\ge 2^{d_{\mathrm{LD}}+1}\), when having access to the ERM oracle, if fewer than \(T/2\) queries are made, at least \(2^{d_{\mathrm{LD}}}-1\) mistakes will be made.''\end{quote}

Its proof contains the sentence:

\begin{quote}``An essential property is that each query provides information about at most one point.''\end{quote}

The present proof implements a related one-point charging principle, but that theorem is \textbf{not invoked as a lower-bound lemma}. Its construction uses threshold classes on equivalence classes, rather than the present threshold classes on distinct totally ordered points. Moreover, for either dimension convention of Section 2,
\[
d=\lfloor\log_2(T+1)\rfloor
\quad\text{or}\quad
d=\lfloor\log_2T\rfloor,
\]
the required condition \(T\ge2^{d+1}\) fails.

Here, forced-realizable queries are explicitly answered by proper full concepts, and all replies are then frozen into one memoryless oracle.

\subsubsection*{Why the construction does not contradict Theorem 4.5(4) of AHR25}

For a randomized learner, performing the construction separately for each random tape would generally produce
\[
(\tau_\omega,z_\omega,O_\omega)
\]
depending on that tape. In particular, which negative sample point is pinned may depend on the learner's random query.

This gives an instance \textbf{after seeing randomness}, not one fixed oblivious instance. The invalid quantifier exchange would be
\[
\forall\omega\ \exists I_\omega
\quad\Longrightarrow\quad
\exists I\ \text{with the same expected lower bound over }\omega.
\]
Determinism permits the replay argument for one learner execution; it does not justify this exchange.

\medskip

\section{Proof of Theorem~\ref{thm:C}}
\label{sec:proofC}
\subsection*{1. Theorem~\ref{thm:C}}

All logarithms in the proof are natural.

Let
\[
\epsilon=
\begin{cases}
0,&\text{under convention (E)},\\
1,&\text{under convention (N)},
\end{cases}
\qquad n=T-\epsilon.
\]

Under (E), there are \(n=T\) prediction points and an additional sentinel \(a=s\). Under (N), there are \(n=T-1\) hard prediction points and one additional prediction point \(a\). The point \(a\) will be the minimum of the total order and will always have target label \(1\).

\subsubsection*{Theorem~\ref{thm:C} --- expected-query and worst-case-query lower bounds}

For every \(T\ge1\), under either convention above, there exist:

\begin{enumerate}
\item a finite distribution \(\mathcal D_T^\epsilon\) over total orders and target thresholds;
\item a fixed transductive sequence of \(T\) distinct points; and
\item an explicit deterministic, memoryless consistency-type ERM oracle
\end{enumerate}
   \[
   O(S,\preceq),
   \]
   depending only on the sample and the total order,

such that every randomized learner satisfies
\[
\boxed{
\mathbb E[M]\ \ge\
\frac{(T+1-\epsilon)\,128^{-\mathbb E[N_{\rm qry}]}-1}{2}.
}
\tag{C.1}
\]
Here both expectations are over \(\mathcal D_T^\epsilon\) and the learner's private randomness, and the assertion is substantive when the expected number of queries is finite.

Consequently:

\begin{itemize}
\item \textbf{Expected-query budget.} If \(\mathbb E[N_{\rm qry}]\le Q\), then
\end{itemize}
  \[
  \mathbb E[M]\ge
  \max\left\{0,\frac{(T+1-\epsilon)128^{-Q}-1}{2}\right\}.
  \tag{C.2}
  \]
\begin{itemize}
\item \textbf{Worst-case-query budget.} If \(N_{\rm qry}\le Q\) almost surely, the same bound holds.
\end{itemize}

In particular, for both conventions,
\[
\boxed{
\mathbb E[M]\ge \frac12\,T\left(\frac1{128}\right)^Q-\frac12.
}
\tag{C.3}
\]

The expected-query hypothesis in (C.2) may be imposed merely on the average over the displayed hard distribution. In particular, the uniform, per-instance expected-query guarantee in the cost accounting of Section~2 implies it.

The construction and proof below are self-contained.

\medskip

\subsection*{2. The hard distribution and the actual oracle}

\subsubsection*{2.1 A balanced interval tree}

Identify the hard points with \(x_1,\ldots,x_n\), in their arrival order.

Build a binary tree on the interval \([n]\). An interval of size \(m\ge2\) is split into its first \(\lfloor m/2\rfloor\) indices and its remaining \(\lceil m/2\rceil\) indices. Singletons are leaves.

For \(n\ge1\), draw
\[
J\sim\operatorname{Unif}([n]),
\qquad
Y_1,\ldots,Y_n\stackrel{\rm iid}{\sim}\operatorname{Bernoulli}(1/2),
\]
independently.

Define a priority permutation \(\pi_J\) as follows:

\begin{itemize}
\item traverse the path from the root to leaf \(J\);
\item at every internal node, append all indices in the sibling not containing \(J\), in increasing numerical order;
\item finally append \(J\).
\end{itemize}

Thus, every sibling encountered earlier has higher, or ``outer,'' priority than every point in the current interval containing \(J\).

\subsubsection*{2.2 From priorities and labels to a genuine total order}

The total order is
\[
a
\ \prec\
\bigl(x_i:Y_i=1,\text{ in }\pi_J\text{ order}\bigr)
\ \prec\
\bigl(x_i:Y_i=0,\text{ in reverse }\pi_J\text{ order}\bigr).
\tag{C.4}
\]

The target threshold ends at the last \(Y_i=1\) point in the first list, or at \(a\) if that list is empty. Therefore
\[
c_z(a)=1,\qquad c_z(x_i)=Y_i.
\tag{C.5}
\]

This is a total-order threshold class, not a threshold class on equivalence classes: every point occupies its own distinct position, and every point-prefix belongs to the class.

The transductive sequence is:

\begin{itemize}
\item under (E): \(x_1,\ldots,x_n\), with \(a=s\) a non-instance sentinel;
\item under (N): \(x_1,\ldots,x_n,a\).
\end{itemize}

If \(n=0\), which occurs only for \(T=1\) under (N), use the singleton order consisting of \(a\), with target label \(1\).

All random choices are made before interaction. This defines \(\mathcal D_T^\epsilon\), independently of the learner.

\subsubsection*{2.3 The actual, memoryless oracle}

Use the \textbf{minimum consistent prefix oracle}. For completeness, the following definition is valid on any nonempty finite totally ordered domain.

Given \(S\subseteq X\times\{0,1\}\):

\begin{enumerate}
\item If \(S\) assigns both labels to some point, return \(\bot\).
\item Let
\end{enumerate}
   \[
   P=\{x:(x,1)\in S\},\qquad B=\{x:(x,0)\in S\}.
   \]
\begin{enumerate}
\item Set
\end{enumerate}
   \[
   p=
   \begin{cases}
   \max_\preceq P,&P\ne\varnothing,\\
   \min_\preceq X,&P=\varnothing.
   \end{cases}
   \]
\begin{enumerate}
\item If some \(b\in B\) satisfies \(b\preceq p\), return \(\bot\). Otherwise return the full vector \(c_p\).
\end{enumerate}

This rule depends only on \((S,\preceq)\), not on \(J\), the target, the history, or the learner's private randomness.

It is a legal oracle: every threshold containing \(P\) must contain the prefix ending at \(p\); thus an element of \(B\) in that prefix makes consistency impossible. Otherwise \(c_p\) itself is consistent. This also covers empty and one-sided samples. On our support, every concept contains \(a\), so a query containing \((a,0)\) is correctly rejected.

\medskip

\subsection*{3. Removing failure answers by a stronger oracle}

The next reduction is important: it handles arbitrary \(B\), including all information carried by \(\bot\) answers.

\subsubsection*{Lemma C.1 --- prefix-max domination}

On the constructed instances, every call to \(O\) can be simulated using one call to a stronger oracle which, on any set \(A\subseteq[n]\), returns the full prefix ending at
\[
\max_\preceq\{x_i:i\in A\},
\]
or at \(a\) when \(A=\varnothing\).

\paragraph{Proof}

Take \(A\) to be the hard points assigned label \(1\) by the sample. The stronger response is precisely the candidate minimum prefix in the definition of \(O\). Checking the sample's \(0\)-labeled points against that full vector determines whether \(O\) returns this prefix or \(\bot\). Contradictory samples and \((a,0)\) may be rejected immediately.

Thus one stronger call determines the original answer for every sample, regardless of its size or the arrival status of its points. \qed

We now strengthen this prefix-max oracle further.

\subsubsection*{The auxiliary revelation oracle}

The auxiliary oracle maintains an interval \(I\) containing \(J\), initially \(I=[n]\). It reveals to the learner:

\begin{itemize}
\item every target label outside \(I\);
\item the priority order of all those outside points;
\item the interval \(I\) itself.
\end{itemize}

Such information may include future labels. This is deliberately additional information.

For a query \(A\subseteq[n]\), it proceeds as follows.

\begin{enumerate}
\item If \(A\setminus I\) contains a point with target label \(0\), stop.
\item If \(A\cap I=\varnothing\), stop.
\item If \(I\) is a singleton, reveal its label, set \(I=\varnothing\), and stop.
\item Otherwise reveal which child of \(I\) contains \(J\), reveal all target labels in the other child, replace \(I\) by the child containing \(J\), and repeat.
\end{enumerate}

At termination it returns the requested full prefix as well as the auxiliary information.

The entire procedure costs \textbf{one query}, however many internal revelations occur.

This is only an auxiliary, stronger information source. It is not asserted to be the actual ERM oracle. Its memory and its access to target labels do not enter the definition of \(O(S,\preceq)\).

\subsubsection*{Lemma C.2 --- the auxiliary oracle simulates the full prefix}

At every stopping point above, the requested prefix is determined by the revealed information. In particular, an original learner can be simulated by a learner using the auxiliary oracle, with the same number of queries and the same prediction distribution.

\paragraph{Proof}

All points outside the current \(I\) precede all points inside \(I\) in priority order.

Suppose \(A\setminus I\) contains a target-negative point. The maximum of \(A\) in the total order is then the \textbf{outermost target-negative point of \(A\)}: the one appearing earliest in \(\pi_J\). It lies outside \(I\), and its identity is known.

Its prefix contains:

\begin{itemize}
\item every point of \(I\);
\item every outside target-positive point; and
\item exactly those outside target-negative points whose priority is no earlier than its own.
\end{itemize}

The entire vector is therefore known.

If there is no outside target-negative point and \(A\cap I=\varnothing\), all points of \(A\) are revealed target-positive points. Their total-order maximum is their latest point in priority order. Its prefix contains no point of \(I\), and its values outside \(I\) are known. The empty-\(A\) case returns the prefix \(\{a\}\).

The singleton revelation makes all remaining information available. The procedure terminates because every nonterminal refinement reduces the interval size.

Combining this computation with Lemma C.1 reconstructs every original response, including \(\bot\). An original learner may simply ignore the additional information. \qed

\medskip

\subsection*{4. The posterior invariant}

\subsubsection*{Lemma C.3 --- conditional independence inside the active interval}

At every point of the auxiliary interaction, including between internal revelations of a query, conditional on the complete auxiliary history:

\begin{enumerate}
\item \(J\) is uniform in the current nonempty interval \(I\);
\item all labels of unarrived points in \(I\) are independent fair bits;
\item these unarrived labels are independent of \(J\).
\end{enumerate}

The statement remains valid for randomized learners.

\paragraph{Proof}

Initially this is the definition of the distribution.

Suppose the statement holds for \(I\). The next child containing \(J\) has probability equal to its size divided by \(|I|\). Conditional on that child, \(J\) is uniform within it.

The labels revealed in the sibling are independent of \(J\) within the retained child and independent of all unarrived labels in that child. The priority ordering of the revealed sibling is fixed by its numerical indices and does not depend on the remaining location of \(J\).

The decision to stop or continue uses only:

\begin{itemize}
\item the chosen query;
\item the retained child's identity;
\item the revealed labels outside it; and
\item whether the query has any point in the retained child.
\end{itemize}

It does not depend on the location of \(J\) within that child or on its unrevealed future labels. Hence conditioning on that decision preserves the asserted law. The final full-prefix response is determined by the revealed information, by Lemma C.2, and creates no additional conditioning.

An arriving label in \(I\) is a fair bit independent of \(J\) and the remaining unarrived labels. Revealing it therefore also preserves the invariant.

For randomized learners, include the learner's private seed in the conditioning, or apply the same induction after fixing that seed. Its independence from the hard instance supplies the base case. \qed

This explicitly replaces, rather than assumes, exchangeability after failure answers.

\medskip

\subsection*{5. Charging free labels to predictions}

Call a prediction \textbf{unrevealed} if its point is still in the active interval immediately before that prediction. Let \(H\) be the number of such hard-point predictions made so far; \(H\) is a cumulative count over the whole interaction and never decreases when the active interval shrinks (it must not be confused with the number \(k\) of already-predicted points inside the current interval, for which \(H\ge k\)).

Queries between a prediction and its label feedback can be treated as post-feedback queries in a stronger protocol: provide the label immediately after the prediction, and let a simulator ignore it until the original learner would have received it. The prediction loss is unchanged. Thus it suffices to analyze queries between prediction-feedback steps.

\subsubsection*{Lemma C.4 --- mistake charge and the potential}

Let the active interval have size \(m\). Let \(k\) of its points have already been predicted, and let \(r=m-k\). Then:

\begin{enumerate}
\item \(H\ge k\);
\item \(\mathbb E[M]\ge \frac12\mathbb E[H_{\rm final}]\);
\item the potential
\end{enumerate}
   \[
   \Phi=\ln(H+r+1)
   \tag{C.6}
   \]
   is unchanged by prediction-feedback steps;
\begin{enumerate}
\item one binary interval refinement decreases \(\Phi\) by at most \(\ln2\).
\end{enumerate}

\paragraph{Proof}

The active intervals are nested. Every already-predicted point still belonging to the current interval also belonged to the active interval when it was predicted. Hence \(H\ge k\).

By Lemma C.3, an unrevealed prediction faces an independent fair target bit. Its conditional expected loss is exactly \(1/2\), regardless of the prediction distribution. Summing these conditional expectations proves assertion 2.

On an unrevealed prediction, \(H\) increases by one and \(r\) decreases by one. Otherwise neither changes. This proves assertion 3.

For assertion 4, write \(m',k'\) for the retained child's size and number of already-predicted points. We have
\[
m'\ge\lfloor m/2\rfloor,\qquad k'\le k,\qquad H\ge k.
\]
Consequently,
\[
H+m-k+1
\le 2(H+m'-k'+1).
\tag{C.7}
\]
Indeed, with \(d=H-k\ge0\), the left side is \(d+m+1\), whereas the new quantity is at least \(d+m'+1\), and
\[
d+m+1\le2(d+m'+1).
\]
Taking logarithms proves the assertion. \qed

The \(+1\) is essential for handling empty residual sets and the last leaf without undefined logarithms.

\medskip

\subsection*{6. The single-query information lemma}

\subsubsection*{Lemma C.5 --- bounded expected logarithmic decrease per query}

Conditional on any auxiliary history immediately before a query,
\[
\boxed{
\mathbb E[\Phi_{\rm before}-\Phi_{\rm after}\mid\text{history}]
\le 7\ln2.
}
\tag{C.8}
\]

This holds for every query set \(A\), including sets chosen using arbitrarily many previously observed labels.

\paragraph{Proof}

During one query, \(H\) is fixed.

Call a state \textbf{early} if
\[
k<\lfloor m/2\rfloor,
\]
and \textbf{late} otherwise.

\subsubsection*{Late states}

In a late state,
\[
r=m-k\le k+1\le H+1.
\]
Therefore
\[
H+r+1\le2(H+1).
\tag{C.9}
\]
The potential at the end of the query is at least \(\ln(H+1)\). Thus, from the first late state onward, the entire remaining potential decrease is at most \(\ln2\), regardless of how many further revelations occur.

This also covers a singleton interval.

\subsubsection*{Early states}

Consider an early state in which the query needs another refinement. Let \(I_L,I_R\) be the left and right children.

Because arrival order is numerical order and fewer than \(|I_L|\) points of \(I\) have arrived, \textbf{every point of \(I_R\) is unarrived}. Its labels are therefore independent fair bits by Lemma C.3.

There are two exhaustive cases.

\begin{itemize}
\item If \(A\cap I_R=\varnothing\), then when \(J\in I_R\), the new active interval contains no point of \(A\), so the query stops. This event has probability
\end{itemize}
  \[
  \frac{|I_R|}{m}\ge\frac12.
  \]

\begin{itemize}
\item If \(A\cap I_R\ne\varnothing\), consider the event that \(J\in I_L\) and at least one point of \(A\cap I_R\) has target label \(0\). The right child is then revealed, and the query stops because an outside target-negative point of \(A\) has been found. Its conditional probability is
\end{itemize}
  \[
  \frac{|I_L|}{m}
  \left(1-2^{-|A\cap I_R|}\right)
  \ge \frac13\cdot\frac12
  =\frac16.
  \tag{C.10}
  \]

Thus, after each early-state refinement, the conditional probability of continuing to another early-state refinement is at most \(5/6\).

Let \(D\) be the number of early-state refinements before the query stops or first reaches a late state. No independence between refinements is required:
\[
\Pr(D\ge j)\le(5/6)^{j-1},
\qquad
\mathbb E[D]\le6.
\tag{C.11}
\]
Each refinement costs at most \(\ln2\) in potential, by Lemma C.4. The remaining late-state part costs at most another \(\ln2\). Hence
\[
\mathbb E[\Phi_{\rm before}-\Phi_{\rm after}]
\le6\ln2+\ln2.
\]
This proves (C.8). \qed

The two mechanisms in this lemma are complementary:

\begin{itemize}
\item querying genuinely future points encounters an independent wrong-side point with constant probability;
\item exploiting many already known labels is permitted, but their predictions have already contributed to \(H\).
\end{itemize}

\medskip

\subsection*{7. Proof of Theorem~\ref{thm:C}}

Initially,
\[
H=0,\qquad r=n,\qquad \Phi_{\rm initial}=\ln(n+1).
\]
After all hard points have been predicted,
\[
r=0,\qquad \Phi_{\rm final}=\ln(H_{\rm final}+1).
\]

Prediction-feedback steps leave the potential unchanged. Let \(\Delta_j\ge0\) be its decrease during the \(j\)-th query, setting \(\Delta_j=0\) when that query does not occur.

Lemma C.5 gives
\[
\mathbb E[\Delta_j]\le 7\ln2\,\Pr(N_{\rm qry}\ge j).
\]
Summing nonnegative quantities,
\[
\begin{aligned}
\ln(n+1)-\mathbb E[\ln(H_{\rm final}+1)]
&=\mathbb E\!\left[\sum_j\Delta_j\right]\\
&\le7\ln2\sum_j\Pr(N_{\rm qry}\ge j)\\
&=7\ln2\,\mathbb E[N_{\rm qry}].
\end{aligned}
\tag{C.12}
\]
This argument directly permits an adaptive, random number of queries. It does not condition the hard distribution on a low-query event.

By concavity of \(\ln\),
\[
\ln(\mathbb E[H_{\rm final}]+1)
\ge \mathbb E[\ln(H_{\rm final}+1)].
\]
Combining this with (C.12),
\[
\mathbb E[H_{\rm final}]
\ge(n+1)\exp\!\bigl(-7\ln2\,\mathbb E[N_{\rm qry}]\bigr)-1.
\tag{C.13}
\]
Lemma C.4 now yields
\[
\mathbb E[M]\ge
\frac{(n+1)128^{-\mathbb E[N_{\rm qry}]}-1}{2}.
\]

This lower bound holds even for learners receiving the auxiliary information. By Lemmas C.1 and C.2, it therefore holds for every learner using the actual memoryless oracle.

Substituting \(n=T-\epsilon\) proves (C.1). Monotonicity in the query budget proves both versions of (C.2).

For \(n=0\), the instance has one known positive point and the claimed lower bound is nonpositive; it is immediate. Thus all boundary cases are included. \qed

\subsubsection*{Oblivious fixed-instance consequence}

Suppose a learner satisfies the uniform expected-query guarantee of the cost accounting of Section~2:
\[
\sup_{\preceq,z}\mathbb E_{\mathcal A}[N_{\rm qry}]\le Q.
\]
Its average query count under \(\mathcal D_T^\epsilon\) is at most \(Q\). Since the distribution has finite support, some fixed support instance satisfies
\[
\mathbb E_{\mathcal A}[M]
\ge\frac{(T+1-\epsilon)128^{-Q}-1}{2}.
\]
The query guarantee also holds on that instance by hypothesis.

The order, target, sequence, and oracle are already fixed before the learner's private randomness. No adaptive-adversary solidification step is needed.

\medskip

\subsection*{8. Consequence for logarithmic query complexity}

\subsubsection*{Corollary C.9}

Under either convention, suppose a randomized learner has worst-case expected mistakes at most \(m_T\) and worst-case expected queries at most \(q_T\), in the sense of the cost accounting of Section~2. Then
\[
q_T\ge
\frac{\ln(T+1-\epsilon)-\ln(2m_T+1)}{7\ln2}.
\tag{C.14}
\]

In particular, if \(m_T=O((\ln T)^d)\) for a fixed \(d\), then
\[
q_T\ge
\frac{\ln T-d\ln\ln T-O(1)}{7\ln2}
=\Omega(\ln T).
\tag{C.15}
\]

\paragraph{Proof}

Apply Theorem~\ref{thm:C} to the learner and rearrange. \qed

Together with AHR25, Thm.~4.5(4), this proves that \(O(\log T)\) expected oracle calls are optimal in order for achieving polylogarithmic expected mistakes.

Together with Theorem~\ref{thm:A} and AHR25, Thm.~4.5(3), it also closes the stated comparison at \(O(\log T)\) mistakes:

\begin{itemize}
\item randomized query complexity: \(\Theta(\log T)\), in expectation;
\item deterministic query complexity: \(\Theta(T)\) against worst-case legal oracles and under the extremal rules of Theorem~\ref{thm:Aprime} (not under every legal pre-declared rule, Theorem~\ref{thm:rule}).
\end{itemize}

This does \textbf{not} claim sharp constants or a complete pointwise tight characterization for every possible mistake/query budget.

\subsubsection*{Relation to the one-point property}

The ``one point per query'' property used in the proof of Theorem~4.4 of AHR25 is not used here and generally fails for the present full-vector ERM interface: a query in our analysis may reveal a whole sibling interval, containing a linear number of labels. The replacement property is Lemma C.5: one query decreases a \textbf{paid logarithmic potential} by only a constant in expectation. Theorem~4.4 of AHR25 is not invoked here; its statement, and why its hypothesis fails for the threshold family, are discussed once in Appendix~\ref{sec:proofA}.

\medskip

\subsection*{9. Exact zero-query analysis for the uniform distribution}

The following results use the uniform distribution under convention (E), not the different distribution used in Theorem~\ref{thm:C}.

\subsubsection*{Proposition C.6 --- exact zero-query optimum}

Under the uniform distribution, the minimum expected number of mistakes of any zero-query learner, including randomized learners, is
\[
\boxed{
F(T)=\sum_{t=1}^{T}
\frac{\left\lfloor (t+1)^2/4\right\rfloor}{t(t+1)}.
}
\tag{C.16}
\]
In particular,
\[
F(T)\ge T/4,
\qquad
F(T)=T/4+\tfrac14\ln T+O(1).
\tag{C.17}
\]

After observing \(k\) positive and \(m\) negative labels, the next label has posterior probability
\[
\Pr(Y_{\rm next}=1\mid\text{history})
=\frac{k+1}{k+m+2}.
\tag{C.18}
\]

\paragraph{Proof}

An equivalent representation of the uniform distribution is to draw independent
\[
P,U_1,\ldots,U_T\sim\operatorname{Unif}[0,1]
\]
and set \(Y_i=\mathbf1[U_i\le P]\). Ordering the \(U_i\)'s gives a uniform permutation, and the position of \(P\) among them is uniform on \(\{0,\ldots,T\}\). The actual threshold point remains an element of \(X\): it is the last instance below \(P\), or the sentinel.

For a history with \(k\) ones and \(m\) zeros, the posterior density of \(P\) is proportional to \(p^k(1-p)^m\). Taking the ratio of the elementary integrals for its first moment proves (C.18).

After \(r\) observations, the number \(K_r\) of ones is uniform on \(\{0,\ldots,r\}\). Therefore the expected Bayes error on round \(r+1\) is
\[
\frac1{(r+1)(r+2)}
\sum_{k=0}^r\min(k+1,r-k+1)
=
\frac{\lfloor(r+2)^2/4\rfloor}{(r+1)(r+2)}.
\]
Summing proves (C.16).

A randomized prediction cannot improve on the posterior-majority decision, since its conditional loss is affine in the probability of predicting \(1\). Posterior majority attains every summand, proving exact optimality.

Finally, the round-\(t\) summand equals
\[
\frac14+
\begin{cases}
\frac1{4t},&t\text{ odd},\\[2mm]
\frac1{4(t+1)},&t\text{ even}.
\end{cases}
\]
This proves (C.17). \qed

\textbf{Convention (N) correction.} If the literal no-empty-prefix counterpart draws a uniform permutation and a uniform cut in \(\{1,\ldots,T\}\), its label distribution is the (E) distribution conditioned on a nonempty positive set. The total variation distance is \(1/(T+1)\). Since total loss lies in \([0,T]\), its optimal zero-query value \(F_N(T)\) satisfies
\[
|F_N(T)-F(T)|\le \frac{T}{T+1}<1.
\tag{C.19}
\]
In particular, \(F_N(T)\ge T/4-1\).

Notice that the exact correction in (C.17) is logarithmic and positive; the exact optimum is not \(T/4+O(1)\).

\medskip

\subsection*{10. Why the uniform distribution cannot prove Theorem~\ref{thm:C}}

\subsubsection*{Proposition C.7 --- a one-query learner for the uniform distribution}

Fix \(1\le m\le T\). Predict the first \(m\) labels by posterior majority. Then query all \(m\) observed labels once and predict the returned hypothesis on all remaining points.

For the minimum-prefix rule on the uniform distribution, this algorithm has exactly
\[
\boxed{
\mathbb E[M]=F(m)+\frac{T-m}{m+2}.
}
\tag{C.20}
\]
For \textbf{every} consistent returned threshold, it satisfies
\[
\mathbb E[M]\le F(m)+\frac{2(T-m)}{m+2}.
\tag{C.21}
\]
Thus one query suffices for \(O(\sqrt T)\) expected mistakes on the uniform distribution.

\paragraph{Proof}

Use the representation in Proposition C.6. Let
\[
L=\max\{U_i:i\le m,\ Y_i=1\},
\]
with \(L=0\) if there is no observed positive point.

The minimum-prefix hypothesis makes future mistakes exactly on \((L,P]\). Conditional on \(P=p\),
\[
\mathbb E[p-L\mid P=p]
=\int_0^p(1-v)^m\,dv.
\]
Averaging over \(p\),
\[
\mathbb E[P-L]
=\frac1{m+2}.
\tag{C.22}
\]
Each future point is independent of the first \(m\) points and \(P\), proving (C.20).

For arbitrary consistent tie-breaking, let \(U\) be the smallest observed negative coordinate, with \(U=1\) if none exists. Every consistent threshold agrees with the target outside \((L,U)\), regardless of how its choice depends on the full order. By symmetry,
\[
\mathbb E[U-P]=\frac1{m+2}.
\]
Bounding mistakes by membership in \((L,U)\) proves (C.21). \qed

For example, with \(m=\lceil\sqrt T\rceil\), using \(F(m)\le m/2\), (C.21) gives the explicit bound
\[
\mathbb E[M]\le \frac52\sqrt T+\frac12.
\tag{C.23}
\]

This is a distribution-specific upper bound, not a worst-case upper bound over all orders. It does not contradict Theorem~\ref{thm:A}. It does prove that \textbf{no positive-constant \(T\rho^Q\) lower bound with \(Q=1\) can hold on the uniform distribution}, even if the oracle's consistent tie-breaking is chosen adversarially.

\medskip

\subsection*{11. A counterexample to a naive per-history contraction bound}

\subsubsection*{Proposition C.8 --- no history-uniform constant contraction bound for the uniform distribution}

Even after regularizing block sizes by \(+1\), a single-query logarithmic contraction bound on the uniform distribution cannot hold uniformly over all positive-probability histories.

\paragraph{Proof}

Choose integers \(k\ge1\), \(u\ge1\), and \(T=k+u+1\). Put \(b=x_{k+1}\).

Before prediction, query the singleton positive sample \(\{(b,1)\}\). Consider the history in which:

\begin{enumerate}
\item the answer is the all-one vector, so \(b\) is the maximum instance in the unknown order;
\item the first \(k\) arriving labels are all \(1\);
\item the label of \(b\) is \(0\).
\end{enumerate}

Under the uniform distribution this history has probability
\[
\frac1T\cdot\frac{T}{T+1}\cdot\frac1{k+1}
=\frac1{(T+1)(k+1)}>0.
\tag{C.24}
\]

Before the next query, all \(u\) unarrived points have undetermined labels. Query the \(k\) known positive points. Let \(U'\) be the set of unarrived points above their maximum.

Removing the known maximum \(b\), the conditional model is again a uniform random order and uniform cut, now conditioned on \(k\) specified positive labels. The expected coordinate of their maximum is \(k/(k+2)\). Hence
\[
\mathbb E[|U'|\mid\text{history}]
=\frac{2u}{k+2}.
\tag{C.25}
\]
All points removed from the uncertainty set are certified positive; the points in \(U'\) remain label-undetermined.

By Jensen's inequality,
\[
\begin{aligned}
\mathbb E\!\left[
\ln\frac{u+1}{|U'|+1}
\,\middle|\,\text{history}
\right]
&\ge
\ln\frac{u+1}{2u/(k+2)+1}.
\end{aligned}
\tag{C.26}
\]
Taking \(u=(k+2)^2\), the right side becomes
\[
\ln\frac{(k+2)^2+1}{2k+5},
\]
which diverges with \(k\).

All parameters and probabilities here are finite and rational. For one explicit member,
\[
k=255,\quad u=66049,\quad T=66305,
\]
the history probability is \(1/16974336\), and
\[
\mathbb E[|U'|]=514,\qquad
\frac{\mathbb E[|U'|+1]}{u+1}=\frac{103}{13210}.
\]
Thus the expected logarithmic decrease is at least
\[
\ln(13210/103)>\ln128.
\]
The parametrized family rules out every universal constant, not merely \(7\ln2\). \qed

The failure is not caused by taking a logarithm of zero. It persists with \(+1\) regularization. The posterior on the threshold within the remaining label-undetermined set is not the fresh uniform posterior assumed by a naive potential argument.

\medskip

\section{Proof of Theorem~\ref{thm:B}}
\label{sec:proofB}
\subsection*{1. Conventions and the counterexamples}

Budgets below are nonnegative integers. For a real-valued budget, replace \(Q\) by \(\lfloor Q\rfloor\).

Write \(M_E^*(T,Q)\) and \(M_N^*(T,Q)\) for the deterministic minimax mistake counts under conventions E and N, respectively. Lower bounds use the finite domains fixed in Section 2:

\begin{itemize}
\item E: \(X=\{s\}\cup V\), where \(s\) is the least point and \(V\) consists of the \(T\) instances;
\item N: \(X=V\).
\end{itemize}

All oracle answers are complete label vectors. No threshold endpoint name or additional representation information is assumed.

\subsubsection*{Theorem D.1 --- The conjectured formula is false (conventions E and N)}

In the model of Section 2,
\[
M_E^*(5,3)=3,
\qquad
M_N^*(6,4)=3.
\]

In particular, the proposed E formula
\[
M_E^*(T,Q)=\max\{T-Q,\lfloor\log_2(T+1)\rfloor\}
\]
is false. Consequently no formula that assigns the value 2 to \(M_N^*(6,4)\) can be correct either.

The proof is analytic.

\medskip

\subsection*{2. What is correct in the protection algorithm}

\subsubsection*{Lemma D.2 --- Computable knowledge and sound protection}

Maintain a partial order \(P_t\) on the instance set \(V\). Its relations comprise:

\begin{enumerate}
\item relations learned from previous oracle answers;
\item every relation \(a\prec b\) implied by an observed positive \(a\) and an observed negative \(b\);
\item their reflexive/transitive closure.
\end{enumerate}

Let \(A_t,B_t\) be the observed positive and negative sets. Define the retained version space
\[
\mathcal I_t^E
=
\{I\subseteq V:I\text{ is an ideal of }P_t,\ A_t\subseteq I,\ I\cap B_t=\varnothing\},
\]
and
\[
\mathcal I_t^N=\mathcal I_t^E\setminus\{\varnothing\}.
\]

An ideal is a downward-closed subset. Every ideal is a prefix of some linear extension of \(P_t\): first topologically sort the ideal, then its complement. Thus these version spaces describe precisely the target vectors compatible with the information deliberately retained by the algorithm.

Define
\[
D_1=\bigcap_{I\in\mathcal I_t}I,\qquad
D_0=V\setminus\bigcup_{I\in\mathcal I_t}I,\qquad
U=V\setminus(D_0\cup D_1).
\]
These sets are computable without knowing the true order. Realizability guarantees \(\mathcal I_t\ne\varnothing\).

Under E, they have the simpler reachability descriptions
\[
D_1=\{v:\exists w\in A_t,\ v\preceq_{P_t}w\},
\qquad
D_0=\{v:\exists w\in B_t,\ w\preceq_{P_t}v\}.
\]
Indeed, the smallest feasible ideal is \(\downarrow A_t\); if \(v\notin D_0\), then \(\downarrow(A_t\cup\{v\})\) avoids \(B_t\). Under N, enumeration of nonempty ideals also incorporates the nonemptiness inference that the reachability tests alone can miss.

Every previously observed point is determined. Consequently, if the current point \(x\) and another point \(u\) are both undetermined, then \(u\) has not arrived.

Query
\[
S=\{(x,1),(u,0)\}.
\]

\begin{itemize}
\item \textbf{Answer \(\bot\).} This happens exactly when \(u\prec x\). If \(x\prec u\), the threshold \(c_x\) realizes \(S\); the reverse ordering makes \(S\) impossible. Predict \(0\). If this prediction is wrong, then \(x=1\), and \(u\prec x\) forces \(u=1\).
\end{itemize}

\begin{itemize}
\item \textbf{Answer \(h\).} Put \(H=\{v\in V:h(v)=1\}\). Then \(x\in H\), \(u\notin H\), and
\end{itemize}
  \[
  H\prec V\setminus H.
  \]
  Predict \(1\). If this prediction is wrong, then \(x=0\). Every point outside \(H\) follows \(x\), so every such point is \(0\), including \(u\).

The latter inference uses the fact that \(H\) is a threshold prefix, not merely that it contains \(x\). Both branches work for every legal consistent return value. \qed

\subsubsection*{Lemma D.3 --- Correct accounting, including early termination}

Run the protection algorithm with an internal query cap \(k\): query at every undetermined arrival whenever budget remains and another undetermined point exists. Let \(q\le k\) be the number of calls actually made. Then
\[
\boxed{M\le T-q.}
\]

If \(q<k\), the valid additional conclusion is
\[
\boxed{M\le q+1,}
\]
not \(M\le1\). Therefore
\[
\boxed{M\le\max\{T-k,k\}.} \tag{D.1}
\]

\textbf{Proof.} Let \(D\) be the set of arrivals that were undetermined immediately before their prediction/query processing. Let \(b\) be the number of mistaken query rounds.

Each mistaken query protects a previously undetermined future point. Once protected, that point remains determined. Hence the protected points from the \(b\) mistaken queries are distinct and do not belong to \(D\). Thus
\[
|D|+b\le T.
\]
At most \(|D|-q\) mistakes occur on nonquery rounds, giving
\[
M\le b+|D|-q\le T-q.
\]

If \(q<k\), budget was never exhausted. Any undetermined arrival at which no query was made must therefore have been the sole remaining undetermined point. After its label is observed, no future undetermined arrival is possible. Thus at most one member of \(D\) is a nonquery round:
\[
|D|\le q+1,\qquad M\le q+1\le k.
\]
Combining this with the \(q=k\) case proves (D.1). \qed

In particular, choosing
\[
k=\min\{Q,\lfloor T/2\rfloor\}
\]
gives the valid general upper bound
\[
M\le \max\{T-Q,\lceil T/2\rceil\}. \tag{D.2}
\]

For Theorem D.1:

\begin{itemize}
\item E, \(T=5,Q=3\): take \(k=2\), obtaining \(M\le3\);
\item N, \(T=6,Q=4\): take \(k=3\), obtaining \(M\le3\).
\end{itemize}

The remaining work is to prove matching lower bounds for \textbf{every} deterministic learner.

\medskip

\subsection*{3. An analytic five-point lower bound under E}

Throughout this section there are five instances, an E sentinel, and no observed labels unless explicitly stated. A poset state means that every linear extension of that poset remains available to the adversary. Providing the poset to the learner for free only strengthens the learner.

\subsubsection*{Lemma D.4 --- Antichain cube}

Suppose no queries remain and the available orders include all linear extensions of a poset \(P\) containing an antichain \(W\) of size \(r\). Then the adversary can force \(r\) mistakes in the prescribed transductive sequence.

\textbf{Proof.} Let
\[
D=\{v\notin W:v<_P w\text{ for some }w\in W\}.
\]
For every \(J\subseteq W\), the set \(D\cup J\) is an ideal. Label \(D\) positively and \(V\setminus(D\cup W)\) negatively. The labels on \(W\) are then independently selectable: every choice is a prefix in some linear extension of \(P\).

At each arrival in \(W\), choose the opposite of the deterministic prediction. Finally choose a linear extension in which the resulting positive ideal comes first, and take its corresponding threshold. \qed

\subsubsection*{Lemma D.5 --- Extreme-point padding}

Under E, suppose one of five points, \(p\), can be fixed as either

\begin{itemize}
\item the least instance, with target label \(1\); or
\item the greatest instance, with target label \(0\),
\end{itemize}

while the other four instances have arbitrary relative order and arbitrary E-threshold target. With at most one further query, at least three mistakes can be forced.

\textbf{Proof.} This embeds the four-point E problem, to which Theorem~\ref{thm:A} gives
\[
M\ge4-1=3.
\]
It remains to check that queries involving the padding point do not invalidate the embedding.

If \(p\) is least:

\begin{itemize}
\item a query requiring \(p=0\) and some ordinary instance to be \(1\) is impossible;
\item if it requires \(p=0\) but no ordinary positive, return the empty-instance prefix;
\item otherwise, make at most one base ERM call and extend its vector by \(p=1\).
\end{itemize}

If \(p\) is greatest:

\begin{itemize}
\item a query requiring \(p=1\) and some ordinary instance to be \(0\) is impossible;
\item if it requires \(p=1\) but no ordinary zero, return the all-one concept;
\item otherwise, make at most one base call and extend its vector by \(p=0\).
\end{itemize}

Conflicting labels and a negative sentinel are handled directly by \(\bot\). Previously returned vectors that are valid for every order in the embedded subclass can be cached and returned without a base call.

These are legal, deterministic, memoryless simulations on the full domains. \qed

The same greatest-point simulation also works when the base class uses convention N; this will be used later.

\subsubsection*{Lemma D.6 --- Predicting before resolving a matching costs three mistakes}

Suppose the retained relations are
\[
a\prec b,\qquad c\prec d,
\]
with a fifth point \(e\) otherwise unrelated. If the learner predicts the first label now, the adversary can force that prediction to be wrong and then force two further mistakes, even if the full order is subsequently disclosed.

The conclusion also holds with zero or one retained comparison, since such a state can be strengthened to the displayed matching.

\textbf{Proof.} For any current point \(v\) and either desired label \(y\), construct three future points \(H\) such that:

\begin{enumerate}
\item all four prefix labelings on \(H\) are possible while \(v=y\);
\item the earliest-arriving point of \(H\) is its median.
\end{enumerate}

The constructions below suffice; exchanging the two pairs handles \(v=c,d\).

\begin{center}\small\begin{tabular}{p{0.23\linewidth}p{0.23\linewidth}p{0.23\linewidth}p{0.23\linewidth}}\hline
Current point & Desired label & Choose \(H\) & Order of the groups \\ \hline
\(a\) & \(0\) & \(\{c,d,e\}\) & \(H<a<b\) \\
\(a\) & \(1\) & \(\{c,d,e\}\) & \(a<H<b\) \\
\(b\) & \(0\) & \(\{c,d,e\}\) & \(a<H<b\) \\
\(b\) & \(1\) & \(\{c,d,e\}\) & \(a<b<H\) \\
\(e\) & \(0\) & \(\{b,c,d\}\) & \(a<H<e\) \\
\(e\) & \(1\) & \(\{b,c,d\}\) & \(a<e<H\) \\
\hline\end{tabular}\end{center}

Each triple has just one required comparison, \(c<d\). Its earliest point can always be made the median. For the triple \(\{c,d,w\}\), use respectively
\[
w<c<d,\qquad c<d<w,\qquad c<w<d
\]
when the earliest point is \(c,d,w\).

Choose \(y\) opposite to the current prediction. On the triple, make the median prediction wrong. Its label leaves two possible thresholds differing on one later extreme; make that prediction wrong too. The fourth future point has a fixed label common to all four triple thresholds.

An ERM oracle reveals the class, not the target. Even full knowledge of this order cannot distinguish these remaining target thresholds before their labels arrive. \qed

\subsubsection*{Lemma D.7 --- The second-query alternatives}

Suppose the only retained comparison is \(a<b\). Any query can be answered so that the remaining orders contain either:

\begin{enumerate}
\item a subclass covered by Lemma D.5; or
\item all extensions of two disjoint comparisons.
\end{enumerate}

\textbf{Proof.} Remove the sentinel from the query after checking its label, and handle conflicts, empty queries, and single-sided queries directly. Such a query can receive a no-information answer, after which fixing \(a\) as the least instance gives case 1.

For a genuine mixed query, let \(A\ne\varnothing\) be its positive set and \(B\ne\varnothing\) its negative set.

If there are \(p\in A,n\in B\) such that adding \(n<p\) is consistent with \(a<b\) and creates no three-element chain, answer \(\bot\) and retain this relation. The resulting poset is one of:

\begin{itemize}
\item one comparison;
\item a two-edge fork;
\item two disjoint comparisons.
\end{itemize}

For a fork, put its common source globally first, or its common sink globally last. The other four points are unrestricted, giving case 1. Disjoint comparisons give case 2. If the added relation is already $a<b$, fix $a$ as the least instance and set its target label to $1$; the other four instances retain arbitrary relative order and arbitrary E-prefix targets, which satisfies the pre-existing comparison and makes the present $\bot$ universally legal, so this is case 1 as well.

Otherwise, \(a\notin B\) and \(b\notin A\): either membership would provide a permitted reversed pair. Moreover, if both \(A\setminus\{a\}\) and \(B\setminus\{b\}\) were nonempty, a pair between these sets would be disjoint from \(a<b\), again permitted. Hence
\[
A=\{a\}\quad\text{or}\quad B=\{b\}.
\]
In the first case return the prefix \(\{a\}\); in the second return \(V\setminus\{b\}\). These are respectively consistent with making \(a\) least or \(b\) greatest, leaving the other four points unrestricted. \qed

\subsubsection*{Lemma D.8 --- One query cannot eliminate a three-point antichain from the matching}

From the matching state
\[
a<b,\qquad c<d,
\]
plus isolated \(e\), every query has a legal answer leaving a poset with a three-point antichain.

\textbf{Proof.} Put
\[
L=\{a,c\},\qquad R=\{b,d\}.
\]
Trivial queries can receive a no-information answer. For a mixed query with positive set \(A\) and negative set \(B\):

\begin{itemize}
\item If \(B\cap L\ne\varnothing\), choose \(\ell\in B\cap L\) and \(p\in A\). Answer \(\bot\), retaining \(\ell<p\). The set \(R\cup\{e\}\) remains an antichain.
\item Otherwise, if \(A\cap R\ne\varnothing\), choose \(r\in A\cap R\) and \(n\in B\). Answer \(\bot\), retaining \(n<r\). The set \(L\cup\{e\}\) remains an antichain.
\item Otherwise,
\end{itemize}
  \[
  A\subseteq L\cup\{e\},\qquad B\subseteq R\cup\{e\}.
  \]
  Return the prefix
  \[
  H=
  \begin{cases}
  L,&e\in B,\\
  L\cup\{e\},&e\notin B.
  \end{cases}
  \]
  This respects the matching and the query. Its three-point side is an antichain.

In each \(\bot\) case, a requested negative has been placed before a requested positive, making the query genuinely unrealizable. \qed

\subsubsection*{Proposition D.9 --- \(M_E^*(5,3)\ge3\)}

\textbf{Proof.} Fix any deterministic learner making at most three calls.

\begin{enumerate}
\item \textbf{Before the first call.} If it predicts, Lemma D.6 forces three mistakes. Otherwise, answer its first mixed query by \(\bot\), retaining one reversed requested pair \(a<b\). A trivial query can receive a no-information answer, and one comparison may then be supplied for free.
\end{enumerate}

\begin{enumerate}
\item \textbf{After the first call.} If it predicts, Lemma D.6 again applies. Otherwise apply Lemma D.7 to its second call. An extreme-point/fork outcome invokes Lemma D.5, with at most one call left. The only remaining case is two disjoint comparisons.
\end{enumerate}

\begin{enumerate}
\item \textbf{In the matching case.} Predicting before the last call loses by Lemma D.6. If the learner makes the last call, Lemma D.8 leaves a three-point antichain. No queries remain, so Lemma D.4 forces three mistakes.
\end{enumerate}

This covers algorithms that use fewer than three calls as well.

\textbf{Oblivious solidification and memorylessness.} Cache every queried set and its answer; repeated queries receive their cached answer. Such repetitions consume budget without imposing additional restrictions.

Every returned nontrivial vector was retained as a prefix cut. Every nontrivial \(\bot\) answer has a retained reversed pair witnessing impossibility. At the end, choose a total-order extension and a target prefix realizing the constructed labels. Define
\[
O^*(S)=
\begin{cases}
\text{the cached answer},&S\text{ was queried},\\
\text{the least consistent threshold in the final order},&S\text{ is realizable and unqueried},\\
\bot,&\text{otherwise}.
\end{cases}
\]
All cached answers are legal in the final order. In the padding branch, use the fixed oracle from Theorem~\ref{thm:A} through the explicit simulation of Lemma D.5.

Thus \(O^*\) is a legal function of \(S\) alone. Determinism makes the replay identical. The resulting order, target, sequence, and oracle are all fixed before that replay, as required by the oblivious-instance requirement.

All choices can be tie-broken by arrival index. Final order extensions can be obtained by lexicographic topological sorting, with the target ideal first. This is a finite constructive solidification, not a compactness argument. \qed

Lemma D.3 supplies the matching upper bound, proving
\[
\boxed{M_E^*(5,3)=3}.
\]

\medskip

\subsection*{4. Embedding the known-class lower bound in the transductive model}

\subsubsection*{Lemma D.10 --- Known-order lower bounds}

For every query budget,
\[
M_E^*(T,Q)\ge \lfloor\log_2(T+1)\rfloor,
\qquad
M_N^*(T,Q)\ge \lfloor\log_2T\rfloor.
\]

\textbf{Proof under E.} Put \(d=\lfloor\log_2(T+1)\rfloor\) and choose \(2^d-1\) ordered instances. Restrict the target to their \(2^d\) prefixes, including the empty prefix. Present these instances in breadth-first order of the perfectly balanced binary search tree; present any extra, always-negative instances afterwards.

The entire sequence is fixed in advance. At each depth, the node on the currently surviving search path splits the possible target cuts equally. Choose its label opposite to the deterministic prediction. Other nodes encountered outside the surviving interval have labels common to all surviving targets. This forces one mistake at each of the \(d\) depths.

The full order may be disclosed for free. Fix, for example, the least-consistent-threshold ERM rule. Given the order, all oracle answers are simulatable and contain no information about which target cut was chosen.

The adaptive choice of target cuts is solidified by selecting the final surviving target and replaying the deterministic interaction.

\textbf{Under N}, reserve the least instance as an always-positive point. Use \(2^d-1\) further instances for the same construction, where \(d=\lfloor\log_2T\rfloor\). The reserved minimum supplies the ``empty'' prefix on the hard instances while the target on the full domain remains nonempty. \qed

This is a worst-sequence statement. It does \textbf{not} assert that every known-order instance permutation has this mistake complexity.

\medskip

\subsection*{5. The N convention also has a strict counterexample}

\subsubsection*{Lemma D.11 --- First-action lower recurrence under N}

For \(n\ge2\) and \(q\ge1\),
\[
M_N^*(n,q)\ge
\min\left\{
M_E^*(n-1,q-1),\
1+M_E^*(n-1,q),\
1+M_N^*(n-1,q)
\right\}. \tag{D.3}
\]

\textbf{Proof.}

If the first action is a query, it can always be answered while leaving some point \(p\) globally least and all other relative orders unrestricted:

\begin{itemize}
\item Mixed positive/negative query: choose \(p\) among its negatives and answer \(\bot\). A globally least point cannot be negative under N.
\item Negative-only query on a proper subset: choose \(p\) outside that subset and return the singleton prefix \(\{p\}\).
\item Negative-only query on the whole domain: answer \(\bot\).
\item Positive-only or empty query: return the all-one concept.
\item Conflicting labels: answer \(\bot\).
\end{itemize}

Disclose \(p\) as the minimum for free. Its target label is necessarily \(1\); the other \(n-1\) points form an E problem with at most \(q-1\) calls remaining. Cached first answers are universally legal in this subclass.

If the first action is prediction \(0\), give label \(1\), make that point least, and retain an E problem on the remaining \(n-1\) points.

If the first action is prediction \(1\), give label \(0\), make that point greatest, and retain an N problem on the remaining \(n-1\) points. The additional all-one concept introduced by the greatest point is handled by the simulation in Lemma D.5.

These three cases prove (D.3). \qed

Apply (D.3) with \(n=6,q=4\). Proposition D.9 and Lemma D.10 give
\[
\begin{aligned}
M_E^*(5,3)&=3,\\
1+M_E^*(5,4)&\ge1+\lfloor\log_2 6\rfloor=3,\\
1+M_N^*(5,4)&\ge1+\lfloor\log_2 5\rfloor=3.
\end{aligned}
\]
Therefore \(M_N^*(6,4)\ge3\). Lemma D.3, with internal cap \(k=3\), gives the reverse inequality:
\[
\boxed{M_N^*(6,4)=3}.
\]

Consequently, any proposed formula assigning the value \(2\) to \((T,Q)=(6,4)\) is false.

\subsubsection*{Further N conclusions}

Equation (D.3), Theorem~\ref{thm:A} under E, and induction on \(T\) yield the strengthened \textbf{budget} lower bound (this is a budget statement for \(Q\ge1\), obtained by the first-action recurrence and induction rather than directly from the pathwise bound of Theorem~\ref{thm:A} under N: for \(Q\ge1\) the three branches of (D.3) give \(M_E^*(T-1,Q-1)\ge T-Q\) and \(1+M_E^*(T-1,Q)\ge T-Q\) by Theorem~\ref{thm:A} under E, and \(1+M_N^*(T-1,Q)\ge T-Q\) by the induction hypothesis, while the base case \(T=1\) has a nonpositive right-hand side)
\[
M_N^*(T,Q)\ge T-Q\qquad(Q\ge1).
\]
Together with Lemma D.10,
\[
M_N^*(T,Q)\ge
\begin{cases}
T-1,&Q=0,\\
\max\{T-Q,\lfloor\log_2T\rfloor\},&Q\ge1.
\end{cases} \tag{D.4}
\]

Predicting \(1\) throughout makes at most \(T-1\) mistakes under N. Consequently,
\[
\boxed{M_N^*(T,0)=M_N^*(T,1)=T-1.} \tag{D.5}
\]
This is a budget statement; it is \textbf{not} a new pathwise assertion \(M+q\ge T\) when the actual query count might be zero.

\medskip

\subsection*{6. Valid bounds after rejecting the claimed exact curve}

The results established above imply
\[
\max\{T-Q,\lfloor\log_2(T+1)\rfloor\}
\le M_E^*(T,Q)
\le \max\{T-Q,\lceil T/2\rceil\}, \tag{D.6}
\]
and the N lower bound (D.4), together with
\[
M_N^*(T,Q)
\le
\min\left\{T-1,\ \max\{T-Q,\lceil T/2\rceil\}\right\}. \tag{D.7}
\]

There is also exact large-budget saturation:
\[
Q\ge2(T-1)
\quad\Longrightarrow\quad
M_E^*(T,Q)=\lfloor\log_2(T+1)\rfloor,\quad
M_N^*(T,Q)=\lfloor\log_2T\rfloor.
\]
For completeness, the explicit count follows from the sorting method of AHR25 (Thm.~4.5(3)): query a pair in one orientation, and if necessary in the reverse orientation. A successful return splits the current set into two nonempty ordered parts. There are \(T-1\) internal splits, each costing at most two calls. Once sorted, maintain the set \(V_t\) of prefix concepts consistent with the labels seen so far and predict its majority label; every mistake at least halves \(|V_t|\), while the true target is never removed, so \(1\le|V_0|2^{-M}\) and \(M\le\lfloor\log_2|V_0|\rfloor\), with \(|V_0|=T+1\) under E and \(|V_0|=T\) under N. (The sufficient budget \(2(T-1)\) is not claimed to be minimal; for instance \(M_E^*(2,1)=1\): one pair query fixes the order of two points and halving then makes one mistake, matching the lower bound.)

Equations (D.6)--(D.7) are bounds, not an exact characterization. The strict finite counterexamples show why the conjectured equality cannot be assembled from Theorem~\ref{thm:A} and the known-class Halving bound alone.

Theorem~4.4 of AHR25 is not invoked here; its statement, and why its hypothesis fails for the threshold family, are discussed once in Appendix~\ref{sec:proofA}. (For the E counterexample, \(T=5\) and \(d_{\rm LD}=2\), so its hypothesis \(T\ge 2^{d_{\rm LD}+1}=8\) fails.) Theorem~\ref{thm:A} is invoked as a proved theorem; the deterministic/randomized separation is unaffected.

\medskip

\section{Proof of Theorem~\ref{thm:D}}
\label{sec:proofD}
\subsection*{Theorem~\ref{thm:D}: deterministic linear-query learning with a weak consistency oracle}

\textbf{Interface and conventions.} Throughout this section, the learner accesses the class only through the Boolean oracle of \textbf{Definition 2.2 of AHR25}. It never receives or evaluates a returned concept. The concept-returning interface of Section 2 is used only in the later comparison table.

Let the \(T\ge1\) instances be distinct and revealed in advance, as stipulated in the transductive protocol. Labels are generated by an arbitrary target
\[
c_z(x)=\mathbf 1[x\preceq z]
\]
under an arbitrary unknown total order \(\preceq\).

Write \(\epsilon=0\) under convention E and \(\epsilon=1\) under convention N:

\begin{itemize}
\item \textbf{E:} the domain contains a point preceding all \(T\) instances, so the empty prefix on the instances is available.
\item \textbf{N:} the domain consists exactly of the \(T\) instances, so the empty prefix is unavailable.
\end{itemize}

\subsubsection*{Theorem~\ref{thm:D} --- explicit bounds}

There are deterministic algorithms using only weak consistency queries such that, for every admissible domain, total order, target, and transductive sequence,
\[
\begin{array}{c|c|c}
\text{Convention}&\text{Number of queries}&\text{Number of mistakes}\\ \hline
E&Q\le 66T&M\le\lfloor\log_2(T+1)\rfloor\\[1mm]
N&Q\le 67(T-1)&M\le\lfloor\log_2T\rfloor.
\end{array}
\]

Every oracle query made by these algorithms has exactly the form
\[
\{(u,1),(v,0)\},\qquad u\ne v.
\]

Consequently, in either convention,
\[
Q\le67T,\qquad M\le\log_2T+1.
\]

These are pathwise bounds: there is no randomization and no averaging over oracle answers.

\medskip

\subsection*{Lemma E.1 --- comparison interface and adaptive consistency}

For distinct \(u,v\in X\),
\[
\operatorname{WC}\bigl(\{(u,1),(v,0)\}\bigr)=\text{realizable}
\quad\Longleftrightarrow\quad u\prec v.
\]

This holds under both E and N.

\textbf{Proof.} Realizability means that some \(z\in X\) satisfies
\[
u\preceq z\prec v,
\]
which implies \(u\prec v\). Conversely, if \(u\prec v\), the concept \(c_u\) realizes the query. No empty-prefix assumption is needed. \qed

The query concerns the existence of \textbf{some} consistent concept, not consistency with the actual target. In particular, observed target labels must \textbf{not} be appended to this comparison query.

The comparison implementation is also valid against adaptively supplied comparison answers, provided the answers remain consistent with a total order. Indeed, fix any completed total order consistent with a finite comparison transcript. A deterministic comparison algorithm, run against that fixed order, follows exactly the same transcript. Thus every correctness and comparison-count guarantee proved for arbitrary fixed total orders holds for every consistent adaptive transcript.

For a finite instance set, a consistent partial order has a total-order extension: repeatedly remove a minimal remaining element. This is the only extension fact needed here.

\medskip

\subsection*{Lemma E.2 --- exact selection in at most \(32n\) comparisons}

There is a deterministic comparison algorithm
\[
\operatorname{SELECT}(A,k),\qquad 1\le k\le n=|A|,
\]
which returns the element of rank \(k\) using at most \(32n\) comparisons.

Therefore, an exact median of rank
\[
k=\left\lceil\frac n2\right\rceil
\]
can be found using at most \(32n\) weak consistency queries, and partitioning the other elements relative to it costs at most \(n-1\) additional queries.

\textbf{Algorithm.}

\begin{itemize}
\item If \(n\le64\), insertion-sort the elements and return the element of rank \(k\).
\item Otherwise, let \(m=\lfloor n/5\rfloor\). Form \(m\) complete groups of five; leave at most four elements outside these groups.
\item Insertion-sort each complete group and collect its median.
\item Recursively select the median \(q\) of those \(m\) medians, using rank \(a=\lceil m/2\rceil\).
\item Compare every other original element with \(q\).
\item Return \(q\), or recurse into the appropriate side with the appropriately adjusted rank.
\end{itemize}

All grouping and tie-independent choices use the original arrival-index order, making the procedure deterministic.

\textbf{Correctness.} The partition determines the exact rank of \(q\). The desired element is therefore either \(q\) itself or the desired adjusted rank in exactly one of the two sides. Induction proves correctness.

\textbf{Comparison count.} Let \(C(n)\) be the worst-case number of comparisons, with \(C(0)=0\). For \(n\le64\), insertion sort uses at most
\[
\frac{n(n-1)}2\le\frac{63}{2}n\le32n.
\]

Now suppose \(n\ge65\). Sorting the complete groups costs at most \(10m\) comparisons, and partitioning around \(q\) costs at most \(n-1\).

There are at least
\[
3(a-1)+2=3a-1
\]
elements strictly below \(q\). There are at least
\[
3(m-a)+2\ge3a-1
\]
elements strictly above \(q\); the last inequality follows directly for both parities of \(m\).

Thus the side used by the second recursive call has size \(s\) satisfying
\[
s\le n-3a.
\]
Consequently,
\[
m+s
\le n+m-3\left\lceil\frac m2\right\rceil
\le n-\frac m2
\le \frac{9n}{10}+\frac25,
\]
where the last step uses \(m\ge(n-4)/5\).

Induction now gives
\[
\begin{aligned}
C(n)
&\le C(m)+C(s)+10m+n-1\\
&\le32(m+s)+3n-1\\
&\le32\left(\frac{9n}{10}+\frac25\right)+3n-1\\
&=\frac{159n+59}{5}\\
&\le32n,
\end{aligned}
\]
because \(n\ge65>59\). This proves the stated explicit constant. \qed

For \(n\ge2\), the selected rank satisfies
\[
\frac{3n}{10}\le\left\lceil\frac n2\right\rceil\le\frac{7n}{10}.
\]
For \(n=2\) this is immediate; for \(n\ge3\), use
\(\lceil n/2\rceil\le(n+1)/2\le7n/10\).
The literal middle-rank interval contains no integer when \(n=1\); a singleton instead requires no comparisons.

\medskip

\subsection*{Lemma E.3 --- a phase contains at most one mistake}

The learner maintains a partial map \(K\) of \textbf{certified labels}. Every certification will be proved correct. Let
\[
U=\{x_1,\ldots,x_T\}\setminus\operatorname{dom}(K).
\]
Thus every point in \(U\) is unobserved and uncertified. Certification may be conservative: \(U\) need not contain exactly the logically undetermined points, and it need not be an interval in the unknown order.

At the start of a phase, freeze
\[
V=U,\qquad n=|V|\ge1.
\]
Using Lemma E.2, find the element \(p\) of rank \(k=\lceil n/2\rceil\) in \(V\), and partition
\[
L=\{u\in V:u\prec p\},\qquad
R=\{u\in V:p\prec u\}.
\]

Throughout this phase:

\begin{itemize}
\item Predict the certified label on points already in \(\operatorname{dom}(K)\).
\item On an uncertified point in \(L\cup\{p\}\), predict \(1\).
\item On an uncertified point in \(R\), predict \(0\).
\item Certify each observed label.
\item Following a mistake:
\item if the prediction was \(1\), certify every point of \(\{p\}\cup R\) as \(0\);
\item if the prediction was \(0\), certify every point of \(L\cup\{p\}\) as \(1\).
\end{itemize}

End the phase, before another prediction, whenever
\[
|U|\le \rho(n),
\qquad
\rho(n):=\left\lfloor\frac{n-1}{2}\right\rfloor.
\]

Then every certification is correct, and each phase contains at most one mistake.

\textbf{Proof.} Certified points cannot cause mistakes, assuming the certification invariant.

Consider a mistake on an uncertified point \(x\). The following table lists all possibilities.

\begin{center}\small\begin{tabular}{p{0.30\linewidth}p{0.30\linewidth}p{0.30\linewidth}}\hline
Mistake & Valid threshold implication & Active set after certification \\ \hline
\(x\in L\), prediction \(1\), label \(0\) & Every point in \(\{p\}\cup R\) is \(0\) & \(U'\subseteq L\setminus\{x\}\) \\
\(x=p\), prediction \(1\), label \(0\) & Every point in \(\{p\}\cup R\) is \(0\) & \(U'\subseteq L\) \\
\(x\in R\), prediction \(0\), label \(1\) & Every point in \(L\cup\{p\}\) is \(1\) & \(U'\subseteq R\setminus\{x\}\) \\
\hline\end{tabular}\end{center}

For example, in the first case,
\[
z\prec x\prec p,
\]
so \(p\) and every point above it are negative. In the third case,
\[
p\prec x\preceq z,
\]
so \(p\) and every point below it are positive. These implications also show that new certifications never conflict with previous correct certifications.

Since \(|L|=k-1\) and \(|R|=n-k\), the corresponding size bounds are
\[
k-2,\qquad k-1,\qquad n-k-1.
\]
Cases involving an empty side simply cannot occur. In every possible case,
\[
|U'|
\le\max\{k-1,n-k-1\}
=\left\lfloor\frac{n-1}{2}\right\rfloor
=\rho(n).
\]

Therefore, the first mistake immediately ends the phase. There can be no second mistake in that phase.

Importantly, these are containments in the \textbf{original phase sets} \(L,R\). They remain valid even if many points have already been removed by correct predictions. No lower bound on the number of \emph{newly} removed points is required. \qed

\medskip

\subsection*{Proof of Theorem~\ref{thm:D}}

\subsubsection*{Convention E}

Initially, \(K\) is empty and the active set has size \(T\). Repeatedly execute the phases of Lemma E.3. If \(U\) becomes empty, all remaining predictions use certified labels.

Let the positive phase-start sizes be
\[
n_1,n_2,\ldots,n_J.
\]
Then \(n_1=T\), and the phase-ending rule gives
\[
n_{i+1}\le\left\lfloor\frac{n_i-1}{2}\right\rfloor,
\qquad\text{hence}\qquad
n_{i+1}+1\le\frac{n_i+1}{2}.
\]

Since \(n_J\ge1\),
\[
2\le n_J+1\le\frac{T+1}{2^{J-1}},
\]
which implies
\[
J\le\lfloor\log_2(T+1)\rfloor.
\]
Lemma E.3 therefore yields
\[
M\le J\le\lfloor\log_2(T+1)\rfloor.
\]

The query cost of a phase of size \(n_i\) is at most
\[
32n_i+(n_i-1)\le33n_i.
\]
Furthermore, \(n_{i+1}\le n_i/2\), so
\[
\sum_{i=1}^{J}n_i\le2T.
\]
Thus
\[
Q\le33\sum_i n_i\le66T.
\]

\subsubsection*{Convention N}

Here \(X\) consists exactly of the \(T\) instances. First find its minimum \(a\) by a sequential minimum scan, using exactly \(T-1\) comparisons.

Every target satisfies \(a\preceq z\), so \(c_z(a)=1\). Certify \(a\) as positive before online prediction begins.

Run the same phase algorithm on the remaining \(T-1\) active points. The preceding proof, with initial active size \(T-1\), gives
\[
M\le\lfloor\log_2T\rfloor
\]
and
\[
Q\le(T-1)+66(T-1)=67(T-1).
\]

For \(T=1\), this means zero queries and zero mistakes under N. Under E, the singleton phase uses zero queries and makes at most one mistake.

Consequently, under either convention, \(Q\le 67T\) and \(M\le\lfloor\log_2(T+1)\rfloor\le\log_2T+1\); the sharper convention-specific bounds are the ones stated above. \qed

\textbf{Convention correction.} The E algorithm itself also works unchanged under N, with the same \(66T\) query bound and at most \(\lfloor\log_2(T+1)\rfloor\) mistakes. The minimum-scan variant obtains the sharper N bound \(\lfloor\log_2T\rfloor\). Thus the integer mistake correction is exactly the replacement of \(T+1\) by \(T\), a difference of at most one, matching the two Littlestone dimensions of Section 2.

\medskip

\subsection*{Weak consistency: matching linear lower bounds}

\subsubsection*{The quoted lower bound}

The statement of Theorem 4.3 of AHR25 is:

\begin{quote}Consider any family \(\mathcal F\) of concept classes of the form \(C\subset\{0,1\}^X\), where the family \(\mathcal F\) has the property that for every labeling function \(f:\{x_1,x_2,\ldots,x_T\}\to\{0,1\}\), there exists some concept class \(C\in\mathcal F\) and some concept \(c\in C\) such that \(c(x_t)=f(x_t)\) for all \(t\in[T]\) (i.e., all \(2^T\) possible binary labelings of the sequence \(x\) are captured by the family \(\mathcal F\)). For \(T\ge 100\), any (possibly randomized) algorithm that makes at most \(T/20\) queries to the weak consistency oracle will incur an expected mistake bound of at least \(T/20\). In particular, this theorem holds for general classes like Littlestone classes, and also for special families of classes, including thresholds, \(k\)-intervals, and \(d\)-Hamming balls (see the next section).\end{quote}

\subsubsection*{Applicability under E}

Use the finite domain
\[
X=\{s,x_1,\ldots,x_T\},
\]
with \(s\) preceding all instances. For an arbitrary labeling \(f\), order the points as
\[
s
\prec \{\text{positive instances, in arrival-index order}\}
\prec \{\text{negative instances, in arrival-index order}\}.
\]
If there is a positive instance, choose the last positive instance as \(z\); otherwise choose \(z=s\).

This realizes every one of the \(2^T\) labelings. Therefore, for \(T\ge100\), a hard cap of \(T/20\) weak consistency queries entails a worst-case expected mistake bound of at least \(T/20\). Deterministic algorithms are included as a special case.

\subsubsection*{Applicability under N: the necessary one-point correction}

The full \(T\)-point N family does \textbf{not} realize the all-zero labeling, so the hypothesis of Theorem 4.3 of AHR25 should not be applied to those \(T\) points without adjustment.

Instead, designate the first actual instance \(s\) as the minimum and restrict to orders with that property. Its label is always \(1\). On the remaining \(T-1\) instances, every binary labeling is realizable by arranging positives before negatives and choosing \(z=s\) when all remaining labels are zero.

Given a learner for the full N problem, simulate it on
\[
(s,x_2,\ldots,x_T),
\]
supply the known first label \(1\), and count its mistakes on the remaining sequence. Queries are unchanged, and the remaining mistakes are no greater than its total mistakes.

Theorem 4.3 of AHR25 therefore applies with effective horizon
\[
h=T-1.
\]
For \(T\ge101\), a hard cap of \((T-1)/20\) queries entails at least \((T-1)/20\) expected mistakes in the worst case.

The designated minimum is an actual instance of the full N problem, not an additional unavailable query point.

\subsubsection*{The oracle is fixed and memoryless}

There is no adversarial choice of a Boolean answer once the order is fixed. For completeness, on either finite lower-bound domain, let \(r(x)\in\{1,\ldots,d\}\) be the rank of \(x\). Define
\[
a(S)=\max\bigl(\{1\}\cup\{r(x):(x,1)\in S\}\bigr),
\]
\[
b(S)=\min\bigl(\{d\}\cup\{r(x)-1:(x,0)\in S\}\bigr).
\]
The weak oracle is the deterministic function
\[
\boxed{\operatorname{WC}(S,\preceq)=\mathbf 1[a(S)\le b(S)].}
\]
This handles arbitrary query sizes, empty queries, one-label queries, contradictory labels, and every available domain point.

The lower-bound invocation is Theorem 4.3 of AHR25 in its oblivious-adversary model. No adaptive oracle construction is being substituted for an oblivious instance.

\medskip

\subsection*{Lemma E.4 --- expected-query budgets also require linear queries}
\label{lem:D4}

Let
\[
h=T-\epsilon\ge100.
\]
Suppose a possibly randomized learner has worst-instance expected mistake bound \(m\) and worst-instance expected query bound \(q\). Then
\[
\boxed{m+20q\ge\frac h{20}.}
\]
In particular,
\[
q\ge\frac h{400}-\frac m{20}.
\]

\textbf{Proof.} Work on the effective \(h\)-round problem above, including the N simulation when necessary.

Set
\[
b=\left\lfloor\frac h{20}\right\rfloor.
\]
Construct a truncated learner that follows the original learner, using the same private random bits, until it would make query \(b+1\). At that point it makes no further queries and uses an arbitrary fixed prediction for all remaining rounds.

The truncated learner has a hard query cap \(b\). On any fixed instance, let \(A\) be the event that truncation occurs. Coupling the two runs gives
\[
M_{\mathrm{trunc}}\le M_{\mathrm{original}}+h\mathbf 1_A.
\]
Also,
\[
\Pr(A)\le\frac{\mathbb E[Q_{\mathrm{original}}]}{b+1},
\]
because \(A\) implies \(Q_{\mathrm{original}}\ge b+1\).

Theorem 4.3 of AHR25 supplies a fixed instance on which
\[
\mathbb E[M_{\mathrm{trunc}}]\ge\frac h{20}.
\]
On that same instance,
\[
\frac h{20}
\le m+\frac h{b+1}q
\le m+20q.
\]
This proves the claim. \qed

Thus a logarithmic expected mistake guarantee requires
\[
q\ge\frac{T-\epsilon}{400}-O(\log T)=\Omega(T).
\]
For example, the hard-cap statement alone implies that
\[
q\le\frac h{800}
\quad\Longrightarrow\quad
m\ge\frac h{40}
\]
under an \textbf{expected-query} budget.

If the unqualified query budget of Theorem 4.3 of AHR25 is read directly as the expected quantity \(Q\) defined in Section 2, its quoted \(T/20\) implication applies directly to that budget as well. The argument above avoids needing that interpretation: the required expected-query \(\Omega(T)\) lower bound follows even from the hard-cap reading alone.

\subsubsection*{Corollary E.5 --- tight weak-consistency query order}

For either E or N, the query complexity needed to guarantee \(O(\log T)\) mistakes is
\[
\boxed{\Theta(T)}
\]
for both deterministic and randomized learners.

For randomized learners this remains true when query and mistake budgets are worst-instance expectations. The upper bound may be taken from Theorem 4.5(2) of AHR25, or, more strongly regarding query tails, from the deterministic algorithm of Theorem~\ref{thm:D}.

\medskip

\subsection*{Interface comparison and the precise separation statement}

The table concerns \(O(\log T)\) mistakes. Randomized query bounds use worst-instance expectation, as in the cost accounting.

\begin{center}\small\begin{tabular}{p{0.30\linewidth}p{0.30\linewidth}p{0.30\linewidth}}\hline
Oracle interface & Deterministic query complexity & Randomized query complexity \\ \hline
Consistency-type ERM: returns a full concept or \(\bot\) & \textbf{\(\Theta(T)\)} for worst-case legal oracles and for the extremal rules of Theorem~\ref{thm:Aprime} (not uniform over all pre-declared rules; see Theorem~\ref{thm:rule}) --- Theorems~\ref{thm:Aprime} and~\ref{thm:A} give \(Q\ge T-\epsilon-O(\log T)\); the sorting algorithm of AHR25 gives \(O(T)\) & \textbf{\(\Theta(\log T)\)} --- the \(O(\log T)\) upper bound is AHR25, Thm.~4.5(4); the matching lower bound is Theorem~\ref{thm:C} \\
Weak consistency: returns only realizable / not realizable & \textbf{\(\Theta(T)\)} --- Theorem~\ref{thm:D} together with Theorem 4.3 of AHR25 & \textbf{\(\Theta(T)\)} --- Theorem 4.5(2) of AHR25 (or Theorem~\ref{thm:D}) together with Theorem 4.3 of AHR25 and Lemma E.4 \\
\hline\end{tabular}\end{center}

\subsubsection*{Interpretive proposition --- scope of the interface separation}

In the model of Section~2 and for logarithmic mistake guarantees on unknown-order thresholds:

\begin{enumerate}
\item Deterministic learners require linear-order query complexity under either interface: at the weak consistency interface unconditionally, and at the ERM interface against worst-case legal oracles and against the pre-declared extremal rules of Theorem~\ref{thm:Aprime}. The ERM statement is not uniform over all legal pre-declared selection rules: the feasible-median rule admits deterministic \(O(\log T)\) queries and mistakes (Theorem~\ref{thm:rule}).
\item Randomized learners can use \(O(\log T)\) expected queries when a successful query returns an evaluable full concept, by AHR25, Thm.~4.5(4).
\item Randomized learners still require \(\Omega(T)\) expected queries when only the realizability bit is returned.
\end{enumerate}

Thus the established linear-versus-logarithmic improvement from randomization is available with the \textbf{concept-returning ERM interface}, but disappears at the level of query order with the \textbf{weak consistency interface}.

This conclusion does not require Theorem~\ref{thm:C}. Theorem~\ref{thm:C} is needed only to upgrade the randomized ERM upper bound to a matching \(\Theta(\log T)\) characterization. The separation is exponential when expressed in the parameter \(d=\Theta(\log T)\).

The statement is about these two interfaces, this family, and this mistake regime. It does not assert that randomization cannot improve constants or other weak-oracle tradeoffs.

Theorem~4.4 of AHR25 is not invoked here; its statement, and why its hypothesis fails for the threshold family, are discussed once in Appendix~\ref{sec:proofA}. The weak-oracle lower bound above comes from Theorem 4.3 of AHR25, and the deterministic ERM lower bound from Theorems~\ref{thm:Aprime} and~\ref{thm:A}.

\medskip

\end{document}